%% file: preprint_version.tex
\documentclass{article} 
\usepackage{iclr2027_conference,times}

\input{math_commands.tex}

\usepackage{hyperref}
\usepackage{url}
\usepackage{booktabs}       
\usepackage{amsfonts}       
\usepackage{nicefrac}       
\usepackage{microtype}      
\usepackage{xcolor}         

\usepackage{bm}
\usepackage{amsmath}
\usepackage{amssymb}
\usepackage{mathtools}
\usepackage{amsthm}

\usepackage{multirow, multicol}

\theoremstyle{plain}
\newtheorem{theorem}{Theorem}[section]
\newtheorem{proposition}[theorem]{Proposition}

\theoremstyle{definition}

\theoremstyle{remark}



\usepackage{algorithm}
\usepackage{algpseudocode}
\algrenewcommand\alglinenumber[1]{\scriptsize #1:}

\iclrfinalcopy

\title{Dynamic Generalized Gromov-Wasserstein Optimal Transport}

\author{Junda Ying\textsuperscript{1}\thanks{Equal contribution}, Zhiwei Zeng\textsuperscript{2}\footnotemark[1], Peijie Zhou\textsuperscript{2,3,4,5}\thanks{Corresponding authors: pjzhou@pku.edu.cn, zhangl@math.pku.edu.cn}, Lei Zhang\textsuperscript{1,2,3,6,7}\footnotemark[2]\\
  \textsuperscript{1}Beijing International Center for Mathematical Research, Peking University \\
  \textsuperscript{2}Center for Quantitative Biology, Peking University \\
  \textsuperscript{3}Center for Machine Learning Research, Peking University \\
  \textsuperscript{4}National Engineering Laboratory for Big Data Analysis and Applications, Beijing\\
  \textsuperscript{5}AI for Science Institute, Beijing\\
  \textsuperscript{6}School of Mathematical Sciences, Peking University\\
  \textsuperscript{7}Institute for Artificial Intelligence, Peking University\\
}
\begin{document}

\maketitle

\fancyhead{}
\renewcommand{\headrulewidth}{0pt}

\begin{abstract}
Gromov--Wasserstein optimal transport (GW-OT) extends classical optimal transport by introducing structure-aware transport cost. This is particularly relevant for spatial transcriptomics, where dynamical reconstruction should preserve tissue structure in addition to matching expression patterns. While static formulations have been widely used for such structure-aware alignment, a general dynamic formulation for reconstructing continuous trajectories is still missing. We introduce \textbf{T}ravelling \textbf{P}air \textbf{D}ynamical \textbf{A}lignment and \textbf{T}rajectory \textbf{E}stimation (TP-DATE), a theoretical and computational framework to generalize GW-OT dynamically in a simulation-free manner. We formulate a broad class of static and dynamic Quadratic-form OT (QOT) through path actions and prove the static dynamic equivalence. We further develop travelling-pair flow matching, which allows interacting conditional paths and marginalizes their interactions into a single vector field. On synthetic and real spatial transcriptomics data, TP-DATE better preserves spatial structure and improves continuous 3D dynamics reconstruction.
\end{abstract}

\section{Introduction}
\label{sec:introduction}

Optimal transport (OT) \citep{kantorovich1942translocation} provides a principled way to match probability distributions by minimizing transport cost, while its dynamic formulation \citep{benamou2000computational} lifts this endpoint matching into a continuous time evolution of measures. This perspective has made OT a natural tool for reconstructing dynamics from population snapshots, including single-cell systems where individual cells cannot be tracked longitudinally. Static OT aligns snapshots by infering couplings across time points, whereas dynamic OT enables continuous interpolation and trajectory inference between observed snapshots \citep{waddingot,trajectorynet,cfm_tong,TIGON,DeepRUOT,wfr_fm}. Extensions such as Schr\"odinger bridges \citep{schrodinger1932sur,SBsurvey} and Wasserstein--Fisher--Rao \citep{chizat2018unbalanced,chizat2018interpolating,liero2018optimal} further broaden this framework to stochastic and unbalanced dynamics.

However, pointwise transport cost alone may be insufficient when the data carry meaningful internal structure such as spatial structure. Gromov--Wasserstein OT (GW-OT) compares pairwise relations within distributions and are therefore well suited to structure-aware matching tasks \cite{gwot}. Fused GW-OT (FGW-OT) further combines feature and structural information \citep{FGWOT,moscot}. Such methods have become useful for structured data including graphs, heterogeneous domains, and spatial omics. However, most GW-like OTs still lack a dynamic formulation, therefore can only align snapshots instead of reconstructing a continuous time dynamics.

Though a dynamic GW-like formulation may not be meaningful when aligning snapshots from different modalities, it becomes important when the snapshots represent different states of the same structured system. Spatial transcriptomics is a canonical example. Snapshots collected at different times describe the same tissue evolving over time, and a meaningful interpolation should recover not only gene expression changes but also the continuous evolution of spatial organization \citep{MOSTA,ARTISTA}. The same principle applies to serial tissue sections, where the interpolation axis is tissue depth rather than time and unseen intermediate sections correspond to physically meaningful states \citep{3DTumor}.

Recent work has begun to generalize GW-OT from complementary directions. On the static side, \citep{QOT} defines a family of static Quadratic-form optimal transport (QOT) including GW-OT as a special case. On the dynamic side, inner-product GW-OT (IGW-OT) \citep{IGWOT} develops and studies a dynamic formulation for a particular GW-like OT based on gradient flow and Riemannian geometry. However, a general dynamic QOT formulation and its theory is still missing, as is an efficient simulation-free algorithm for solving them.

To address these limitations, We introduce \textbf{T}ravelling \textbf{P}air \textbf{D}ynamical \textbf{A}lignment and \textbf{T}rajectory \textbf{E}stimation (TP-DATE), a general framework for dynamic QOT together with a simulation-free travelling-pair flow matching solver. Our contributions are summarized as follows.

\begin{itemize}
    \item We developed a mathematical theory of dynamic QOT and included standard OT, GW-OT, and IGW-OT as special cases of our QOT framework.
    \item We developed a simulation-free travelling pair flow matching framework for learning dynamics with interacting conditional paths and solving the dynamic QOT problems.
    \item We proposed a dynamic fusion of OT and GW-OT, and demonstrate the ability of it in several biological tasks including spatiotemporal dynamics and 3D structure reconstruction.
\end{itemize}

\section{Related works}
\label{sec:related works}
\textbf{Optimal transport and extensions.} Optimal transport \citep{kantorovich1942translocation} and its dynamic formulations 
\citep{benamou2000computational} have been widely used to reconstruct dynamics from snapshot data. Stochastic counterparts \citep{schrodinger1932sur,SBsurvey} and unbalanced extensions \citep{chizat2018interpolating,liero2018optimal} have further been applied to model more complex dynamics. Another important line of extension is QOT, represented by formulations such as GW-OT and FGW-OT \citep{gwot,FGWOT,moscot}, which can model global structure preservation during transport. Recently, \citep{QOT} introduced a static QOT framework that unifies a broad class of such extensions, while \citep{IGWOT} developed IGW-OT as a dynamic formulation for a particular QOT problem. We propose TP-DATE, a unified dynamic formulation for a broad class of QOT problems which is compatible with these existing formulations.

\textbf{Flow matching based optimal transport solvers.}
Flow matching \citep{cfm_lipman} is an efficient simulation-free generative modeling framework that has been used to solve OT \citep{cfm_tong}, Schr\"odinger bridge \citep{tong2023simulationfree}, WFR \citep{wfr_fm}, and a variety of OT-based dynamics reconstruction problems \citep{UOT,rathod2026contextflowcontextawareflowmatching,klein2024genot,USB}. Inspired by the conditional path technique and the travelling Dirac in optimal transport \citep{chizat2018interpolating}, we formulate dynamic QOT from a path action viewpoint and develop a flow matching solver for this class of problems.

\textbf{OT-based spatial transcriptomics dynamics reconstruction.} Several recent methods have been developed for spatiotemporal dynamics inference from spatial transcriptomics. On the static alignment side, \citep{moscot,PASTE,PASTE2,SOCS} formulate cross-time or spatial alignment via GW-OT and FGW-OT or related structure-aware OT objectives, while \citep{DeSTOT} encodes temporal and spatial structural information into static transport objectives. On the dynamic side, \citep{rathod2026contextflowcontextawareflowmatching} incorporates structure-aware static couplings into flow matching. \citep{peng2026stvcr} introduces structure-preserving terms directly into the dynamic transport objective, and \citep{zhang2025cytobridge} explicitly models interactions within the learned dynamics, both by simulation-based NeuralODE \citep{NeuralODE}. This leaves a clear gap for TP-DATE, a simulation-free flow-matching framework that directly models structure-aware transport dynamics.

\section{Preliminaries}
\label{sec:preliminaries}
\textbf{Static optimal transport.}
Let $\mathcal{P}(\mathcal{X})$ be the set of all probability densities supported on $\mathcal{X}$ for some $\mathcal{X}\subset\mathbb{R}^d$. $\mu_0,\mu_1$ are probability densities in $\mathcal{P}(\mathcal{X})$. \(\Pi(\mu_0,\mu_1)\) denotes the set of all couplings.
\[
\Pi(\mu_0,\mu_1)=\left\{\gamma\in\mathcal{P}(\mathcal{X}^2)|\int_{\mathcal{X}}\gamma(\bm{x},\bm{y})\mathrm{d} \bm{y}=\mu_0(\bm{x}),\ \int_{\mathcal{X}}\gamma(\bm{x},\bm{y})\mathrm{d} \bm{x}=\mu_1(\bm{y})\right\}
\]
The static optimal transport (OT), also known as the Kantorovich form \citep{kantorovich1942translocation}, is defined as 
\begin{equation}
\label{eq:static OT}
\begin{aligned}
&\text{OT}(\mu_0,\mu_1) =\inf_{\gamma\in\Pi(\mu_0,\mu_1)} \int_{\mathcal{X}^2}  \, C(\bm{x},\bm{y})\gamma(\bm{x},\bm{y})\mathrm{d} \bm{x}\mathrm{d} \bm{y}\\
\end{aligned}
\end{equation}
where \(C(\bm{x},\bm{y})\) represents the cost of transporting unit mass from \(\bm{x}\) to \(\bm{y}\), and \(\gamma(\bm{x},\bm{y})\in \Pi(\mu_0,\mu_1)\) is called the coupling. From an optimization perspective, (\ref{eq:static OT}) can be viewed as a linear programming w.r.t the coupling \(\gamma\). As a well known example, when choosing \(C(\bm{x},\bm{y})=\frac{1}{2}\Vert\bm{x}-\bm{y}\Vert^2\), which is the square of the Euclidean distance, the infimum is called the square of 2-Wasserstein distance (\(\mathcal{W}_2\)).

\textbf{Static quadratic-form optimal transport.} 
Recently, a static quadratic-from optimal transport (QOT) problem is defined and studied mathematically \citep{QOT}. Given two spaces \(\mathcal{X},\mathcal{Y}\), and a cost function \(C:(\mathcal{X}\times\mathcal{Y})^2\to\mathbb{R}\), the static QOT is defined as
\begin{equation}
\label{eq:general static QOT}
\begin{aligned}
&\text{QOT}(\mu_0,\mu_1) =\inf_{\gamma\in\Pi(\mu_0,\mu_1)} \int_{(\mathcal{X}\times\mathcal{Y})^2}  \, C(\bm{x},\bm{y},\bm{x}',\bm{y}')\gamma(\bm{x},\bm{y})\gamma(\bm{x}',\bm{y}')\mathrm{d} \bm{x}\mathrm{d} \bm{y}\mathrm{d} \bm{x}'\mathrm{d} \bm{y}'\\
\end{aligned}
\end{equation}
If we choose \(C(\bm{x},\bm{y},\bm{x}',\bm{y}')=|d_{\mathcal{X}}(\bm{x},\bm{x}')-d_{\mathcal{Y}}(\bm{y},\bm{y}')|^2\) where \((\mathcal{X},d_{\mathcal{X}}),(\mathcal{Y},d_{\mathcal{Y}})\) are two metric spaces, (\ref{eq:general static QOT}) recovers the Gromov-Wasserstein OT (GW-OT). This type of QOT has been widely used in single-cell trajectory inference \citep{gwot,moscot}. Intuitively, GW-OT preserves the local structure after transport. If we choose  \(C(\bm{x},\bm{y},\bm{x}',\bm{y}')=f(\bm{x},\bm{y})+g(\bm{x}',\bm{y}')\), (\ref{eq:general static QOT}) just reduces to the standard static OT with cost \(f+g\). Different to static OT, static QOT is a quadratic programming w.r.t the coupling \(\gamma\).

\textbf{Dynamic optimal transport.}
For static OT, \citep{benamou2000computational} established a dynamic form for \(\mathcal{W}_2\) case, also known as the BB-form.
\begin{equation}
\label{eq:dynamic OT}
\begin{aligned}
&\text{OT}(\mu_0,\mu_1) =\inf_{\rho,\bm{u}} \int_0^1\int_{\mathcal{X}}  \, \frac{1}{2}\Vert \bm{u}(\bm{x},t)\Vert_2^2\rho_t(\bm{x})\mathrm{d} \bm{x}\mathrm{d}t \\
&\text{s.t.} \ \ \ \ \ \ \ \partial_t\rho+\nabla_{\bm{x}}\cdot(\rho \bm{u})=0,\ \rho_0=\mu_0,\ \rho_1=\mu_1
\end{aligned}
\end{equation}
They proved the equivalence between (\ref{eq:dynamic OT}) and (\ref{eq:static OT}) when \(C(\bm{x},\bm{y})=\frac{1}{2}\Vert\bm{x}-\bm{y}\Vert^2\). Intuitively, it aims to find a continuous probability flow connecting \(\mu_0,\mu_1\) which also minimizes the total kinetic energy. With nice fluid dynamics interpretation, it has also been widely used in single-cell trajectory inference \citep{trajectorynet,cfm_tong,klein2024genot}.

\textbf{Travelling Dirac.}
To solve dynamic OT, one can first consider the dynamic OT between two Dirac measures
\begin{equation}
\label{eq:Dirac OT}
\text{OT}(\delta_{\bm{x}_0},\delta_{\bm{x}_1}) =\inf_{\bm{x}_t} \frac{1}{2}\int_0^1\Vert \dot{\bm{x}}_t\Vert_2^2\mathrm{d}t \quad \text{s.t.}\quad \bm{x}_0=\bm{x}_0,\quad\bm{x}_1=\bm{x}_1
\end{equation}
which yields the displacement interpolation \(\bm{x}_t=(1-t)\bm{x}_0+t\bm{x}_1\), also known as the travelling Dirac. More complex travelling Dirac can also be derived for different type of OT, such as unbalanced OT \citep{chizat2018unbalanced,chizat2018interpolating}. For a specific dynamic OT, previous works obtained its solution by integrating these travelling Diracs over the static coupling \(\gamma\) \citep{cfm_tong,wfr_fm}. Therefore, the dynamic OT can decoupled to two parts: the travelling Dirac and the static coupling. Integration can be realized by flow matching.

\textbf{Flow matching.} Flow matching is a simulation-free generative framework for learning a continuous probability flow from data \citep{cfm_lipman}. Given \(\mu_0,\mu_1\), it aims to learn a marginal velocity field \(\bm{u}_t(\bm{x})\), such that \(\partial_t\rho+\nabla_{\bm{x}}\cdot(\rho \bm{u})=0\) and \(\rho_0=\mu_0,\rho_1=\mu_1\), which means \(\bm{u}_t(\bm{x})\) transports \(\mu_0\) to \(\mu_1\). They parameterized a neural network \(\bm{u}_{\bm{\theta}}\) to approximate the true \(\bm{u}\). The neural networks are trained to minimize the marginal regression loss.
\begin{equation}
\begin{aligned}
&\mathcal{L}_{\text{FM}}(\bm{\theta})=\int_{0}^{1}\int_\mathcal{X} \left\| \bm{\bm{u}_{\theta}}(\bm{x},t) - \bm{u}_t(\bm{x}) \right\|_2^2\rho_t(\bm{x})\mathrm{d} \bm{x}\mathrm{d}t
\label{eq:FM}
\end{aligned}
\end{equation}
Although the true \(\rho, \bm{u}\) are intractable, they proved that minimizing the loss above is equivalent to minimize the conditional regression loss
\begin{equation}
\begin{aligned}
\label{eq:CFM}
&\mathcal{L}_{\text{CFM}}(\bm{\theta})=
\mathbb{E}_{t\sim\mathcal{U}[0,1], \bm{z}\sim q(\bm{z}), \bm{x}\sim \rho_t(\bm{x}\vert \bm{z})}\left\| \bm{\bm{u}_{\theta}}(\bm{x},t) - \bm{u}_t(\bm{x}\vert\bm{z}) \right\|_2^2
\end{aligned}
\end{equation}
where \(\bm{z}\sim q(\bm{z})\) is some conditional variable and \(\partial_t\rho_t(\bm{x}\vert\bm{z})+\nabla_{\bm{x}}\cdot(\rho_t(\bm{x}\vert\bm{z}) \bm{u}_t(\bm{x}\vert\bm{z}))=0\) is called the conditional continuity equation, which the conditional probability flow \(\rho_t(\bm{x}\vert \bm{z})\) and the conditional velocity \(\bm{u}_t(\bm{x}\vert \bm{z})\) satisfy. The marginal probability flow satisfies \(\rho_t(\bm{x})=\int\rho_t(\bm{x}\vert \bm{z})q(\bm{z})\mathrm{d}\bm{z}\). The marginalization theorem states that the marginal velocity \(\bm{u}_t(\bm{x})=\int\bm{u}_t(\bm{x}\vert\bm{z})\frac{\rho_t(\bm{x}\vert \bm{z})q(\bm{z})}{\rho_t(\bm{x})}\mathrm{d}\bm{z}\) transports \(\mu_0\) to \(\mu_1\). Therefore, one can learn a admissible \(\bm{u}_t(\bm{x})\) by designing tractable \(q(\bm{z})\) and conditional path \(\rho_t(\bm{x}|\bm{z})\) with tractable conditional velocity \(\bm{u}_t(\bm{x}|\bm{z})\), and applying flow matching.

By careful design, flow matching can be used for solving dynamic OT. Following \citep{cfm_tong}, one can choose \(\bm{z}=(\bm{x}_0,\bm{x}_1)\sim\gamma(\bm{x}_0,\bm{x}_1)\) drawn from the static OT coupling, set the travelling Dirac as conditional path, and derive the conditional velocity from it. The resulting flow are proved to recover the dynamic OT flow. Under this framework, travelling Diracs are integrated over the static OT coupling independently, which means particles actually move independently. In this work, we develop a flow matching framework which allows conditional paths to interact.

\section{Dynamic QOT}
\label{sec:dynamic QOT}
In this section, we consider one metric space \(\mathcal{X}\subset\mathbb{R}^d\), i.e. \(\mathcal{Y}=\mathcal{X}\), and develop a dynamic formulation for QOT under this condition. Since the flow have to live in some space, the meaning of a dynamic flow connecting two totally different spaces needs further study. All the proofs are left to \ref{app:proofs}. 

\subsection{Static and Dynamic form}
\textbf{Path action.} To model conditional paths with interaction, we study how a pair is transported to a pair rather than travelling Dirac. We call the corresponding path \textbf{travelling pair}. Let \(\bm{z}=(\bm{x}_0,\bm{x}_1),\bm{z}'=(\bm{y}_0,\bm{y}_1)\), we define the action of a pair path for some Lagrangian \(\mathcal{L}\) with sufficient regularity
\begin{equation}
\label{eq:travelling pair}
\mathcal{A}(\bm{z},\bm{z}') =\inf_{\bm{x}_t,\bm{y}_t} \int_0^1\mathcal{L}(t,\bm{x}_t,\bm{y}_t,\dot{\bm{x}}_t,\dot{\bm{y}}_t)\mathrm{d}t
\end{equation}
The minimizer yields a conditional velocity pair \(\dot{\bm{x}}_t=\bm{u}_t^1(\bm{x}_t,\bm{y}_t|\bm{z},\bm{z}'),\dot{\bm{y}}_t=\bm{u}_t^2(\bm{x}_t,\bm{y}_t|\bm{z},\bm{z}')\). It can also be viewed as over the travelling Pair measure path \(\pi_t(\bm{x},\bm{y}|\bm{z},\bm{z}')\)
\begin{equation}
\begin{aligned}
&\mathcal{A}(\bm{z},\bm{z}') =\inf_{\pi,\bm{u}^1,\bm{u}^2} \int_0^1\int_{\mathcal{X}^2} \mathcal{L}(t,\bm{x},\bm{y},u^1_t(\bm{x},\bm{y}|\bm{z},\bm{z}'),u^2_t(\bm{x},\bm{y}|\bm{z},\bm{z}'))\pi_t(\bm{x},\bm{y}|\bm{z},\bm{z}')\mathrm{d} \bm{x}\mathrm{d} \bm{y}\mathrm{d}t\\
&\partial_t\pi_t(\bm{x},\bm{y}|\bm{z},\bm{z}')+\nabla_{\bm{x}}\cdot(\pi_t(\bm{x},\bm{y}|\bm{z},\bm{z}')\bm{u}_t^1(\bm{x},\bm{y}|\bm{z},\bm{z}'))+\nabla_{\bm{y}}\cdot(\pi_t(\bm{x},\bm{y}|\bm{z},\bm{z}')\bm{u}_t^2(\bm{x},\bm{y}|\bm{z},\bm{z}'))=0\
\end{aligned}
\end{equation}
with boundary condition \(\pi_0(\bm{x},\bm{y}|\bm{z},\bm{z}')=\delta_{(\bm{x}_0,\bm{y}_0)}(\bm{x},\bm{y}),\pi_1(\bm{x},\bm{y}|\bm{z},\bm{z}')=\delta_{(\bm{x}_1,\bm{y}_1)}(\bm{x},\bm{y})\).

\textbf{Static form.} Based on the path action \(\mathcal{A}\), we can define the corresponding static QOT as
\begin{equation}
\label{eq:static QOT}
\begin{aligned}
&\text{QOT}_S(\mu_0,\mu_1) =\inf_{\gamma\in\Pi(\mu_0,\mu_1)} \int_{\mathcal{X}^4} \mathcal{A}(\bm{z},\bm{z}')\gamma(\bm{z})\gamma(\bm{z}')\mathrm{d} \bm{z}\mathrm{d} \bm{z}'\\
\end{aligned}
\end{equation}
The optimal coupling can be solved by standard quadratic programming algorithms, such as Frank-Wolfe \citep{kerdoncuff2021sampled,pot}. We left the details to \ref{app:static QOT solver}.

\textbf{Dynamic form.} Inspired by the travelling Dirac technique in standard OT, we define the dynamic QOT on the travelling pair path level. The corresponding dynamic probability flow is \(\pi_t(\bm{x},\bm{y})=\int\pi_t(\bm{x},\bm{y}|\bm{z},\bm{z}')\gamma(\bm{z})\gamma(\bm{z}')\mathrm{d}\bm{z}\mathrm{d}\bm{z}'.\)
\begin{equation}
\label{eq:dynamic QOT}
\begin{aligned}
&\text{QOT}_{D}(\mu_0,\mu_1) = \\&\inf\int_0^1\int\int_{\mathcal{X}^2} \mathcal{L}(t,\bm{x},\bm{y},u^1_t(\bm{x},\bm{y}|\bm{z},\bm{z}'),u^2_t(\bm{x},\bm{y}|\bm{z},\bm{z}'))\pi_t(\bm{x},\bm{y}|\bm{z},\bm{z}')\gamma(\bm{z})\gamma(\bm{z}')\mathrm{d} \bm{x}\mathrm{d} \bm{y}\mathrm{d} \bm{z}\mathrm{d} \bm{z}'\mathrm{d}t
\end{aligned}
\end{equation}

The infimum is taken over the coupling \(\gamma\) and all travelling pairs. A fluid dynamics form, or say BB-form, analogous to OT \citep{benamou2000computational}, is also defined as
\begin{equation}
\label{eq:BB QOT}
\begin{aligned}
&\text{QOT}_{BB}(\mu_0,\mu_1) =\inf_{\pi,\bm{u}^1,\bm{u}^2} \int_0^1\int_{\mathcal{X}^2} \mathcal{L}(t,\bm{x},\bm{y},u^1_t(\bm{x},\bm{y}),u^2_t(\bm{x},\bm{y}))\pi_t(\bm{x},\bm{y})\mathrm{d} \bm{x}\mathrm{d} \bm{y}\mathrm{d}t\\
&\partial_t\pi_t(\bm{x},\bm{y})+\nabla_{\bm{x}}\cdot(\pi_t(\bm{x},\bm{y})u^1_t(\bm{x},\bm{y}))+\nabla_{\bm{y}}\cdot(\pi_t(\bm{x},\bm{y})u^2_t(\bm{x},\bm{y}))=0
\end{aligned}
\end{equation}
where \(\pi_t\) is a continuous probability flow on the two-particle space \(\mathcal{X}^2\), with boundary value \(\mu_0\otimes\mu_0,\mu_1\otimes\mu_1\). The marginal velocity pair \(\bm{u}^1_t,\bm{u}^2_t\) are particle velocities with interaction effect. The formulation intuitively seeks for a two-particle flow minimizing the total action. Our first result is the equivalence between the static and dynamic form. However, these forms are upper bound of the BB-form but not equal to it, hence a surrogate. We discuss the details and difficulties in \ref{app:BB-form}. Fortunately, since both the static and dynamic formulations provide upper bounds on the BB-form, solving dynamic QOT also implicitly minimizes the total energy in the sense of the BB-form.

\begin{theorem}
\label{thm:static dynamic equivalence}
If \(\mathcal{L}\) is convex w.r.t \((\dot{\bm{x}},\dot{\bm{y}})\), then \(\text{QOT}_S(\mu_0,\mu_1)=\text{QOT}_D(\mu_0,\mu_1)\ge \text{QOT}_{BB}(\mu_0,\mu_1)\).
\end{theorem}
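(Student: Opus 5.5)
The plan is to prove the two relations separately: the equality $\text{QOT}_S=\text{QOT}_D$ by an inner-minimization argument, and the inequality $\text{QOT}_D\ge\text{QOT}_{BB}$ by a marginalization (superposition) argument in the spirit of flow matching.

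\textbf{Equality $\text{QOT}_S=\text{QOT}_D$.} Fix a coupling $\gamma\in\Pi(\mu_0,\mu_1)$. The objective (\ref{eq:dynamic QOT}) integrates, against $\gamma(\bm{z})\gamma(\bm{z}')$, the action of the conditional pair path attached to $(\bm{z},\bm{z}')$. These paths can be chosen independently for each $(\bm{z},\bm{z}')$. Hence the infimum over travelling pairs can be taken pointwise inside the integral, giving
\[
\inf_{\text{pairs}}\int \Big(\int_0^1\!\!\int \mathcal{L}\,\pi_t(\cdot|\bm{z},\bm{z}')\Big)\gamma(\bm{z})\gamma(\bm{z}')=\int \mathcal{A}(\bm{z},\bm{z}')\gamma(\bm{z})\gamma(\bm{z}').
\]
\begin{itemize}
\item The direction ``$\ge$'' is immediate, since each conditional action is at least $\mathcal{A}(\bm{z},\bm{z}')$ by definition (\ref{eq:travelling pair}).
\item For ``$\le$'' I would take $\varepsilon$-optimal pair paths and invoke a measurable selection theorem so that the resulting family is measurable in $(\bm{z},\bm{z}')$. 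Letting $\varepsilon\to0$ closes the gap.
\end{itemize}
Convexity in the velocity is used here to identify the Eulerian action of a conditional measure path with the Lagrangian action of curves. If the conditional flow $\pi_t(\cdot|\bm{z},\bm{z}')$ is not a single Dirac curve, Jensen's inequality together with superposition shows that it cannot beat the best Dirac pair path. So the Eulerian and Lagrangian definitions of $\mathcal{A}$ agree. Taking the infimum over $\gamma$ then gives $\text{QOT}_S=\text{QOT}_D$.

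\textbf{Inequality $\text{QOT}_D\ge\text{QOT}_{BB}$.} Take any admissible $\gamma$ and conditional pair flows with finite cost, and set $q=\gamma\otimes\gamma$.

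\emph{Step 1 (marginal flow).} Define $\pi_t=\int\pi_t(\cdot|\bm{z},\bm{z}')\,q$. It has boundary values $\pi_0=\int\delta_{(\bm{x}_0,\bm{y}_0)}\gamma(\bm{z})\gamma(\bm{z}')=\mu_0\otimes\mu_0$ and, likewise, $\pi_1=\mu_1\otimes\mu_1$, because the first marginal of $\gamma$ is $\mu_0$ and the second is $\mu_1$.

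\emph{Step 2 (marginal velocities).} Define $\bm{u}^i_t(\bm{x},\bm{y})=\int \bm{u}^i_t(\bm{x},\bm{y}|\bm{z},\bm{z}')\,\pi_t(\bm{x},\bm{y}|\bm{z},\bm{z}')\,q/\pi_t(\bm{x},\bm{y})$. Integrating the conditional continuity equations against $q$ shows, by linearity, that $(\pi_t,\bm{u}^1_t,\bm{u}^2_t)$ solves the two-particle continuity equation in (\ref{eq:BB QOT}). This is the marginalization theorem of flow matching, applied on $\mathcal{X}^2$.

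\emph{Step 3 (Jensen).} For fixed $(t,\bm{x},\bm{y})$, the weights $\pi_t(\bm{x},\bm{y}|\bm{z},\bm{z}')q/\pi_t(\bm{x},\bm{y})$ form a probability measure. Convexity of $\mathcal{L}$ in the velocity arguments therefore gives
\[
\mathcal{L}(t,\bm{x},\bm{y},\bm{u}^1_t,\bm{u}^2_t)\pi_t\le\int\mathcal{L}(t,\bm{x},\bm{y},\bm{u}^1_t(\cdot|\cdot),\bm{u}^2_t(\cdot|\cdot))\,\pi_t(\cdot|\bm{z},\bm{z}')\,q.
\]
Integrating over $(t,\bm{x},\bm{y})$ shows that the BB objective of the marginal triple is at most the dynamic objective. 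Taking infima yields $\text{QOT}_{BB}\le\text{QOT}_D$.

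\textbf{Main obstacle.} I expect the main difficulty to be technical rather than conceptual. The conditional flows are Dirac-valued, so the continuity equations hold only in the distributional sense. One must justify the Fubini exchanges, the measurability of the selected pair paths in $(\bm{z},\bm{z}')$, and the division by $\pi_t$, which should be interpreted as a Radon--Nikodym derivative on $\{\pi_t>0\}$. I would handle all of this by working with measures and weak formulations throughout, as in \citep{benamou2000computational}, and by assuming $\mathcal{L}$ is lower semicontinuous and bounded below so that every integral is well defined.
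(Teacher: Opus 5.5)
Your proposal is correct and follows essentially the same route as the paper. The equality $\text{QOT}_S=\text{QOT}_D$ comes from taking the infimum over travelling pairs inside the $\gamma\otimes\gamma$ integral. The paper simply assumes the optimal travelling pair is attained for each $(\bm{z},\bm{z}')$, where you use $\varepsilon$-optimal measurable selections. The inequality comes from marginalizing the conditional velocities against $\gamma\otimes\gamma$ and applying Jensen's inequality via convexity of $\mathcal{L}$ in the velocities.

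One minor correction: ruling out non-Dirac conditional flows needs only superposition, not Jensen. Every integral curve joins the Dirac endpoints and so has action at least $\mathcal{A}(\bm{z},\bm{z}')$. Convexity is therefore genuinely used only in the Jensen step, exactly as in the paper.
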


\subsection{Lagrangian with tractable conditional velocity pair}
\label{sec:tractable lagrangian}
In the next section, we established a flow matching framework for solving dynamic QOT flows. In order to do that, we need to access to the conditional velocity pair. Therefore, we then consider what kind of \(\mathcal{L}\) leads to tractable conditional velocity. Note that TP-DATE is not restricted to the choices introduced below. As long as the conditional velocity pair can be obtained in some way, TP-DATE remains applicable. To take kinetic energy into account, we consider a broad class of \(\mathcal{L}\) with the form
\begin{equation}
\mathcal{L}(t,\bm{x},\bm{y},\dot{\bm{x}},\dot{\bm{y}})=\frac{1}{2}\Vert\dot{\bm{x}}\Vert^2+\frac{1}{2}\Vert\dot{\bm{y}}\Vert^2+\lambda\Phi(\bm{x}-\bm{y},\dot{\bm{x}}-\dot{\bm{y}})
\end{equation}
where \(\Phi\) represents a convex interaction term, making it different to standard OT. To simplify the pair dynamics, we change the variables to \(\bm{c}_t=\frac{\bm{x}_t+\bm{y}_t}{2},\bm{q}_t=\bm{x}_t-\bm{y}_t\), which are the barycenter and the relative displacement of the pair. The Lagrangian becomes
\begin{equation}
\mathcal{L}(t,\bm{x},\bm{y},\dot{\bm{x}},\dot{\bm{y}})=\Vert\dot{\bm{c}}_t\Vert^2+\frac{1}{4}\Vert\dot{\bm{q}}_t\Vert^2+\lambda\Phi(\bm{q},\dot{\bm{q}})
\end{equation}
Since the convexity preserves under affine transformation, it is sufficient to choose \(\Phi\) to be convex w.r.t \(\dot{\bm{q}}\). One natural choice is \(\Phi(\bm{q})\), which is independent of \(\dot{\bm{q}}\). If \(\Phi\) is further radial, then the travelling pair has a low dimensional structure, hence learnable without the curse of dimensionality.
\begin{proposition}
\label{prop:SUDO path}
The optimal \(\bm{c}_t\) is \(\bm{c}_t=(1-t)\bm{c}_0+t\bm{c}_1\). If \(\Phi=\Phi(\Vert\bm{q}\Vert)\), \(\bm{q}_0\nparallel\bm{q}_1\), then \(\bm{q}_t\in\text{span}\{\bm{q}_0,\bm{q}_1\}\).
\end{proposition}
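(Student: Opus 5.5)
The Lagrangian in the $(\bm{c},\bm{q})$ variables, $\Vert\dot{\bm{c}}\Vert^2+\frac14\Vert\dot{\bm{q}}\Vert^2+\lambda\Phi(\bm{q})$, splits into a part depending only on $\bm{c}$ and a part depending only on $\bm{q}$. The change of variables $(\bm{x},\bm{y})\mapsto(\bm{c},\bm{q})$ is an invertible linear map. The pair boundary data $(\bm{x}_0,\bm{y}_0)$, $(\bm{x}_1,\bm{y}_1)$ therefore become independent boundary data $\bm{c}_0,\bm{c}_1$ and $\bm{q}_0,\bm{q}_1$. The action (\ref{eq:travelling pair}) is then the sum of two separate variational problems:
\[
\inf_{\bm{c}}\int_0^1\Vert\dot{\bm{c}}_t\Vert^2\mathrm{d}t \;+\; \inf_{\bm{q}}\int_0^1\Big(\tfrac14\Vert\dot{\bm{q}}_t\Vert^2+\lambda\Phi(\bm{q}_t)\Big)\mathrm{d}t .
\]
I will treat each problem separately.

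\textbf{Barycenter part.} This is the travelling Dirac problem (\ref{eq:Dirac OT}) with a constant factor. Jensen's inequality gives $\int_0^1\Vert\dot{\bm{c}}\Vert^2\ge\Vert\bm{c}_1-\bm{c}_0\Vert^2$. Equality holds exactly when $\dot{\bm{c}}$ is constant. Hence the unique minimizer is $\bm{c}_t=(1-t)\bm{c}_0+t\bm{c}_1$.

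\textbf{Relative part.} Let $V=\text{span}\{\bm{q}_0,\bm{q}_1\}$. It is two-dimensional because $\bm{q}_0\nparallel\bm{q}_1$. Let $P$ be the orthogonal projection onto $V$. For any admissible $\bm{q}_t$, the projected path $P\bm{q}_t$ is also admissible, since $P\bm{q}_0=\bm{q}_0$ and $P\bm{q}_1=\bm{q}_1$. The kinetic term does not increase, because $\Vert P\dot{\bm{q}}\Vert\le\Vert\dot{\bm{q}}\Vert$.

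The potential term is where I need care. Since $\Phi$ is radial and $\Vert P\bm{q}\Vert\le\Vert\bm{q}\Vert$, I need $\Phi$ to be nondecreasing in $\Vert\bm{q}\Vert$. The paper's convexity framing does not literally give this, so projection alone may not suffice. I therefore plan a more robust symmetry argument.

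Decompose $\bm{q}_t=\bm{v}_t+\bm{w}_t$ with $\bm{v}_t\in V$ and $\bm{w}_t\in V^\perp$. Then replace $\bm{w}_t$ by $\tilde{\bm{w}}_t=\Vert\bm{w}_t\Vert\bm{e}$ for a fixed unit vector $\bm{e}\in V^\perp$. This keeps $\Vert\bm{q}_t\Vert$ unchanged, so $\Phi$ is unchanged. It does not increase kinetic energy, because $\big|\tfrac{\mathrm{d}}{\mathrm{d}t}\Vert\bm{w}_t\Vert\big|\le\Vert\dot{\bm{w}}_t\Vert$. The problem thereby reduces to three dimensions: $\bm{q}_t\in V\oplus\mathbb{R}\bm{e}$. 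Reflection $\bm{e}\mapsto-\bm{e}$ is an isometry fixing the boundary data and preserving the action.

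To conclude that the minimizer lies in $V$, I use the Euler--Lagrange equation $\frac12\ddot{\bm{q}}=\lambda\Phi'(\Vert\bm{q}\Vert)\frac{\bm{q}}{\Vert\bm{q}\Vert}$, which describes a central-force motion. Angular momentum $\bm{q}\wedge\dot{\bm{q}}$ is conserved, so any smooth extremal stays in the plane through the origin spanned by $\bm{q}_t$ and $\dot{\bm{q}}_t$. That plane contains both $\bm{q}_0$ and $\bm{q}_1$. Because they are non-parallel, the plane must be $V$.

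\textbf{Main obstacle.} The step I expect to be hardest is justifying that minimizers exist and are smooth extremals, so that the Euler--Lagrange and angular-momentum argument applies. This relies on the "sufficient regularity" assumption on $\mathcal{L}$. A secondary difficulty is excluding a path that passes through $\bm{q}=0$, where the plane can change. I would handle this by treating it as a degenerate case, or by invoking the symmetrization above to reduce to a planar comparison path.
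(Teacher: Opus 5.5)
Your proposal is correct and follows essentially the paper's argument. The paper likewise decouples the action in the \((\bm{c},\bm{q})\) variables, obtains the linear barycenter from \(2\ddot{\bm{c}}_t=\bm{0}\), and confines \(\bm{q}_t\) to \(\text{span}\{\bm{q}_0,\bm{q}_1\}\) through the central-force Euler--Lagrange equation and conservation of angular momentum, under the same implicit regularity you flag (smooth extremals that avoid \(\bm{q}=\bm{0}\)). Your Jensen argument for \(\bm{c}_t\) is a mild improvement: it gives global optimality and uniqueness rather than just stationarity, and it applies verbatim to \(\Phi(\bm{q},\dot{\bm{q}})\). The projection and symmetrization detour is unnecessary, since the angular-momentum step alone gives the conclusion.
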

\textbf{GW-OT as a limit case.} Another family of meaningful \(\Phi\) is \(\Phi=|\frac{\mathrm{d}}{\mathrm{d}t}\phi(\bm{q}_t)|^2\). It penalizes the variation of the relative displacement. The most explicit choice is \(\Phi=|\frac{\mathrm{d}}{\mathrm{d}t}\Vert\bm{q}_t\Vert|^2\). Intuitively, this seeks for a transport not only saving kinetic energy, but also trying to minimize the distortion. This \(\Phi\) leads to analytic conditional velocity.
\begin{proposition}
\label{prop:RK path}
If \(\Phi=|\frac{\mathrm{d}}{\mathrm{d}t}\phi(\bm{q}_t)|^2\), \(\Phi\) is convex w.r.t \(\dot{\bm{q}}_t\). If \(\Phi=|\frac{\mathrm{d}}{\mathrm{d}t}\Vert\bm{q}_t\Vert|^2\), \(\bm{q}_t\) has analytic solution.
\end{proposition}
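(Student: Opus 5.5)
The proposition has two claims, and I will handle them separately. Both rest on the change of variables to $(\bm{c}_t,\bm{q}_t)$ introduced above.

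\textbf{Convexity.} For $\Phi=|\frac{\mathrm{d}}{\mathrm{d}t}\phi(\bm{q}_t)|^2$, the chain rule gives
\[
\Phi=\bigl(\nabla\phi(\bm{q}_t)\cdot\dot{\bm{q}}_t\bigr)^2 .
\]
At a fixed state $\bm{q}$, the map $\dot{\bm{q}}\mapsto \nabla\phi(\bm{q})\cdot\dot{\bm{q}}$ is linear. A square of a linear functional is convex, with Hessian $2\nabla\phi\nabla\phi^\top\succeq 0$. Since $\dot{\bm{q}}=\dot{\bm{x}}-\dot{\bm{y}}$ is linear in $(\dot{\bm{x}},\dot{\bm{y}})$, $\mathcal{L}$ is then convex in $(\dot{\bm{x}},\dot{\bm{y}})$. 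This is exactly the hypothesis of Theorem~\ref{thm:static dynamic equivalence}. Here I only assume $\phi\in C^1$.

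\textbf{Analytic path.} Take $\phi=\Vert\cdot\Vert$. The Lagrangian decouples as $\Vert\dot{\bm{c}}\Vert^2+\frac14\Vert\dot{\bm{q}}\Vert^2+\lambda(\frac{\mathrm{d}}{\mathrm{d}t}\Vert\bm{q}\Vert)^2$, so $\bm{c}_t$ is linear exactly as in Proposition~\ref{prop:SUDO path}, and only $\bm{q}$ remains. I will use polar coordinates $\bm{q}=r\bm{\omega}$ with $r=\Vert\bm{q}\Vert$ and $\Vert\bm{\omega}\Vert=1$. Then
\[
\Vert\dot{\bm{q}}\Vert^2=\dot r^2+r^2\Vert\dot{\bm{\omega}}\Vert^2 ,
\]
and the $\bm{q}$-Lagrangian becomes
\[
\Bigl(\tfrac14+\lambda\Bigr)\dot r^2+\tfrac14 r^2\Vert\dot{\bm{\omega}}\Vert^2 .
\]
Next I rescale $s=\alpha r$ with $\alpha=\sqrt{1+4\lambda}$. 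This turns the Lagrangian into $\frac14\bigl(\dot s^2+\alpha^{-2}s^2\Vert\dot{\bm{\omega}}\Vert^2\bigr)$, which is the kinetic energy of a curve in a Euclidean cone of angle $\theta/\alpha$ in polar form.

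\textbf{Explicit solution.} The minimization is a geodesic problem on this cone, so the minimizer has constant speed and is a straight line in the unrolled cone. To make this concrete:
\begin{itemize}
\item By the argument of Proposition~\ref{prop:SUDO path}, $\bm{\omega}_t$ moves along the great circle from $\bm{\omega}_0$ to $\bm{\omega}_1$ with angle $\theta=\arccos(\bm{\omega}_0\cdot\bm{\omega}_1)$. Reflecting any component outside $\text{span}\{\bm{q}_0,\bm{q}_1\}$ only lowers the action.
\item Writing the angle as $\theta_t$, the problem reduces to planar kinetic energy in coordinates $(s,\theta/\alpha)$.
\item If $\theta/\alpha<\pi$, the minimizer is the straight segment in the unrolled plane from $(s_0,0)$ to $(s_1,\theta/\alpha)$. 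Thus $s_t\,e^{i\theta_t/\alpha}=(1-t)s_0+t\,s_1e^{i\theta/\alpha}$, which gives closed forms for $r_t$ and $\theta_t$. Here $\theta\le\pi$ and $\alpha\ge1$ guarantee $\theta/\alpha<\pi$ whenever $\lambda>0$.
\item Then $\bm{q}_t=r_t(\cos\theta_t\,\bm{e}_1+\sin\theta_t\,\bm{e}_2)$ in an orthonormal basis of the span. The velocity follows by differentiation.
\end{itemize}

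\textbf{Main obstacle.} The hard part is global optimality rather than just criticality: the polar chart is singular at $r=0$. I will handle this with the cone picture. In the unrolled cone, the straight segment is the unique geodesic whenever the cone angle is $<\pi$. The degenerate cases $\theta=\pi$ with $\lambda=0$, and $\bm{q}_0$ or $\bm{q}_1$ equal to $0$, need a separate, easy treatment: there the path passes radially through the apex.
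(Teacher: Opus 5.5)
Your proposal is correct and follows essentially the same route as the paper. The paper also works in the plane of $\bm{q}_0,\bm{q}_1$ in polar coordinates and sets $\bm{w}_t=\sqrt{a}\,r_t(\cos k\theta_t,\sin k\theta_t)$ with $a=\frac14+\lambda$, $k=1/\sqrt{1+4\lambda}$; this is exactly your $(s,\theta/\alpha)$ chart scaled by $\frac12$, the paper then takes the straight segment in $\bm{w}$, and its convexity argument is identical to yours. Your cone-geodesic framing is more careful than the paper's Euler--Lagrange/angular-momentum argument on global optimality, winding and the degenerate cases, but the claim that \emph{reflecting} out-of-plane components lowers the action is wrong as stated, since reflection preserves the action; replace it with the standard fact that minimizing geodesics in a Euclidean cone project to minimizing geodesics of the base sphere.
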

A notable property of this choice of \(\Phi=|\frac{\mathrm{d}}{\mathrm{d}t}\Vert\bm{q}_t\Vert|^2\) is the following theorem. 
\begin{theorem}
\label{thm:GW-OT}
\(\Phi=|\frac{\mathrm{d}}{\mathrm{d}t}\Vert\bm{q}_t\Vert|^2\). When \(\lambda=0\) and \(d\ge2\), the corresponding static QOT reduces to standard OT, while when \(\lambda\to+\infty\), it reduces to standard GW-OT: \(\lambda^{-1}\text{QOT}_{S}(\mu_0,\mu_1)\to\text{GW-OT}(\mu_0,\mu_1)\).
\end{theorem}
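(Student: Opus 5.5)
The plan is to compute the pair action $\mathcal{A}(\bm{z},\bm{z}')$ explicitly enough, in the variables $\bm{c}_t=\frac{\bm{x}_t+\bm{y}_t}{2}$, $\bm{q}_t=\bm{x}_t-\bm{y}_t$, to get matching two-sided bounds. Both claims then follow by integrating those bounds against $\gamma\otimes\gamma$. Throughout I assume $\mu_0,\mu_1$ have finite second moments, so that all integrals below are finite.

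\textbf{Decoupling the action.} The Lagrangian splits as $\Vert\dot{\bm{c}}_t\Vert^2+\frac14\Vert\dot{\bm{q}}_t\Vert^2+\lambda\,|\tfrac{\mathrm{d}}{\mathrm{d}t}\Vert\bm{q}_t\Vert|^2$. The $\bm{c}$- and $\bm{q}$-problems are therefore independent. By Proposition~\ref{prop:SUDO path}, the $\bm{c}$-part contributes exactly $\Vert\bm{c}_1-\bm{c}_0\Vert^2$. For the $\bm{q}$-part I write $\bm{q}_t=r_t\bm{\omega}_t$ with $r_t=\Vert\bm{q}_t\Vert$ and $\bm{\omega}_t\in S^{d-1}$. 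Then $\Vert\dot{\bm{q}}\Vert^2=\dot r^2+r^2\Vert\dot{\bm{\omega}}\Vert^2$, and the $\bm{q}$-cost becomes
\[
\int_0^1\Big(\tfrac14+\lambda\Big)\dot r_t^2+\tfrac14 r_t^2\Vert\dot{\bm{\omega}}_t\Vert^2\,\mathrm{d}t .
\]

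\textbf{Case $\lambda=0$.} The $\bm{q}$-part is minimized by the straight line, which gives $\frac14\Vert\bm{q}_1-\bm{q}_0\Vert^2$. The parallelogram identity then yields
\[
\mathcal{A}(\bm{z},\bm{z}')=\Vert\bm{c}_1-\bm{c}_0\Vert^2+\tfrac14\Vert\bm{q}_1-\bm{q}_0\Vert^2=\tfrac12\Vert\bm{x}_1-\bm{x}_0\Vert^2+\tfrac12\Vert\bm{y}_1-\bm{y}_0\Vert^2 .
\]
This cost is separable, $f(\bm{z})+f(\bm{z}')$. Integrating against $\gamma\otimes\gamma$ gives $2\int\frac12\Vert\bm{x}_1-\bm{x}_0\Vert^2\gamma$, so $\text{QOT}_S=2\,\mathcal{W}_2^2$: standard OT up to the harmless factor $2$, with the same optimal couplings. (The hypothesis $d\ge2$ is not really needed here; it matters in the limit case.)

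\textbf{Case $\lambda\to\infty$.} For the lower bound, drop the nonnegative terms and use Jensen's inequality:
\[
\mathcal{A}\ge\lambda\int_0^1\dot r^2\,\mathrm{d}t\ge\lambda\,(r_1-r_0)^2=\lambda\,\big|\Vert\bm{x}_1-\bm{y}_1\Vert-\Vert\bm{x}_0-\bm{y}_0\Vert\big|^2 .
\]
This holds because $r_t$ is absolutely continuous whenever $\bm{q}_t$ is. The right-hand side is exactly $\lambda$ times the GW integrand for the pair $(\bm{x}_0,\bm{y}_0)\leftrightarrow(\bm{x}_1,\bm{y}_1)$.

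For the upper bound I use an explicit competitor. Take $r_t$ linear from $r_0$ to $r_1$, and take $\bm{\omega}_t$ a constant-speed great-circle arc from $\bm{\omega}_0$ to $\bm{\omega}_1$, of length at most $\pi$. This is where $d\ge2$ is used: the direction can rotate without $\bm{q}$ passing through $0$. In $d=1$, opposite signs of $\bm{q}_0,\bm{q}_1$ force $r$ through $0$, so the limit can fail. The degenerate cases $r_0=0$ or $r_1=0$ are handled by keeping $\bm{\omega}$ constant. This competitor gives
\[
\mathcal{A}\le\Vert\bm{c}_1-\bm{c}_0\Vert^2+\Big(\tfrac14+\lambda\Big)(r_1-r_0)^2+\tfrac{\pi^2}{4}\max(r_0,r_1)^2 .
\]
Hence
\[
0\le\lambda^{-1}\mathcal{A}-(r_1-r_0)^2\le\lambda^{-1}K(\bm{z},\bm{z}'),
\]
where $K$ grows quadratically in $(\bm{x}_0,\bm{x}_1,\bm{y}_0,\bm{y}_1)$.

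\textbf{Passing to $\text{QOT}_S$.} For every $\gamma\in\Pi(\mu_0,\mu_1)$, the integral $\int K\,\gamma\otimes\gamma$ is bounded by a constant $M$ depending only on the second moments of $\mu_0,\mu_1$. Taking the infimum over $\gamma$ in the two-sided bound gives
\[
\text{GW-OT}(\mu_0,\mu_1)\le\lambda^{-1}\text{QOT}_S(\mu_0,\mu_1)\le\text{GW-OT}(\mu_0,\mu_1)+M/\lambda ,
\]
and letting $\lambda\to\infty$ gives the claimed convergence.

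\textbf{Main obstacle.} The step I expect to need the most care is the upper-bound construction. It must be uniform in $(\bm{z},\bm{z}')$ with only quadratic growth, including the degenerate endpoints where $r_0$ or $r_1$ vanishes. The whole argument also relies on the finite-second-moment assumption, which the statement leaves implicit.
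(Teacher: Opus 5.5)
Your proof is correct, but it takes a different route from the paper's.

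\textbf{The paper's route.} In Appendix~\ref{app:GW and FGW} the paper does not bound the action. It uses the exact closed form from Proposition~\ref{prop:RK path},
$\mathcal{A}=\Vert\bm{c}_1-\bm{c}_0\Vert^2+(\frac14+\lambda)(r_0^2+r_1^2-2r_0r_1\cos k\alpha)$ with $k=(1+4\lambda)^{-1/2}$.
This formula comes from two steps: reducing to the plane of $\bm{q}_0,\bm{q}_1$ via angular-momentum conservation, and flattening the cone through $\bm{w}_t=\sqrt{a}\,r_t(\cos k\theta_t,\sin k\theta_t)$. Setting $\lambda=0$ gives the separable kinetic cost. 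Expanding $\cos k\alpha=1+\mathcal{O}(\lambda^{-1})$ gives $\lambda^{-1}\mathcal{A}\to(r_1-r_0)^2$ pointwise, and the paper then simply declares that the static problem reduces to GW-OT.

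\textbf{What your route does differently.} Your sandwich argument uses Jensen on $\dot r$ for the lower bound, and linear $r_t$ with a great-circle $\bm{\omega}_t$ for the upper bound. It never needs the exact minimizer, the Euler--Lagrange equations, or the planarity argument. It also avoids the nondegeneracy conditions ($r_0,r_1>0$, $\bm{q}_0\nparallel\bm{q}_1$) under which the closed form is derived. It works directly with the infimum defining $\mathcal{A}$, whether or not that infimum is attained.

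More importantly, your bound $\text{GW-OT}\le\lambda^{-1}\text{QOT}_S\le\text{GW-OT}+M/\lambda$ is uniform in $\gamma$ under finite second moments. This rigorously justifies exchanging the limit $\lambda\to\infty$ with the infimum over $\gamma$, and it gives an $\mathcal{O}(1/\lambda)$ rate. The paper leaves this interchange implicit. Its closed form, combined with $1-\cos x\le x^2/2$, would yield the same kind of uniform bound, with $\frac{\pi^2}{4}r_0r_1$ in place of your $\frac{\pi^2}{4}\max(r_0,r_1)^2$.

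\textbf{What the paper's route buys.} It produces the explicit path and action, which the paper needs anyway to compute the QOT coupling and the conditional velocities.

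Your remark about the factor $2$ at $\lambda=0$ is consistent with the paper. The paper identifies $\inf_\gamma\int\Vert\bm{x}_1-\bm{x}_0\Vert^2\gamma$ with OT for the cost $\Vert\bm{x}-\bm{y}\Vert^2$, which is twice the $\frac12$-normalized $\mathcal{W}_2^2$ of its preliminaries.
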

We therefore refer to this choice of Lagrangian as a \textbf{dynamic fusion of OT and GW-OT}. Of note, this is different from the existing FGW-OT formulation \citep{FGWOT,moscot}. FGW-OT is obtained by taking a weighted combination directly at the static level, whereas dynamic fusion introduces the weighting at the level of the dynamic formulation. Consequently, its induced static formulation is not FGW-OT. Also, by choosing proper \(\Phi\), the recently considered static IGW-OT \citep{IGWOT} can also be realized as a special case of our dynamic QOT framework. We discuss the relation between these formulations in \ref{app:GW and FGW} \ref{app:IGW-OT}.

\textbf{Modality separation.} In some settings, the coordinates may consist of multiple modalities, and we may wish to impose different interaction terms on different modalities. We illustrate the formulation using the two modality case, the extension to multiple modalities follows naturally. Let \(\bm{x}=(\bm{x}^1,\bm{x}^2),\bm{y}=(\bm{y}^1,\bm{y}^2)\), and the Lagrangian is separated to the two modalities
\begin{equation}
\label{eq:modality separation}
\mathcal{L}(t,\bm{x},\bm{y},\dot{\bm{x}},\dot{\bm{y}})=\mathcal{L}_1(t,\bm{x}^1,\bm{y}^1,\dot{\bm{x}}^1,\dot{\bm{y}}^1)+\mathcal{L}_2(t,\bm{x}^2,\bm{y}^2,\dot{\bm{x}}^2,\dot{\bm{y}}^2).
\end{equation}
Then one can easily check that the conditional velocities of the two modalities can also be computed separately. The conditional velocity pair of \((\bm{x},\bm{y})\) is exactly the concatenation of the conditional velocity pairs induced by \(\mathcal{L}_i\).

\section{Travelling pair flow matching}
In this section, we develop a flow matching framework for solving the dynamic QOT problem. As OT-CFM \citep{cfm_tong} averages travelling Diracs over OT coupling, our framework allows to average travelling pairs over QOT coupling. All the proofs are left to \ref{app:proofs}.
\subsection{Marginalization theory}
\textbf{Pair marginalization.}
Given conditional probability paths \(\pi_t(\bm{x},\bm{y}|\bm{z},\bm{z}')\) and conditional velocity pairs \(\bm{u}_t^1(\bm{x},\bm{y}|\bm{z},\bm{z}'),\bm{u}_t^2(\bm{x},\bm{y}|\bm{z},\bm{z}')\) satisfying the conditional continuity equation, let the conditional variables \((\bm{z},\bm{z}')\sim q(\bm{z},\bm{z}')\), and define the marginal probability flow as \(\pi_t(\bm{x},\bm{y})=\int\pi_t(\bm{x},\bm{y}|\bm{z},\bm{z}')q(\bm{z},\bm{z}')\mathrm{d}\bm{z}\mathrm{d}\bm{z}'\), the marginalization theorem on the two-particle space holds.
\begin{theorem}
\label{thm:pair marginalization}
Define the marginal velocity as \(\bm{u}^i_t(\bm{x},\bm{y})=\int\bm{u}^i_t(\bm{x},\bm{y}|\bm{z},\bm{z}')\frac{\pi_t(\bm{x},\bm{y}|\bm{z},\bm{z}')q(\bm{z},\bm{z}')}{\pi_t(\bm{x},\bm{y})}\mathrm{d}\bm{z}\mathrm{d}\bm{z}'\), the marginals satisfy the marginal continuity equation
\begin{equation}
\label{eq:pair continuity equation}
\partial_t\pi_t+\nabla_{\bm{x}}\cdot(\pi_t\bm{u}^1_t)+\nabla_{\bm{y}}\cdot(\pi_t\bm{u}^2_t)=0
\end{equation}
If \(q=\gamma\otimes\gamma\) for some \(\gamma\in\Pi(\mu_0,\mu_1)\), the marginal velocity transports \(\mu_0\otimes\mu_0\) to \(\mu_1\otimes\mu_1\).
\end{theorem}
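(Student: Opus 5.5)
The proof follows the standard flow-matching marginalization argument, carried out on the two-particle space $\mathcal{X}^2$ with test functions.

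\emph{Step 1: weak form of the conditional equation.} Fix a smooth, compactly supported test function $\varphi(\bm{x},\bm{y})$. For each $(\bm{z},\bm{z}')$, the conditional continuity equation in weak form reads
\[
\frac{\mathrm{d}}{\mathrm{d}t}\int \varphi\, \pi_t(\cdot|\bm{z},\bm{z}') = \int \big(\nabla_{\bm{x}}\varphi\cdot \bm{u}^1_t(\cdot|\bm{z},\bm{z}')+\nabla_{\bm{y}}\varphi\cdot \bm{u}^2_t(\cdot|\bm{z},\bm{z}')\big)\pi_t(\cdot|\bm{z},\bm{z}').
\]
For the travelling pair the conditional measure is a Dirac moving along $(\bm{x}_t,\bm{y}_t)$. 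In that case the identity is just the chain rule $\frac{\mathrm{d}}{\mathrm{d}t}\varphi(\bm{x}_t,\bm{y}_t)=\nabla_{\bm{x}}\varphi\cdot\dot{\bm{x}}_t+\nabla_{\bm{y}}\varphi\cdot\dot{\bm{y}}_t$. This is why the weak formulation is the appropriate one to use here.

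\emph{Step 2: integrate against $q$.} Integrate both sides against $q(\bm{z},\bm{z}')$. Exchange $\frac{\mathrm{d}}{\mathrm{d}t}$ with the $q$-integral, and use Fubini to swap the $(\bm{z},\bm{z}')$ and $(\bm{x},\bm{y})$ integrals. The left side becomes $\frac{\mathrm{d}}{\mathrm{d}t}\int\varphi\,\pi_t$. On the right side, the definition of $\bm{u}^i_t(\bm{x},\bm{y})$ gives
\[
\int \bm{u}^i_t(\bm{x},\bm{y}|\bm{z},\bm{z}')\,\pi_t(\bm{x},\bm{y}|\bm{z},\bm{z}')\,q\,\mathrm{d}\bm{z}\,\mathrm{d}\bm{z}' = \pi_t(\bm{x},\bm{y})\,\bm{u}^i_t(\bm{x},\bm{y})
\]
wherever $\pi_t>0$. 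Where $\pi_t=0$, both sides vanish, so the convention chosen for $\bm{u}^i_t$ there does not matter. The result is exactly the weak form of (\ref{eq:pair continuity equation}).

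\emph{Main obstacle: regularity.} The only delicate step is justifying the interchange of derivative and integral. This needs an integrability condition, for instance $\int\!\int_0^1 \big(\|\bm{u}^1_t(\cdot|\bm{z},\bm{z}')\|+\|\bm{u}^2_t(\cdot|\bm{z},\bm{z}')\|\big)\,\pi_t(\cdot|\bm{z},\bm{z}')\,q\,\mathrm{d}t<\infty$. Given that, the time-integrated weak form holds by Fubini and dominated convergence, and differentiating in $t$ recovers the statement. I would state this assumption explicitly, consistent with the "sufficient regularity" already assumed for $\mathcal{L}$.

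\emph{Step 3: boundary values.} By construction $\pi_0(\cdot|\bm{z},\bm{z}')=\delta_{(\bm{x}_0,\bm{y}_0)}$, where $\bm{z}=(\bm{x}_0,\bm{x}_1)$ and $\bm{z}'=(\bm{y}_0,\bm{y}_1)$. Taking $q=\gamma\otimes\gamma$,
\[
\pi_0(\bm{x},\bm{y})=\int\delta_{\bm{x}_0}(\bm{x})\,\delta_{\bm{y}_0}(\bm{y})\,\gamma(\bm{x}_0,\bm{x}_1)\,\gamma(\bm{y}_0,\bm{y}_1) = \mu_0(\bm{x})\,\mu_0(\bm{y}),
\]
using the first marginal constraint of $\gamma\in\Pi(\mu_0,\mu_1)$ for each factor. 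The identical computation at $t=1$, using the second marginal constraint, gives $\pi_1=\mu_1\otimes\mu_1$.

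Combining Step 2 and Step 3, the marginal velocity pair $(\bm{u}^1_t,\bm{u}^2_t)$ generates a flow solving (\ref{eq:pair continuity equation}) that starts at $\mu_0\otimes\mu_0$ and ends at $\mu_1\otimes\mu_1$, which is the claim.
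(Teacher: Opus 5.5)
Your proposal is correct and follows the paper's route: the paper treats $\mathcal{X}^2$ as a single space and invokes the standard conditional flow matching marginalization theorem for the stacked velocity $(\bm{u}^1_t,\bm{u}^2_t)$, then splits the divergence, whereas you re-derive that theorem in weak form; the boundary computation with $q=\gamma\otimes\gamma$ is the same in both. Your explicit weak formulation and integrability assumption tighten the argument slightly, because the conditional paths here are Diracs, which the density-based theorem the paper cites covers only formally.
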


\textbf{Particle marginalization.} The marginal velocity \(\bm{u}^1_t(\bm{x},\bm{y})\) can be interpreted as how particle \(\bm{x}\) travels given the influence of \(\bm{y}\). From this perspective, one can obtain a marginal velocity \(\bm{v}_t(\bm{x})\) only depends on \(\bm{x}\) by averaging the influences of all possible \(\bm{y}\).
\begin{equation}
\label{eq:particle velocity}
\bm{v}_t(\bm{x})=\mathbb{E}_{\pi_t}[\bm{u}^1_t(\bm{X}_t,\bm{Y}_t)|\bm{X}_t=\bm{x}]=\int_{\mathcal{X}}\bm{u}^1_t(\bm{x},\bm{y})\pi_t(\bm{y}|\bm{x})\mathrm{d}\bm{y}
\end{equation}
This velocity only depends on the particle itself, but it already contains all the interaction information by taking conditional expectation. We further define the \(\bm{x}\) marginal \(\rho_t(\bm{x})=\int_{\mathcal{X}}\pi_t(\bm{x},\bm{y})\mathrm{d}\bm{y}\). For these single particle marginals, we also have a marginalization theorem.
\begin{theorem}
\label{thm:particle marginalization}
The single particle marginals satisfy the continuity equation
\begin{equation}
\label{eq:particle continuity equation}
\partial_t\rho_t+\nabla_{\bm{x}}\cdot(\rho_t\bm{v}_t)=0
\end{equation}
\end{theorem}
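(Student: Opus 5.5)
The plan is to integrate the pair continuity equation of Theorem~\ref{thm:pair marginalization}, namely \eqref{eq:pair continuity equation}, over the second particle \(\bm{y}\). The cleanest route is the weak formulation. Fix a test function \(\varphi\in C_c^\infty(\mathcal{X})\) depending only on \(\bm{x}\), and regard it as a test function on \(\mathcal{X}^2\) that is constant in \(\bm{y}\). Testing \eqref{eq:pair continuity equation} against \(\varphi(\bm{x})\) gives
\[
\frac{\mathrm{d}}{\mathrm{d}t}\int_{\mathcal{X}^2}\varphi(\bm{x})\pi_t(\bm{x},\bm{y})\,\mathrm{d}\bm{x}\mathrm{d}\bm{y}
=\int_{\mathcal{X}^2}\nabla\varphi(\bm{x})\cdot\bm{u}^1_t(\bm{x},\bm{y})\pi_t(\bm{x},\bm{y})\,\mathrm{d}\bm{x}\mathrm{d}\bm{y}
+\int_{\mathcal{X}^2}\nabla_{\bm{y}}\varphi(\bm{x})\cdot\bm{u}^2_t\,\pi_t\,\mathrm{d}\bm{x}\mathrm{d}\bm{y}.
\]
The last term vanishes because \(\nabla_{\bm{y}}\varphi(\bm{x})=0\). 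Equivalently, in strong form, the term \(\int\nabla_{\bm{y}}\cdot(\pi_t\bm{u}^2_t)\,\mathrm{d}\bm{y}\) is zero by the divergence theorem, provided \(\pi_t\bm{u}^2_t\) decays at the boundary of \(\mathcal{X}\) or at infinity. The left-hand side equals \(\frac{\mathrm{d}}{\mathrm{d}t}\int\varphi\rho_t\,\mathrm{d}\bm{x}\) by the definition of \(\rho_t\) and Fubini.

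For the remaining term I will use the definition \eqref{eq:particle velocity} together with the disintegration \(\pi_t(\bm{x},\bm{y})=\rho_t(\bm{x})\pi_t(\bm{y}|\bm{x})\). This gives
\[
\int_{\mathcal{X}^2}\nabla\varphi(\bm{x})\cdot\bm{u}^1_t(\bm{x},\bm{y})\pi_t(\bm{x},\bm{y})\,\mathrm{d}\bm{x}\mathrm{d}\bm{y}
=\int_{\mathcal{X}}\nabla\varphi(\bm{x})\cdot\Big(\int_{\mathcal{X}}\bm{u}^1_t(\bm{x},\bm{y})\pi_t(\bm{y}|\bm{x})\,\mathrm{d}\bm{y}\Big)\rho_t(\bm{x})\,\mathrm{d}\bm{x}
=\int_{\mathcal{X}}\nabla\varphi\cdot\bm{v}_t\,\rho_t\,\mathrm{d}\bm{x}.
\]
This is exactly the weak form of \(\partial_t\rho_t+\nabla_{\bm{x}}\cdot(\rho_t\bm{v}_t)=0\), i.e.\ of \eqref{eq:particle continuity equation}. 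In strong form, the same computation reads \(\int\nabla_{\bm{x}}\cdot(\pi_t\bm{u}^1_t)\,\mathrm{d}\bm{y}=\nabla_{\bm{x}}\cdot\int\pi_t\bm{u}^1_t\,\mathrm{d}\bm{y}=\nabla_{\bm{x}}\cdot(\rho_t\bm{v}_t)\), where the divergence is exchanged with the \(\bm{y}\)-integral.

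The main obstacle is technical rather than conceptual. One needs enough integrability, for instance \(\int_0^1\!\int\|\bm{u}^1_t\|\pi_t<\infty\), to justify Fubini and the exchange of differentiation and integration, and to make the conditional density well defined. Where \(\rho_t(\bm{x})=0\), \(\bm{v}_t\) can be defined arbitrarily, since those points carry no mass. I will state these as standing regularity assumptions, matching those implicitly used in Theorem~\ref{thm:pair marginalization}. Under the same assumptions, if \(q=\gamma\otimes\gamma\), the boundary values \(\mu_0\otimes\mu_0\) and \(\mu_1\otimes\mu_1\) have \(\bm{x}\)-marginals \(\rho_0=\mu_0\) and \(\rho_1=\mu_1\). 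Hence \(\bm{v}_t\) transports \(\mu_0\) to \(\mu_1\), which is the practically relevant consequence.
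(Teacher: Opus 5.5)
Your proposal is correct and follows the paper's proof. Both integrate the pair continuity equation over $\bm{y}$, drop the $\nabla_{\bm{y}}\cdot(\pi_t\bm{u}^2_t)$ term as the $\bm{y}$-integral of a divergence, and identify $\int\pi_t\bm{u}^1_t\,\mathrm{d}\bm{y}=\rho_t\bm{v}_t$ through $\pi_t(\bm{y}|\bm{x})=\pi_t(\bm{x},\bm{y})/\rho_t(\bm{x})$; your weak formulation is the same computation stated a bit more carefully, though on unbounded $\mathcal{X}$ the $\bm{x}$-only test function needs a cutoff in $\bm{y}$, which also requires integrability of $\bm{u}^2_t$ against $\pi_t$, not only of $\bm{u}^1_t$.
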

Equivalently, it means \(\bm{v}_t(\bm{x})\) transports \(\mu_0\) to \(\mu_1\). It can be viewed as the projection of the pair flow on the single particle space. However, it is worth clarifying that the two-particle dynamics generally cannot be recovered from the single-particle marginal dynamics. They are not equivalent.

\textbf{Mean-field approximation.} We further decouple the velocity \(\bm{u}^1_t\) into two parts \(\bm{u}^1_t(\bm{x},\bm{y})=\bm{b}_t(\bm{x})+\bm{f}_t(\bm{x},\bm{y})\). \(\bm{b}_t\) represents the self-driven term, and \(\bm{f}_t\) represents the interaction term. Under this decomposition, we have
\begin{equation}
\bm{v}_t(\bm{x})=\bm{b}_t(\bm{x})+\int_{\mathcal{X}}\bm{f}_t(\bm{x},\bm{y})\pi_t(\bm{y}|\bm{x})\mathrm{d}\bm{y}
\end{equation}
The particle continuity equation (\ref{eq:particle continuity equation}) becomes
\begin{equation}
\label{eq:interaction continuity equation}
\partial_t\rho_t(\bm{x})+\nabla_{\bm{x}}\cdot[\rho_t(\bm{x})\big(\bm{b}_t(\bm{x})+\int_{\mathcal{X}}\bm{f}_t(\bm{x},\bm{y})\pi_t(\bm{y}|\bm{x})\mathrm{d}\bm{y}\big)]=0
\end{equation}

Under independence approximation \(\pi_t(\bm{x},\bm{y})\approx\rho_t(\bm{x})\rho_t(\bm{y})\), (\ref{eq:interaction continuity equation}) reduces to the mean-field flow.
\begin{equation}
\label{eq:mean-field flow equation}
\partial_t\rho_t(\bm{x})+\nabla_{\bm{x}}\cdot[\rho_t(\bm{x})\big(\bm{b}_t(\bm{x})+\int_{\mathcal{X}}\bm{f}_t(\bm{x},\bm{y})\rho_t(\bm{y})\mathrm{d}\bm{y}\big)]=0
\end{equation}
Though the independence may not hold accurately due to the particle interactions, the difference between the continuous measure path generated by the mean-field approximation (\ref{eq:mean-field flow equation}) and the true dynamics (\ref{eq:interaction continuity equation}) can be controlled by the difference between \(\pi_t\) and \(\rho_t\otimes\rho_t\).
\begin{proposition}
\label{prop:error bound}
Let the true probability flow be \(\rho_t\) and the mean-field approximation be \(\hat{\rho}_t\). Assume \(\bm{b}_t,\bm{f}_t\) are both Lipschitz, and \(M=\underset{0<t\le1}{\max}\underset{\bm{x}}{\max}\mathcal{W}_1(\rho_t,\pi_t(\cdot|\bm{x}))\), then \(\forall 0<t\le1,\exists C_t>0\) such that \(\mathcal{W}_1(\rho_t,\hat{\rho}_t)\le C_tM\).
\end{proposition}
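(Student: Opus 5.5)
The approach is a Lagrangian stability (coupling) argument on characteristics. We compare the flow maps of the two velocity fields
\[
\bm{v}_t(\bm{x})=\bm{b}_t(\bm{x})+\int\bm{f}_t(\bm{x},\bm{y})\pi_t(\bm{y}|\bm{x})\mathrm{d}\bm{y},
\qquad
\hat{\bm{v}}_t(\bm{x})=\bm{b}_t(\bm{x})+\int\bm{f}_t(\bm{x},\bm{y})\hat{\rho}_t(\bm{y})\mathrm{d}\bm{y},
\]
both started from the same initial measure \(\rho_0=\hat\rho_0=\mu_0\). By Theorem~\ref{thm:particle marginalization}, \(\rho_t\) is transported by \(\bm{v}_t\). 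We write \(X_t\) for the flow of \(\bm{v}\) and \(\hat X_t\) for the flow of \(\hat{\bm{v}}\), both driven from the same initial point \(\bm{x}_0\sim\mu_0\). Then \((X_t,\hat X_t)_\#\mu_0\) is a coupling of \(\rho_t\) and \(\hat\rho_t\), so
\[
\mathcal{W}_1(\rho_t,\hat\rho_t)\le D(t):=\int|X_t(\bm{x}_0)-\hat X_t(\bm{x}_0)|\,\mu_0(\mathrm{d}\bm{x}_0).
\]
The goal is a Grönwall inequality for \(D\).

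The key step is to bound \(|\bm{v}_t(\bm{x})-\hat{\bm{v}}_t(\bm{x}')|\). Let \(L_b\) and \(L_f\) be the Lipschitz constants of \(\bm{b}\) and \(\bm{f}\), the latter taken jointly in both arguments. Inserting the intermediate quantity \(\int\bm{f}_t(\bm{x},\bm{y})\rho_t(\bm{y})\mathrm{d}\bm{y}\), we split the difference into three pieces:
\[
|\bm{b}_t(\bm{x})-\bm{b}_t(\bm{x}')|+\Big|\int\bm{f}_t(\bm{x},\cdot)\,\mathrm{d}(\pi_t(\cdot|\bm{x})-\rho_t)\Big|+\Big|\int\bm{f}_t(\bm{x},\cdot)\,\mathrm{d}\rho_t-\int\bm{f}_t(\bm{x}',\cdot)\,\mathrm{d}\hat\rho_t\Big|.
\]
Each piece is handled separately.
\begin{itemize}
\item The first piece is at most \(L_b|\bm{x}-\bm{x}'|\).
\item For the second, \(\bm{y}\mapsto\bm{f}_t(\bm{x},\bm{y})\) is \(L_f\)-Lipschitz, so Kantorovich--Rubinstein duality (applied componentwise, which costs at most a dimension constant) bounds it by \(L_f\mathcal{W}_1(\pi_t(\cdot|\bm{x}),\rho_t)\le L_fM\).
\item The third is at most \(L_f|\bm{x}-\bm{x}'|+L_f\mathcal{W}_1(\rho_t,\hat\rho_t)\), again by duality.
\end{itemize}

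Evaluating at \(\bm{x}=X_s(\bm{x}_0)\), \(\bm{x}'=\hat X_s(\bm{x}_0)\) and integrating against \(\mu_0\) gives
\[
\frac{\mathrm{d}}{\mathrm{d}s}D(s)\le (L_b+L_f)D(s)+L_f\mathcal{W}_1(\rho_s,\hat\rho_s)+L_fM\le (L_b+2L_f)D(s)+L_fM.
\]
Since \(D(0)=0\), Grönwall's inequality yields \(D(t)\le \frac{L_f}{L_b+2L_f}\big(e^{(L_b+2L_f)t}-1\big)M\). We may therefore take
\[
C_t=\frac{L_f}{L_b+2L_f}\big(e^{(L_b+2L_f)t}-1\big),
\]
which is positive for \(t>0\). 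If \(L_f=0\), then \(\bm{f}_t\) does not depend on \(\bm{y}\), so \(\bm{v}\equiv\hat{\bm{v}}\), \(\rho_t=\hat\rho_t\), and any \(C_t>0\) works.

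The main obstacle is well-posedness, since Lipschitz continuity of \(\bm{v}_t\) itself is not given. The conditional \(\pi_t(\cdot|\bm{x})\) may vary badly in \(\bm{x}\), so the flow \(X_t\) may not be classically defined. To handle this, I would avoid requiring \(\bm{v}\) to be Lipschitz and instead use a superposition principle: represent \(\rho_t\) as the law of absolutely continuous curves solving \(\dot X=\bm{v}_t(X)\), obtained from the pair flow of Theorem~\ref{thm:pair marginalization}. I would then run the same estimate along these curves against the well-posed mean-field flow \(\hat X\). The mean-field flow is well-posed because \(\hat{\bm{v}}\) is Lipschitz in \(\bm{x}\) uniformly and Lipschitz in \(\hat\rho\) in \(\mathcal{W}_1\), which is the standard McKean--Vlasov existence argument. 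The estimate above only uses Lipschitz continuity of \(\bm{b}\) and \(\bm{f}\) together with the bound \(M\), so it goes through unchanged. The derivative of \(D\) should be read in the integrated sense, i.e.\ \(|X_t-\hat X_t|\le\int_0^t|\bm{v}_s(X_s)-\hat{\bm{v}}_s(\hat X_s)|\mathrm{d}s\), to avoid differentiability issues.
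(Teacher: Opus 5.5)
Your proposal is correct and follows essentially the paper's argument. You couple the characteristics of $\bm{v}_t$ and $\hat{\bm{v}}_t$ from a common initial point, bound the velocity mismatch by Lipschitz continuity and $\mathcal{W}_1$ duality after inserting $\rho_t$ as an intermediate measure, and close with Gronwall. The differences are minor: you absorb $\mathcal{W}_1(\rho_s,\hat\rho_s)\le D(s)$ and apply Gronwall once, whereas the paper applies it pointwise to $Z_t$ and then again to $F(t)$ (both give $(e^{3Lt}-1)/3$ when $L_b=L_f=L$), and you explicitly handle existence of the possibly non-Lipschitz characteristics via superposition, which the paper tacitly assumes.
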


\subsection{Training loss design} 
\textbf{Marginal velocity pair.} Analog to conditional flow matching \citep{cfm_lipman}, when travelling pairs are tractable, we can parameterize two neural networks \(\bm{u}^i_{\bm{\theta}}(\bm{x},\bm{y},t),i=1,2\) and regress the true marginal velocity pair by minimizing the conditional flow matching loss. This further leads us to the solution to dynamic QOT problem (\ref{eq:dynamic QOT}). 
\begin{equation}
\label{eq:pair loss}
\mathcal{L}_{\text{pair}}(\bm{\theta})=
\mathbb{E}_{t\sim\mathcal{U}[0,1], (\bm{z},\bm{z}')\sim q(\bm{z},\bm{z}'), (\bm{x},\bm{y})\sim \pi_t(\bm{x},\bm{y}\vert \bm{z},\bm{z}')}\sum_{i=1}^2\left\| \bm{\bm{u}^i_{\theta}}(\bm{x},\bm{y},t) - \bm{u}^i_t(\bm{x},\bm{y}\vert\bm{z},\bm{z}') \right\|_2^2
\end{equation}
\begin{theorem}
\label{thm:solve dynamic QOT}
The minimizer is \(\bm{\bm{u}^i_{\theta}}(\bm{x},\bm{y},t)=\bm{u}^i_t(\bm{x},\bm{y})\). When \(q=\gamma\otimes\gamma\) where \(\gamma\) is the optimal QOT coupling of (\ref{eq:static QOT}), and the travelling pair used is the minimizer of (\ref{eq:travelling pair}), then the learned velocity pair generates the probability flow of the dynamic QOT problem (\ref{eq:dynamic QOT}).
\end{theorem}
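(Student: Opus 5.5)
The argument has two parts. First we identify the minimizer of the regression loss as a conditional expectation; this is the standard conditional flow matching argument, lifted to the two-particle space. Then we use Theorem~\ref{thm:pair marginalization} and Theorem~\ref{thm:static dynamic equivalence} to show that the resulting flow is the dynamic QOT flow.

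\emph{Step 1: the minimizer.} Write $\mathcal{L}_{\text{pair}}$ as an expectation over $t$, $(\bm{x},\bm{y})\sim\pi_t$ and the posterior $(\bm{z},\bm{z}')\mid(\bm{x},\bm{y},t)$. Bayes' rule gives the posterior density $\pi_t(\bm{x},\bm{y}|\bm{z},\bm{z}')q(\bm{z},\bm{z}')/\pi_t(\bm{x},\bm{y})$, which is well defined wherever $\pi_t>0$. Expanding each squared norm then gives
\[
\mathcal{L}_{\text{pair}}(\bm{\theta})=\mathbb{E}_{t,\pi_t}\sum_{i=1}^2\big\Vert \bm{u}^i_{\bm{\theta}}(\bm{x},\bm{y},t)-\bm{u}^i_t(\bm{x},\bm{y})\big\Vert_2^2+\text{const}.
\]
Here $\bm{u}^i_t(\bm{x},\bm{y})$ is exactly the marginal velocity of Theorem~\ref{thm:pair marginalization}, being the posterior mean of the conditional velocity. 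The constant is the posterior variance and does not depend on $\bm{\theta}$. Hence the loss is minimized, $\pi_t$-a.e., exactly when $\bm{u}^i_{\bm{\theta}}=\bm{u}^i_t$; this is the same bias--variance identity as in \citep{cfm_lipman}. To justify interchanging integrals we need mild integrability: $\mathbb{E}\Vert\bm{u}^i_t(\cdot|\bm{z},\bm{z}')\Vert^2<\infty$ under $q$. For the tractable Lagrangians of Section~\ref{sec:tractable lagrangian} this holds if $\mu_0,\mu_1$ have finite second moments.

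\emph{Step 2: the learned flow is the dynamic QOT flow.} Take $q=\gamma^*\otimes\gamma^*$, with $\gamma^*$ optimal for (\ref{eq:static QOT}) and each conditional path the minimizing travelling pair of (\ref{eq:travelling pair}). By Theorem~\ref{thm:pair marginalization}, the learned pair velocity satisfies the continuity equation (\ref{eq:pair continuity equation}) with $\pi_t=\int\pi_t(\cdot|\bm{z},\bm{z}')\gamma^*(\bm{z})\gamma^*(\bm{z}')$. It therefore transports $\mu_0\otimes\mu_0$ to $\mu_1\otimes\mu_1$. It remains to show that the pair $(\gamma^*,\{\text{optimal travelling pairs}\})$ attains the infimum in (\ref{eq:dynamic QOT}). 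For any fixed coupling $\gamma$, the inner infimum over conditional paths decouples pointwise in $(\bm{z},\bm{z}')$. Each pointwise term equals $\mathcal{A}(\bm{z},\bm{z}')$, and the pointwise minimizers achieve it. So the dynamic objective at $(\gamma^*,\text{optimal pairs})$ equals $\int\mathcal{A}\,\gamma^*\otimes\gamma^*=\text{QOT}_S(\mu_0,\mu_1)$. This equals $\text{QOT}_D(\mu_0,\mu_1)$ by Theorem~\ref{thm:static dynamic equivalence}, so the configuration is optimal and its marginal flow $\pi_t$ is the dynamic QOT probability flow.

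\emph{Main obstacle.} The main difficulty is uniqueness: the learned velocity field must generate $\pi_t$, and not merely satisfy the continuity equation with it. I would handle this by assuming the marginal field $\bm{u}^i_t$ is regular enough for uniqueness of (\ref{eq:pair continuity equation}) given the initial datum, for example locally Lipschitz with a suitable integrability bound, and then invoking the uniqueness theorem for the continuity equation. If such regularity cannot be verified directly (the conditional paths are Dirac, so $\bm{u}^i_t$ may be rough), I would weaken the claim. The weaker statement is that $\pi_t$ is a solution generated by the learned field. Via Ambrosio's superposition principle, it is then the flow obtained from the learned field as a superposition of its integral curves, which suffices for the intended meaning. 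Measurable selection of optimal travelling pairs in $(\bm{z},\bm{z}')$ is a secondary technical point; I would assume it, since the Lagrangians of Section~\ref{sec:tractable lagrangian} have explicit minimizers.
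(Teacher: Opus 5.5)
Your proposal is correct and follows the paper's proof. The paper also lifts the standard conditional-to-marginal flow matching loss equivalence to the two-particle space to identify the minimizer as $\bm{u}^i_t(\bm{x},\bm{y})$, then invokes Theorem~\ref{thm:pair marginalization}; you additionally spell out two things the paper leaves implicit: that $(\gamma^*,\text{optimal travelling pairs})$ attains the infimum in (\ref{eq:dynamic QOT}) via Theorem~\ref{thm:static dynamic equivalence}, and that ``generates'' needs uniqueness (or a superposition reading) for the continuity equation.
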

\textbf{Particle velocity.} A new design is that we can also obtain the particle velocity \(\bm{v}_t(\bm{x})\) (\ref{eq:particle velocity}) by flow matching. We can parameterize a single neural network \(\bm{v}_{\bm{\theta}}(\bm{x},t)\) and try to minimize the intractable marginal regression loss \(\mathcal{L}_{M}(\bm{\theta)}=\mathbb{E}_{t\sim\mathcal{U}[0,1],\bm{x}\sim\rho_t(\bm{x})}\Vert\bm{v}_{\bm{\theta}}(\bm{x},t)-\bm{v}_t(\bm{x})\Vert^2\). The minimizer is \(\bm{v}_t(\bm{x})\). A tractable conditional version is
\begin{equation}
\label{eq:particle CFM}
\mathcal{L}_{C}(\bm{\theta)}=\mathbb{E}_{t\sim\mathcal{U}[0,1],(\bm{z},\bm{z}')\sim q(\bm{z},\bm{z}'),(\bm{x},\bm{y})\sim\pi_t(\bm{x},\bm{y}\vert\bm{z},\bm{z}')}\Vert\bm{v}_{\bm{\theta}}(\bm{x},t)-\bm{u}^1_t(\bm{x},\bm{y}\vert\bm{z},\bm{z}')\Vert^2
\end{equation}
\begin{theorem}
\label{thm:particle loss}
\(\nabla_{\bm{\theta}}\mathcal{L}_M(\bm{\theta})=\nabla_{\bm{\theta}}\mathcal{L}_C(\bm{\theta})\).
\end{theorem}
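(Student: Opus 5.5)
We prove that $\nabla_{\bm{\theta}}\mathcal{L}_M=\nabla_{\bm{\theta}}\mathcal{L}_C$ in the same way as the standard conditional flow matching argument of \citep{cfm_lipman}. The idea is to expand both squared norms and show that they differ only by a $\bm{\theta}$-independent constant. Expanding gives
\[
\Vert\bm{v}_{\bm{\theta}}-\bm{v}_t\Vert^2=\Vert\bm{v}_{\bm{\theta}}\Vert^2-2\langle\bm{v}_{\bm{\theta}},\bm{v}_t\rangle+\Vert\bm{v}_t\Vert^2
\]
for $\mathcal{L}_M$, and
\[
\Vert\bm{v}_{\bm{\theta}}-\bm{u}^1_t(\cdot|\bm{z},\bm{z}')\Vert^2=\Vert\bm{v}_{\bm{\theta}}\Vert^2-2\langle\bm{v}_{\bm{\theta}},\bm{u}^1_t(\cdot|\bm{z},\bm{z}')\rangle+\Vert\bm{u}^1_t(\cdot|\bm{z},\bm{z}')\Vert^2
\]
for $\mathcal{L}_C$. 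The last terms do not depend on $\bm{\theta}$, so it suffices to match the first two terms in expectation.

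For the quadratic term we use the definitions of the marginals. Writing the conditional expectation as an integral against $\pi_t(\bm{x},\bm{y}|\bm{z},\bm{z}')q(\bm{z},\bm{z}')$ and integrating out $(\bm{z},\bm{z}')$ gives $\pi_t(\bm{x},\bm{y})$. Since $\Vert\bm{v}_{\bm{\theta}}(\bm{x},t)\Vert^2$ depends only on $\bm{x}$, integrating out $\bm{y}$ then gives $\rho_t(\bm{x})$. Hence $\mathbb{E}_{\mathcal{L}_C}\Vert\bm{v}_{\bm{\theta}}\Vert^2=\mathbb{E}_{t,\bm{x}\sim\rho_t}\Vert\bm{v}_{\bm{\theta}}\Vert^2$.

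The cross term is the main step, and it uses two layers of averaging. First, integrating $\bm{u}^1_t(\bm{x},\bm{y}|\bm{z},\bm{z}')$ against $\pi_t(\bm{x},\bm{y}|\bm{z},\bm{z}')q(\bm{z},\bm{z}')$ over $(\bm{z},\bm{z}')$ gives exactly $\pi_t(\bm{x},\bm{y})\bm{u}^1_t(\bm{x},\bm{y})$, by the definition of the marginal velocity in Theorem~\ref{thm:pair marginalization}. Second, integrating over $\bm{y}$ and writing $\pi_t(\bm{x},\bm{y})=\rho_t(\bm{x})\pi_t(\bm{y}|\bm{x})$ gives $\rho_t(\bm{x})\int\bm{u}^1_t(\bm{x},\bm{y})\pi_t(\bm{y}|\bm{x})\mathrm{d}\bm{y}=\rho_t(\bm{x})\bm{v}_t(\bm{x})$, by (\ref{eq:particle velocity}). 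Therefore
\[
\mathbb{E}_{\mathcal{L}_C}\langle\bm{v}_{\bm{\theta}},\bm{u}^1_t(\cdot|\bm{z},\bm{z}')\rangle=\mathbb{E}_{t,\bm{x}\sim\rho_t}\langle\bm{v}_{\bm{\theta}}(\bm{x},t),\bm{v}_t(\bm{x})\rangle .
\]
This reduces to the tower property $\mathbb{E}[\bm{u}^1|\bm{X}_t=\bm{x}]=\bm{v}_t(\bm{x})$, applied after conditioning on $(\bm{X}_t,\bm{Y}_t)$.

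It follows that $\mathcal{L}_C-\mathcal{L}_M$ is independent of $\bm{\theta}$, and the gradients agree. The delicate points are technical. Dividing by $\pi_t(\bm{x},\bm{y})$ and $\rho_t(\bm{x})$ is only done where these densities are positive, and the null sets contribute nothing. Exchanging integration and differentiation requires mild integrability assumptions (bounded second moments of $\bm{u}^1_t$ and $\bm{v}_{\bm{\theta}}$, and $\bm{v}_{\bm{\theta}}$ differentiable in $\bm{\theta}$ with integrable derivative), which we impose as in \citep{cfm_lipman}. The Dirac boundary cases at $t=0,1$ form a measure-zero set in $t$, so they are harmless.
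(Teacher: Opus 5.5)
Your proposal is correct and matches the paper's proof: the paper likewise expands $\mathcal{L}_C$, reduces the quadratic term to an integral against $\rho_t$, and handles the cross term in two steps. It first uses the pair marginal velocity $\bm{u}^1_t(\bm{x},\bm{y})$ and then averages over $\bm{y}$ to get $\bm{v}_t(\bm{x})$, after which completing the square gives $\mathcal{L}_C=\mathcal{L}_M+C$.

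One small remark: the travelling-pair conditional paths are Dirac for every $t$, not only at $t=0,1$. The density ratios should therefore be read as the disintegration or tower-property statement you already give, which is also how the paper's density shorthand should be understood.
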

In practice, if the symmetry condition holds, i.e. \(q=\gamma\otimes\gamma\) and \(\mathcal{L}\) is symmetric to \((\bm{x},\dot{\bm{x}}),(\bm{y},\dot{\bm{y}})\), then it is easy to show that \(\bm{u}^1_t(\bm{y},\bm{x}\vert\bm{z},\bm{z}')=\bm{u}^2_t(\bm{x},\bm{y}\vert\bm{z},\bm{z}')\). In this case, the conditional loss (\ref{eq:particle CFM}) can be replaced by a convex combination of itself and \(\Vert\bm{v}_{\bm{\theta}}(\bm{y},t)-\bm{u}^2_t(\bm{x},\bm{y}\vert\bm{z},\bm{z}')\Vert^2\) to make the training more efficient. In most QOT settings, this condition obviously holds.

\textbf{Velocity decomposition.} If there are some gauges of velocity decomposition \(\bm{u}^1_t(\bm{x},\bm{y}|\bm{z},\bm{z}')=\bm{b}_t(\bm{x})+\bm{f}_t(\bm{x},\bm{y}|\bm{z},\bm{z}')\), it is also possible to regress the interaction part \(\bm{f}\) via flow matching. Let's parameterize a neural network \(\bm{f}_{\bm{\phi}}(\bm{x},\bm{y},t)\) and minimize the conditional loss below.
\begin{equation}
\label{eq:bf loss}
\begin{aligned}
\mathcal{L}_f(\bm{\phi})&=
\mathbb{E}_{t\sim\mathcal{U}[0,1], (\bm{z},\bm{z}')\sim q(\bm{z},\bm{z}'), (\bm{x},\bm{y})\sim \pi_t(\bm{x},\bm{y}\vert \bm{z},\bm{z}')}\left\| \bm{\bm{f}_{\phi}}(\bm{x},\bm{y},t) - \bm{f}_t(\bm{x},\bm{y}\vert\bm{z},\bm{z}') \right\|_2^2
\end{aligned}
\end{equation}
\begin{theorem}
\label{thm:bf loss}
The minimizer of (\ref{eq:bf loss}) is \(\bm{f}_t(\bm{x},\bm{y})\).
\end{theorem}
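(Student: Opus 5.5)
The minimizer statement should follow from the usual conditional flow matching argument: expand the square, note that the loss depends on $\bm{f}_{\bm\phi}$ only through terms where the conditional target enters linearly, and identify that linear term with the marginal interaction field. Here we interpret $\bm{f}_t(\bm{x},\bm{y})$ as the posterior average of the conditional interaction term, in the same way as the marginal velocity pair of Theorem~\ref{thm:pair marginalization}:
\[
\bm{f}_t(\bm{x},\bm{y})=\int\bm{f}_t(\bm{x},\bm{y}|\bm{z},\bm{z}')\frac{\pi_t(\bm{x},\bm{y}|\bm{z},\bm{z}')q(\bm{z},\bm{z}')}{\pi_t(\bm{x},\bm{y})}\mathrm{d}\bm{z}\mathrm{d}\bm{z}'.
\]
Since $\bm{b}_t(\bm{x})$ does not depend on $(\bm{z},\bm{z}')$, this gives $\bm{u}^1_t(\bm{x},\bm{y})=\bm{b}_t(\bm{x})+\bm{f}_t(\bm{x},\bm{y})$. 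The marginal decomposition is therefore consistent with the conditional one, and this justifies the notation.

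First I would rewrite the expectation in (\ref{eq:bf loss}) by Fubini. For each fixed $t$,
\[
\int \Vert \bm{f}_{\bm\phi}(\bm{x},\bm{y},t)-\bm{f}_t(\bm{x},\bm{y}|\bm{z},\bm{z}')\Vert^2\,\pi_t(\bm{x},\bm{y}|\bm{z},\bm{z}')q(\bm{z},\bm{z}')\,\mathrm{d}\bm{x}\mathrm{d}\bm{y}\mathrm{d}\bm{z}\mathrm{d}\bm{z}'
\]
can be integrated first over $(\bm{z},\bm{z}')$ at fixed $(\bm{x},\bm{y})$, with weights given by the posterior $\pi_t(\bm{z},\bm{z}'|\bm{x},\bm{y})=\pi_t(\bm{x},\bm{y}|\bm{z},\bm{z}')q(\bm{z},\bm{z}')/\pi_t(\bm{x},\bm{y})$.

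Next I would expand the square. This gives
\[
\Vert\bm{f}_{\bm\phi}\Vert^2-2\langle\bm{f}_{\bm\phi},\bm{f}_t(\bm{x},\bm{y})\rangle+\mathbb{E}\bigl[\Vert\bm{f}_t(\cdot|\bm{z},\bm{z}')\Vert^2\,\big|\,\bm{x},\bm{y}\bigr].
\]
The cross term uses the posterior mean, which is exactly the definition of $\bm{f}_t(\bm{x},\bm{y})$ above. Hence
\[
\mathcal{L}_f(\bm\phi)=\mathbb{E}_{t,\pi_t(\bm{x},\bm{y})}\Vert\bm{f}_{\bm\phi}(\bm{x},\bm{y},t)-\bm{f}_t(\bm{x},\bm{y})\Vert^2+\text{const},
\]
where the constant is the conditional variance and does not depend on $\bm\phi$.

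With this bias–variance identity, the loss is minimized over all measurable functions exactly when $\bm{f}_{\bm\phi}=\bm{f}_t$ holds $\pi_t$-a.e. for a.e. $t$. This is the claim, assuming the network class is expressive enough to contain it. Equivalently, one could note that the identity gives $\nabla_{\bm\phi}\mathcal{L}_f=\nabla_{\bm\phi}\mathcal{L}_{f,M}$, exactly as in Theorem~\ref{thm:particle loss}.

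The main subtlety is not the algebra but the integrability and measurability needed for Fubini and the posterior. I would assume, as elsewhere in the paper, that $\pi_t(\bm{x},\bm{y})>0$ where it is evaluated and that $\bm{f}_t(\cdot|\bm{z},\bm{z}')$ is square-integrable. Where the conditional paths are Dirac travelling pairs, the densities should be read as measures, with the posterior given by disintegration; the argument is otherwise unchanged.
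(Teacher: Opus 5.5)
Your proposal is correct and follows the same route as the paper. The paper's proof simply invokes the standard conditional/marginal flow matching equivalence (as in Theorem~\ref{thm:solve dynamic QOT}) to write $\mathcal{L}_f(\bm{\phi})$ as the marginal regression loss against the posterior-averaged $\bm{f}_t(\bm{x},\bm{y})$ plus a $\bm{\phi}$-independent constant, which is exactly the bias--variance identity you derive explicitly. Your remarks on defining $\bm{f}_t(\bm{x},\bm{y})$, on its consistency with $\bm{u}^1_t=\bm{b}_t+\bm{f}_t$, and on the measure-theoretic caveats are sound; they make explicit what the paper leaves implicit.
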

An interesting point of the decomposed velocity is that one can use it to simulate the mean-field interaction dynamics. Consider \(N\) particles \(X_0^i,i=1,2\cdots,N\) independently sampled from \(\rho_0=\mu_0\) following the ODE system \(\mathrm{d}X_t^i=\Big(\bm{b}_t(X_t^i)+\frac{1}{N-1}\sum_{j\neq i}\bm{f}_t(X_t^i,X_t^j)\Big)\mathrm{d}t,i=1,2,\cdots,N
\). When \(N\to\infty\), this dynamics approximates the mean-field dynamics (\ref{eq:mean-field flow equation}). Note that the velocity decomposition is not naturally unique, hence some gauges must be given. One possible decomposition is to first train a single-particle velocity field as \(\bm{b}\) using TP-DATE, OT-CFM, or other methods, or to specify \(\bm{b}\) directly based on domain specific prior knowledge. When only a conditional form of \(\bm{b}\) is available, additional conditions are required to make the regression well defined; we discuss this case in \ref{velocity decomposition gauges}. Based on domain specific decomposition gauges, users may further exploit such a decomposition to provide more interpretable meanings for the components of the learned velocity.

\section{Experiments}
\label{sec:experiment}
As a direct application of QOT, we demonstrate on both synthetic and real datasets that TP-DATE outperforms existing methods on spatiotemporal transcriptomics slice interpolation and continuous 3D reconstruction tasks. In this section, we use \(\Phi=|\frac{\mathrm{d}}{\mathrm{d}t}\Vert\bm{q}_t\Vert|^2\) for space and \(\Phi=0\) for expression. Details of this Lagrangian choice can be found in \ref{app:Lagrangian choice}. We train TP-DATE under single-particle version (\ref{eq:particle CFM}). We use (F)GW-OT coupling for flow matching training, i.e. (F)GW-CFM, as baselines to further demonstrate the necessity of dynamic QOT against static QOT. See \ref{app:GW-CFM} for details. Ablation and scaling study of TP-DATE are left to \ref{app:ablation}, \ref{app:scaling}.

\textbf{TP-DATE preserves spatial structure during transport.} We first use two synthetic datasets to verify that TP-DATE better preserves spatial structure during the transport. Each synthetic dataset consists of three cell types, and the ground truth spatial dynamics are defined by a \(90^\circ\) counterclockwise rigid rotation, so that the relative spatial structure of the data remains unchanged throughout the entire process. In the Rotation data, the ground truth expression dynamics keep gene expression unchanged, whereas in the Rotation + expression distractor setting (Distractor), a type dependent expression change is applied to each cell type. See \ref{app:toy details} for further details.

\begin{figure}[ht]
  \centering
  \includegraphics[width=0.9\linewidth]{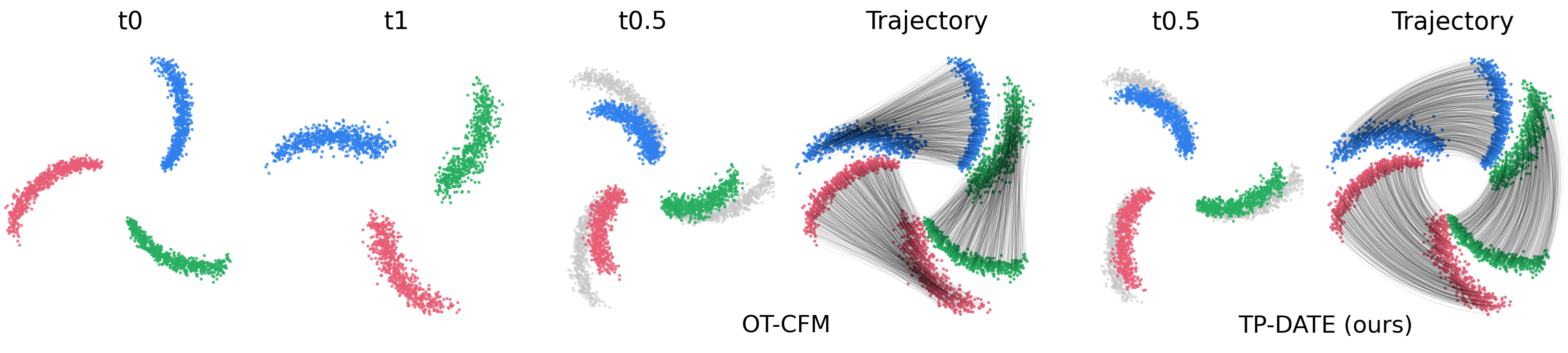}
  \caption{Trajectories on Rotation}
  \label{fig:toy-rotation-benchmarks}
\end{figure}

\begin{table}
  \centering
  \caption{Hold one out experiments on synthetic data. The best performing results are in bold. We report mean and standard deviation over 5 random seeds.}
  \label{tab:toy-rotation-results}
  \resizebox{\linewidth}{!}{%
\begin{tabular}{llccc}
\toprule
Dataset & Method & Spatial MSE ($\downarrow$) & Pair distortion ($\downarrow$) & Balanced fused $W_2$ ($\downarrow$) \\
\midrule
\multirow{7}{*}{Rotation} & OT-CFM \citep{cfm_tong} & $0.10601\!\pm\!0.01460$ & $0.33787\!\pm\!0.03021$ & $11.34260\!\pm\!0.81057$ \\
 & GW-CFM & $0.57642\!\pm\!0.07041$ & $0.62157\!\pm\!0.07289$ & $23.21205\!\pm\!1.14792$ \\
 & FGW-CFM & $0.10579\!\pm\!0.02591$ & $0.33472\!\pm\!0.04125$ & $11.30984\!\pm\!1.50067$ \\
 & stVCR \citep{peng2026stvcr}& $0.54023\!\pm\!0.01521$ & $0.17179\!\pm\!0.01119$ & $23.82269\!\pm\!0.23244$ \\
 & CytoBridge \citep{zhang2025cytobridge}& $0.09268\!\pm\!0.00318$ & $0.33237\!\pm\!0.01992$ & $10.62306\!\pm\!0.18430$ \\
 & ContextFlow \citep{rathod2026contextflowcontextawareflowmatching} & $0.23476\!\pm\!0.00584$ & $0.28606\!\pm\!0.01139$ & $17.01128\!\pm\!0.23455$ \\
 & TP-DATE (ours) & $\mathbf{0.02565\!\pm\!0.00480}$ & $\mathbf{0.17174\!\pm\!0.01483}$ & $\mathbf{5.63983\!\pm\!0.54717}$ \\
\midrule
\multirow{7}{*}{\shortstack{Rotation +\\expr. distractor}} & OT-CFM & $0.11015\!\pm\!0.00911$ & $0.32524\!\pm\!0.01367$ & $11.63383\!\pm\!0.50811$ \\
 & GW-CFM & $0.65267\!\pm\!0.06988$ & $0.64477\!\pm\!0.04815$ & $23.34253\!\pm\!0.74481$ \\
 & FGW-CFM & $0.10654\!\pm\!0.01524$ & $0.31233\!\pm\!0.01226$ & $11.43368\!\pm\!0.83093$ \\
 & stVCR & $0.52214\!\pm\!0.03327$ & $\mathbf{0.15765\!\pm\!0.01677}$ & $23.50474\!\pm\!0.58836$ \\
 & CytoBridge & $0.09085\!\pm\!0.00343$ & $0.31848\!\pm\!0.00244$ & $10.54004\!\pm\!0.19780$ \\
 & ContextFlow & $0.24025\!\pm\!0.01084$ & $0.29867\!\pm\!0.01964$ & $17.22919\!\pm\!0.38236$ \\
 & TP-DATE (ours) & $\mathbf{0.02269\!\pm\!0.00176}$ & $0.16102\!\pm\!0.00654$ & $\mathbf{5.35946\!\pm\!0.20573}$ \\
\bottomrule
\end{tabular}%
  }
\end{table}

We conducted the standard hold one out experiment commonly used in the field to evaluate the dynamics reconstruction performance of different methods. That is, we trained the models on the initial and final time points and compared the model predictions at the intermediate unseen time point with the ground truth. As shown in Figure~\ref{fig:toy-rotation-benchmarks}, on Rotation data, OT-CFM follows nearly straight paths to minimize the kinetic energy and shrinks the global structure, whereas TP-DATE recovers the more faithful curved rotation while better preserving the spatial structure.

Quantitatively, we use the Spatial MSE and pairwise distortion to assess how well the model recovers the intermediate spatial structure, and the fused Wasserstein distance to evaluate whether the model can recover the joint pattern of gene expression and spatial coordinates. See \ref{app:Evaluation metrics} for further details. As shown in Table~\ref{tab:toy-rotation-results}, TP-DATE gives the lowest spatial and fused errors, and low pair distortions on both datasets. These results show that TP-DATE can better preserve the spatial structure during transport. For these rotation datasets, to maintain the task non-trivial, we implemented stVCR \citep{peng2026stvcr} without performing rigid body transformation only on them.

\textbf{TP-DATE improves spatiotemporal dynamics reconstruction.} We further evaluate TP-DATE by hold one out experiments on two real spatiotemporal transcriptomics datasets, Mouse brain \citep{MOSTA} and ARTISTA \citep{ARTISTA}. We use the Wasserstein distances in spatial coordinate and expression space, respectively, to evaluate whether the model prediction can accurately recover the spatial structure and gene expression pattern of the intermediate slice. To jointly account for spatial and expression information, we use the fused Wasserstein distance introduced above. In addition, we introduce the spatially coupled expression MSE (SC-eMSE) to evaluate the recovery of spatial gene expression patterns in the intermediate slice. Specifically, SC-eMSE first computes a spatial optimal transport coupling between the predicted and ground truth slice using squared distance in spatial coordinate space, and then evaluates the transport cost under this coupling using squared distance in expression space. Intuitively, this metric measures how different the gene expression profiles are between spatially corresponding locations in the two slices. See \ref{app:real details} for dataset details, and \ref{app:Evaluation metrics} for metric details. 

\begin{table}
  \centering
  \caption{Hold one out experiments on spatiotemporal transcriptomics data. The best performing results are in bold. We report mean and standard deviation over 5 random seeds.}
  \label{tab:real-temporal-results}
  \resizebox{\linewidth}{!}{%
\begin{tabular}{llcccc}
\toprule
Dataset & Method & Spatial $W_2$ ($\downarrow$) & Expression $W_2$ ($\downarrow$) & SC-eMSE ($\downarrow$) & Balanced fused $W_2$ ($\downarrow$) \\
\midrule
\multirow{7}{*}{Mouse brain} & OT-CFM & $0.09190\!\pm\!0.00495$ & $6.52705\!\pm\!0.04751$ & $1.52747\!\pm\!0.00994$ & $9.91431\!\pm\!0.19724$ \\
 & GW-CFM & $0.10666\!\pm\!0.01233$ & $6.55846\!\pm\!0.04810$ & $1.55450\!\pm\!0.01966$ & $10.58390\!\pm\!0.51047$ \\
 & FGW-CFM & $0.09470\!\pm\!0.00504$ & $6.53585\!\pm\!0.05995$ & $1.53293\!\pm\!0.02363$ & $10.04189\!\pm\!0.19298$ \\
 & stVCR & $0.09053\!\pm\!0.00358$ & $7.55942\!\pm\!0.12372$ & $1.96876\!\pm\!0.03023$ & $10.76074\!\pm\!0.11473$ \\
 & CytoBridge & $0.09006\!\pm\!0.00519$ & $6.70320\!\pm\!0.05889$ & $1.61689\!\pm\!0.02489$ & $9.99210\!\pm\!0.17831$ \\
 & ContextFlow & $0.09141\!\pm\!0.00410$ & $7.25501\!\pm\!0.07919$ & $1.87498\!\pm\!0.03245$ & $10.56965\!\pm\!0.11675$ \\
 & TP-DATE (ours) & $\mathbf{0.08678\!\pm\!0.00291}$ & $\mathbf{6.40989\!\pm\!0.04644}$ & $\mathbf{1.47693\!\pm\!0.02312}$ & $\mathbf{9.59301\!\pm\!0.09255}$ \\
\midrule
\multirow{7}{*}{ARTISTA} & OT-CFM & $0.10137\!\pm\!0.00965$ & $5.94656\!\pm\!0.30203$ & $1.03487\!\pm\!0.08668$ & $8.80314\!\pm\!0.38687$ \\
 & GW-CFM & $0.10770\!\pm\!0.00734$ & $5.78681\!\pm\!0.39641$ & $0.98707\!\pm\!0.10308$ & $8.90625\!\pm\!0.24148$ \\
 & FGW-CFM & $0.10001\!\pm\!0.00739$ & $5.97909\!\pm\!0.31567$ & $1.04710\!\pm\!0.08661$ & $8.78789\!\pm\!0.16616$ \\
 & stVCR & $\mathbf{0.08582\!\pm\!0.00509}$ & $7.42768\!\pm\!0.49295$ & $1.46830\!\pm\!0.16836$ & $9.42630\!\pm\!0.39355$ \\
 & CytoBridge & $0.09833\!\pm\!0.00827$ & $7.00844\!\pm\!0.54146$ & $1.34401\!\pm\!0.17710$ & $9.49561\!\pm\!0.35947$ \\
 & ContextFlow & $0.09174\!\pm\!0.00663$ & $7.32022\!\pm\!0.49145$ & $1.44565\!\pm\!0.16502$ & $9.53524\!\pm\!0.38818$ \\
 & TP-DATE (ours) & $0.09196\!\pm\!0.00500$ & $\mathbf{5.39310\!\pm\!0.28347}$ & $\mathbf{0.87824\!\pm\!0.07587}$ & $\mathbf{8.01158\!\pm\!0.27289}$ \\
\bottomrule
\end{tabular}%
  }
\end{table}

On ARTISTA, although TP-DATE is less effective than the current state-of-the-art method stVCR \citep{peng2026stvcr} in recovering the overall spatial shape of the tissue and performs comparably to the recently proposed ContextFlow \citep{rathod2026contextflowcontextawareflowmatching}, it outperforms existing baselines on the other evaluation metrics and on the Mouse Brain dataset (Table \ref{tab:real-temporal-results}). We emphasize that spatiotemporal interpolation requires not only recovering the overall spatial morphology of the tissue, but also reconstructing the spatial patterns of gene expression. TP-DATE performs better on the latter aspect. Therefore, these results suggest that TP-DATE improves spatiotemporal dynamics reconstruction.

\textbf{TP-DATE allows continuous 3D reconstruction.} Recently, \cite{3DTumor} released a spatial transcriptomics dataset containing slices collected from the same tumor tissue at the same time point but at different depths. The slices are ordered along the depth axis rather than time. Since high-throughput volumetric spatial transcriptomics remains experimentally challenging and is not routinely available for many tissues and platforms, continuous reconstruction of 3D spatial structure from serial tissue sections remains an important computational task. If we assume that the spatial structure and gene expression patterns of the same tissue vary smoothly across adjacent depths, the QOT framework can likewise be used to interpolate unseen depths, thereby reconstructing the three dimensional tissue structure from a small number of sections sampled at different depths. TP-DATE achieves performance comparable to the best baseline in terms of spatial Wasserstein distance, while outperforming all baselines on the other evaluation metrics (Table \ref{tab:real-tumor-results}), demonstrating its superior performance on this new and important computational task.


\begin{table}
  \centering
  \caption{Hold one out experiments on 3D reconstruction. The best performing results are in bold. We report mean and standard deviation over 5 random seeds.}
  \label{tab:real-tumor-results}
  \resizebox{\linewidth}{!}{%
\begin{tabular}{lcccc}
\toprule
Method & Spatial $W_2$ ($\downarrow$) & Expression $W_2$ ($\downarrow$) & SC-eMSE ($\downarrow$) & Balanced fused $W_2$ ($\downarrow$) \\
\midrule
OT-CFM & $0.07042\!\pm\!0.00144$ & $5.23581\!\pm\!0.41570$ & $1.15154\!\pm\!0.08720$ & $7.74659\!\pm\!0.29841$ \\
GW-CFM & $0.07731\!\pm\!0.00789$ & $5.20556\!\pm\!0.27124$ & $1.15201\!\pm\!0.05978$ & $7.90406\!\pm\!0.35317$ \\
FGW-CFM & $0.07459\!\pm\!0.00802$ & $5.05869\!\pm\!0.14095$ & $1.10336\!\pm\!0.03139$ & $7.70859\!\pm\!0.27401$ \\
stVCR & $0.07059\!\pm\!0.00188$ & $5.24894\!\pm\!0.44683$ & $1.13438\!\pm\!0.09702$ & $7.63873\!\pm\!0.32859$ \\
CytoBridge & $0.08103\!\pm\!0.00191$ & $5.08726\!\pm\!0.39788$ & $1.08467\!\pm\!0.08165$ & $7.76704\!\pm\!0.22224$ \\
ContextFlow & $\mathbf{0.06768\!\pm\!0.00166}$ & $5.23844\!\pm\!0.43624$ & $1.13347\!\pm\!0.09420$ & $7.57907\!\pm\!0.35204$ \\
TP-DATE (ours) & $0.06959\!\pm\!0.00151$ & $\mathbf{4.81955\!\pm\!0.13874}$ & $\mathbf{1.04595\!\pm\!0.02881}$ & $\mathbf{7.37480\!\pm\!0.08766}$ \\
\bottomrule
\end{tabular}%
  }
\end{table}

\section{Conclusion and discussion}
We proposed TP-DATE, a dynamic QOT theory with a travelling pair flow matching framework for solving a family of dynamic QOT problems. Theoretically, we formulated dynamic QOT and included GW-OT, IGW-OT as special cases. Algorithmically, we developed a flow matching method that allows conditional paths to interact and used it to solve the dynamic QOT problem. Empirically, we quantitatively demonstrated the advantages of dynamic QOT on spatiotemporal dynamics reconstruction and continuous 3D reconstruction. We also theoretically offered a possibility to decompose the learned velocity into interpretable parts when specific domain knowledge is given.

QOT is a relatively new concept, yet some of its special cases, such as GW-OT, have already demonstrated substantial potential for applications \citep{gwot,FGWOT,moscot}. Beyond the dynamic formulation studied in this work, an important direction is to investigate whether other extensions of OT can also be introduced to the QOT setting, such as stochastic or unbalanced QOT formulations. Future work can also try to combine the velocity decomposition paradigm with specific field knowledge for more interpretable dynamics modeling. Another promising direction is to consider more general interaction terms, or even to learn these interactions directly from data.




\section*{AI Use Disclosure}
In this work, we used generative AI tools to assist with translation and language polishing, and to help verify the correctness and technical details of the mathematical arguments and algorithms. We did not use generative AI tools to develop the theoretical framework, formulate the main mathematical results, construct the proofs, design or implement the algorithms, or write the scientific content of the paper. All AI-assisted content was independently checked by the authors, and all final scientific and editorial decisions were made by the authors. We take full responsibility for the final content of this work.

\bibliography{iclr2027_conference}
\bibliographystyle{iclr2027_conference}

\newpage
\appendix
\section{Proofs}
\label{app:proofs}
Proofs of theorems and propositions.
\subsection{Proof of Theorem \ref{thm:static dynamic equivalence}}
\label{pf:static dynamic equivalence}
\textbf{Theorem \ref{thm:static dynamic equivalence}.}
\textit{If \(\mathcal{L}\) is convex w.r.t \((\dot{\bm{x}},\dot{\bm{y}})\), then \(\text{QOT}_S(\mu_0,\mu_1)=\text{QOT}_D(\mu_0,\mu_1)\ge \text{QOT}_{BB}(\mu_0,\mu_1)\).}

\begin{proof}
For convenience, we restate the three forms below. The static form is 
\begin{equation}
\label{eq:static QOT app}
\begin{aligned}
&\text{QOT}_S(\mu_0,\mu_1) =\inf_{\gamma\in\Pi(\mu_0,\mu_1)} \int_{\mathcal{X}^4} \mathcal{A}(\bm{z},\bm{z}')\gamma(\bm{z})\gamma(\bm{z}')\mathrm{d} \bm{z}\mathrm{d} \bm{z}'.\\
\end{aligned}
\end{equation}
The dynamic form is
\begin{equation}
\label{eq:dynamic QOT app}
\begin{aligned}
&\text{QOT}_{D}(\mu_0,\mu_1) = \\&\inf\int_0^1\int\int_{\mathcal{X}^2} \mathcal{L}(t,\bm{x},\bm{y},u^1_t(\bm{x},\bm{y}|\bm{z},\bm{z}'),u^2_t(\bm{x},\bm{y}|\bm{z},\bm{z}'))\pi_t(\bm{x},\bm{y}|\bm{z},\bm{z}')\gamma(\bm{z})\gamma(\bm{z}')\mathrm{d} \bm{x}\mathrm{d} \bm{y}\mathrm{d} \bm{z}\mathrm{d} \bm{z}'\mathrm{d}t.
\end{aligned}
\end{equation}
The BB-form is
\begin{equation}
\label{eq:BB QOT app}
\begin{aligned}
&\text{QOT}_{BB}(\mu_0,\mu_1) =\inf_{\pi,\bm{u}^1,\bm{u}^2} \int_0^1\int_{\mathcal{X}^2} \mathcal{L}(t,\bm{x},\bm{y},u^1_t(\bm{x},\bm{y}),u^2_t(\bm{x},\bm{y}))\pi_t(\bm{x},\bm{y})\mathrm{d} \bm{x}\mathrm{d} \bm{y}\mathrm{d}t\\
&\partial_t\pi_t(\bm{x},\bm{y})+\nabla_{\bm{x}}\cdot(\pi_t(\bm{x},\bm{y})u^1_t(\bm{x},\bm{y}))+\nabla_{\bm{y}}\cdot(\pi_t(\bm{x},\bm{y})u^2_t(\bm{x},\bm{y}))=0.\\
&\pi_0=\mu_0\otimes\mu_0,\qquad\pi_1=\mu_1\otimes\mu_1
\end{aligned}
\end{equation}

For any \((\bm{z},\bm{z}')\), the optimal path action \(\mathcal{A}(\bm{z},\bm{z}')\) is attained by the travelling pair \(\dot{\bm{x}}_t=\bm{u}_t^1(\bm{x}_t,\bm{y}_t|\bm{z},\bm{z}'),\dot{\bm{y}}_t=\bm{u}_t^2(\bm{x}_t,\bm{y}_t|\bm{z},\bm{z}')\). 
\begin{equation}
\begin{aligned}
\mathcal{A}(\bm{z},\bm{z}') &=\inf_{\bm{x}_t,\bm{y}_t} \int_0^1\mathcal{L}(t,\bm{x}_t,\bm{y}_t,\dot{\bm{x}}_t,\dot{\bm{y}}_t)\mathrm{d}t\\
&=\int_0^1\mathcal{L}(t,\bm{x}_t,\bm{y}_t,\bm{u}_t^1(\bm{x}_t,\bm{y}_t|\bm{z},\bm{z}'),\bm{u}_t^2(\bm{x}_t,\bm{y}_t|\bm{z},\bm{z}'))\mathrm{d}t\\
&=\int_0^1\int_{\mathcal{X}^2}\mathcal{L}(t,\bm{x},\bm{y},\bm{u}_t^1(\bm{x},\bm{y}|\bm{z},\bm{z}'),\bm{u}_t^2(\bm{x},\bm{y}|\bm{z},\bm{z}'))\pi_t(\bm{x},\bm{y}|\bm{z},\bm{z}')\mathrm{d}\bm{x}\mathrm{d}\bm{y}\mathrm{d}t
\end{aligned}
\end{equation}
where \(\pi_t(\bm{x},\bm{y}|\bm{z},\bm{z}')\) is the conditional Dirac probability path generated by the travelling pair velocity, which is
\begin{equation}
\begin{aligned}
&\partial_t\pi_t(\bm{x},\bm{y}|\bm{z},\bm{z}')+\nabla_{\bm{x}}\cdot(\pi_t(\bm{x},\bm{y}|\bm{z},\bm{z}')\bm{u}_t^1(\bm{x}_t,\bm{y}_t|\bm{z},\bm{z}'))+\nabla_{\bm{y}}\cdot(\pi_t(\bm{x},\bm{y}|\bm{z},\bm{z}')\bm{u}_t^2(\bm{x}_t,\bm{y}_t|\bm{z},\bm{z}'))=0\\
&\pi_0(\bm{x},\bm{y}|\bm{z},\bm{z}')=\delta_{(\bm{x}_0,\bm{y}_0)}(\bm{x},\bm{y}),\quad \pi_1(\bm{x},\bm{y}|\bm{z},\bm{z}')=\delta_{(\bm{x}_1,\bm{y}_1)}(\bm{x},\bm{y})
\end{aligned}
\end{equation}
Therefore, \(\text{QOT}_D(\mu_0,\mu_1)=\text{QOT}_S(\mu_0,\mu_1)\). 

For any admissible coupling \(\gamma\), 
\begin{equation}
\begin{aligned}
&\int_{\mathcal{X}^4} \mathcal{A}(\bm{z},\bm{z}')\gamma(\bm{z})\gamma(\bm{z}')\mathrm{d} \bm{z}\mathrm{d} \bm{z}'\\
&=\int\mathcal{L}(t,\bm{x},\bm{y},\bm{u}_t^1(\bm{x},\bm{y}|\bm{z},\bm{z}'),\bm{u}_t^2(\bm{x},\bm{y}|\bm{z},\bm{z}'))\pi_t(\bm{x},\bm{y}|\bm{z},\bm{z}')\gamma(\bm{z})\gamma(\bm{z}')\mathrm{d}\bm{x}\mathrm{d}\bm{y}\mathrm{d}t\mathrm{d} \bm{z}\mathrm{d} \bm{z}'\\
&=\int\Big(\int\mathcal{L}(t,\bm{x},\bm{y},\bm{u}_t^1(\bm{x},\bm{y}|\bm{z},\bm{z}'),\bm{u}_t^2(\bm{x},\bm{y}|\bm{z},\bm{z}'))\frac{\pi_t(\bm{x},\bm{y}|\bm{z},\bm{z}')\gamma(\bm{z})\gamma(\bm{z}')}{\pi_t(\bm{x},\bm{y})}\mathrm{d} \bm{z}\mathrm{d} \bm{z}'\Big)\pi_t(\bm{x},\bm{y})\mathrm{d}\bm{x}\mathrm{d}\bm{y}\mathrm{d}t\\
&\ge \int\mathcal{L}(t,\bm{x},\bm{y},\bm{u}_t^1(\bm{x},\bm{y}),\bm{u}_t^2(\bm{x},\bm{y}))\pi_t(\bm{x},\bm{y})\mathrm{d}\bm{x}\mathrm{d}\bm{y}\mathrm{d}t \qquad(\text{Jensen's inequality})\\
&\ge\text{QOT}_{BB}(\mu_0,\mu_1)\qquad(\text{Take infimum})
\end{aligned}
\end{equation}
By taking infimum w.r.t \(\gamma\), we have \(\text{QOT}_S(\mu_0,\mu_1)\ge\text{QOT}_{BB}(\mu_0,\mu_1)\).
\end{proof}

\subsection{Proof of Proposition \ref{prop:SUDO path}}
\label{pf:SUDO path}
\textbf{Proposition \ref{prop:SUDO path}.}
\textit{The optimal \(\bm{c}_t\) is \(\bm{c}_t=(1-t)\bm{c}_0+t\bm{c}_1\). If \(\Phi=\Phi(\Vert\bm{q}\Vert)\), \(\bm{q}_0\nparallel\bm{q}_1\), then \(\bm{q}_t\in\text{span}\{\bm{q}_0,\bm{q}_1\}\).}

\begin{proof}
Recall the Lagrangian
\begin{equation}
\mathcal{L}(t,\bm{x},\bm{y},\dot{\bm{x}},\dot{\bm{y}})=\Vert\dot{\bm{c}}_t\Vert^2+\frac{1}{4}\Vert\dot{\bm{q}}_t\Vert^2+\lambda\Phi(\bm{q},\dot{\bm{q}}).
\end{equation}
The Euler-Lagrange equation is
\begin{equation}
\left\{
\begin{aligned}
&2\ddot{\bm{c}}_t=\bm{0}\qquad&(\frac{\mathrm{d}}{\mathrm{d}t}\frac{\partial\mathcal{L}}{\partial\dot{\bm{c}}}=\frac{\partial\mathcal{L}}{\partial\bm{c}})\\
&\frac{1}{2}\ddot{\bm{q}}_t+\lambda\frac{\mathrm{d}}{\mathrm{d}t}\frac{\partial\Phi}{\partial\dot{\bm{q}}}=\lambda\frac{\partial\Phi}{\partial\bm{q}}\qquad&(\frac{\mathrm{d}}{\mathrm{d}t}\frac{\partial\mathcal{L}}{\partial\dot{\bm{q}}}=\frac{\partial\mathcal{L}}{\partial\bm{q}})
\end{aligned}
\right .
\end{equation}
The first equation yields \(\bm{c}_t=(1-t)\bm{c}_0+t\bm{c}_1\). When \(\Phi\) is a radial function of \(\bm{q}\), the second equation reduces to 
\begin{equation}
\frac{1}{2}\ddot{\bm{q}}_t=\lambda\frac{\partial\Phi}{\partial\bm{q}}=\lambda\Phi'(\Vert\bm{q}_t\Vert)\frac{\bm{q}_t}{\Vert\bm{q}_t\Vert}
\end{equation}
That means the acceleration \(\ddot{\bm{q}}_t\) is parallel to \(\bm{q}_t\), and hence the angular momentum is conserved. Therefore, \(\bm{q}_t\) is constrained on a plane. When \(\bm{q}_0\nparallel\bm{q}_1\), we have \(\bm{q}_t\in\text{span}\{\bm{q}_0,\bm{q}_1\}\).
\end{proof}

\subsection{Proof of Proposition \ref{prop:RK path}}
\label{pf:RK path}
\textbf{Proposition \ref{prop:RK path}.}
\textit{If \(\Phi=|\frac{\mathrm{d}}{\mathrm{d}t}\phi(\bm{q}_t)|^2\), \(\Phi\) is convex w.r.t \(\dot{\bm{q}}_t\). If \(\Phi=|\frac{\mathrm{d}}{\mathrm{d}t}\Vert\bm{q}_t\Vert|^2\), \(\bm{q}_t\) has analytic solution.}

\begin{proof}
By direct calculation,
\begin{equation}
\begin{aligned}
\Phi=|\frac{\mathrm{d}}{\mathrm{d}t}\phi(\bm{q}_t)|^2=|\nabla_{\bm{q}}\phi(\bm{q}_t)^{\mathrm{T}}\dot{\bm{q}}_t|^2=\dot{\bm{q}}_t^{\mathrm{T}}[\nabla_{\bm{q}}\phi(\bm{q}_t)\nabla_{\bm{q}}\phi(\bm{q}_t)^{\mathrm{T}}]\dot{\bm{q}}_t
\end{aligned}
\end{equation}
Therefore, it is convex w.r.t \(\dot{\bm{q}}_t\). When \(\Phi=|\frac{\mathrm{d}}{\mathrm{d}t}\Vert\bm{q}_t\Vert|^2\), the action of \((\bm{q},\dot{\bm{q}})\) is
\begin{equation}
\label{eq:q action}
\int_0^1\frac{1}{4}\Vert\dot{\bm{q}}_t\Vert^2+\lambda|\frac{\mathrm{d}}{\mathrm{d}t}\Vert\bm{q}_t\Vert|^2\mathrm{d}t.
\end{equation}
The corresponding Euler-Lagrange equation is
\begin{equation}
\frac{1}{2}\ddot{\bm{q}}_t+\lambda\frac{\mathrm{d}}{\mathrm{d}t}\frac{\partial\Phi}{\partial\dot{\bm{q}}}=\lambda\frac{\partial\Phi}{\partial\bm{q}}.
\end{equation}
By direct calculation, we have
\begin{equation}
\frac{\partial\Phi}{\partial\dot{\bm{q}}}=\frac{2\bm{q}_t^{\mathrm{T}}\dot{\bm{q}}_t}{\Vert\bm{q}_t\Vert^2}\bm{q}_t,\qquad\frac{\partial\Phi}{\partial\bm{q}}=\frac{2\bm{q}_t^{\mathrm{T}}\dot{\bm{q}}_t}{\Vert\bm{q}_t\Vert^2}\dot{\bm{q}}_t-\frac{2(\bm{q}_t^{\mathrm{T}}\dot{\bm{q}}_t)^2}{\Vert\bm{q}_t\Vert^4}\bm{q}_t.\
\end{equation}
Therefore, 
\begin{equation}
\frac{\mathrm{d}}{\mathrm{d}t}\frac{\partial\Phi}{\partial\dot{\bm{q}}}=2\Big(\frac{\Vert\dot{\bm{q}}_t\Vert^2+\bm{q}_t^{\mathrm{T}}\ddot{\bm{q}}_t}{\Vert\bm{q}_t\Vert^2}-2\frac{(\bm{q}_t^{\mathrm{T}}\dot{\bm{q}}_t)^2}{\Vert\bm{q}_t\Vert^4}\Big)\bm{q}_t+\frac{2\bm{q}_t^{\mathrm{T}}\dot{\bm{q}}_t}{\Vert\bm{q}_t\Vert^2}\dot{\bm{q}}_t.
\end{equation} 
The Euler-Lagrange equation reduces to
\begin{equation}
\frac{1}{2}\ddot{\bm{q}}_t+2\lambda \Big(\frac{\Vert\dot{\bm{q}}_t\Vert^2+\bm{q}_t^{\mathrm{T}}\ddot{\bm{q}}_t}{\Vert\bm{q}_t\Vert^2}-\frac{(\bm{q}_t^{\mathrm{T}}\dot{\bm{q}}_t)^2}{\Vert\bm{q}_t\Vert^4}\Big)\bm{q}_t=\bm{0}.
\end{equation}
This leads to the conservation of angular momentum, therefore \(\bm{q}_t\) is again constrained on the plane spanned by \(\bm{q}_0,\bm{q}_1\). For convenience, we define \(r_t=\Vert\bm{q}_t\Vert,\alpha=\arccos\frac{\langle\bm{q}_0,\bm{q}_1\rangle}{\Vert\bm{q}_0\Vert\Vert\bm{q}_1\Vert}\), and the polar coordinates on that plane.
\begin{equation}
\bm{q}_0=r_0(1,0),\qquad \bm{q}_1=r_1(\cos\alpha,\sin\alpha),\qquad\bm{q}_t=r_t(\cos\theta_t,\sin\theta_t)
\end{equation}
Under this representation, \(\dot{\bm{q}}_t=\dot{r}_t(\cos\theta_t,\sin\theta_t)+r_t\dot{\theta}_t(-\sin\theta_t,\cos\theta_t)\), \(|\dot{\bm{q}}_t|^2=\dot{r}_t^2+r_t^2\dot{\theta}_t^2\). The action (\ref{eq:q action}) becomes
\begin{equation}
\int_0^1\frac{1}{4}r_t^2\dot{\theta}_t^2+(\frac{1}{4}+\lambda)\dot{r}_t^2\mathrm{d}t.
\end{equation}
Write \(a=\frac{1}{4}+\lambda,k=\sqrt{\frac{1}{1+4\lambda}}\in(0,1]\), and define \(\bm{w}_t=\sqrt{a}r_t(\cos k\theta_t,\sin k\theta_t)\), we have
\begin{equation}
\Vert\dot{\bm{w}}_t\Vert^2=a(\dot{r}_t^2+k^2r_t^2\dot{\theta}_t^2)=\frac{1}{4}r_t^2\dot{\theta}_t^2+(\frac{1}{4}+\lambda)\dot{r}_t^2.
\end{equation}
Therefore, the minimization problem reduces to
\begin{equation}
\inf_{\bm{w}_t}\int_0^1\Vert\dot{\bm{w}}_t\Vert^2\mathrm{d}t\quad \text{s.t.}\quad \bm{w}_0=\sqrt{a}r_0(1,0),\bm{w}_1=\sqrt{a}r_1(\cos k\alpha,\sin k\alpha).
\end{equation}
Since \(\alpha\in(0,\pi)\) and \(k\in(0,1]\), the optimal \(\bm{w}_t\) is the displacement interpolation 
\begin{equation}
\label{eq:analytic path}
\bm{w}_t=(1-t)\bm{w}_0+t\bm{w}_1=\sqrt{a}((1-t)r_0+tr_1\cos k\alpha,tr_1\sin k\alpha).    
\end{equation}
By direct calculation, we have
\begin{equation}
\label{eq:analytic action}
\Vert\dot{\bm{w}}_t\Vert^2=a(r_0^2+r_1^2-2r_0r_1\cos k\alpha).
\end{equation}
Therefore, given the conditional variables \(\bm{z}=(\bm{x}_0,\bm{x}_1),\bm{z}'=(\bm{y}_0,\bm{y}_1)\), we can obtain the analytic solution of the travelling pair path and the path action via (\ref{eq:analytic path},\ref{eq:analytic action}). We first calculate \(\bm{c}_0=\frac{\bm{x}_0+\bm{y}_0}{2},\bm{c}_1=\frac{\bm{x}_1+\bm{y}_1}{2},\bm{q}_0=\bm{x}_0-\bm{y}_0,\bm{q}_1=\bm{x}_1-\bm{y}_1\) and \(r_0=\Vert\bm{q}_0\Vert,r_1=\Vert\bm{q}_1\Vert,\alpha=\arccos\frac{\langle\bm{q}_0,\bm{q}_1\rangle}{\Vert\bm{q}_0\Vert\Vert\bm{q}_1\Vert}\). Then we get the analytic path action
\begin{equation}
\label{eq:RK action}
\mathcal{A}(\bm{z},\bm{z}')=\Vert\bm{c}_1-\bm{c}_0\Vert^2+(\frac{1}{4}+\lambda)(r_0^2+r_1^2-2r_0r_1\cos k\alpha).
\end{equation}
Then, we calculate the analytic solution of the polar coordinates.
\begin{equation}
\left\{
\begin{aligned}
r_t&=\Vert\bm{w}_t\Vert/\sqrt{a}=\sqrt{(1-t)^2r_0^2+t^2r_1^2+2t(1-t)r_0r_1\cos k\alpha}\\
\theta_t&=\text{Arg}(\bm{w}_t)/k=\sqrt{1+4\lambda}\cdot\text{Arg}\Big((1-t)r_0+tr_1\cos k\alpha+i\cdot tr_1\sin k\alpha\Big)
\end{aligned}
\right .
\end{equation}
The analytic solution of C-Q decomposition can be further calculated.
\begin{equation}
\left\{
\begin{aligned}
\bm{c}_t&=(1-t)\bm{c}_0+t\bm{c}_1\\
\bm{q}_t&=r_t(\cos\theta_t,\sin\theta_t)
\end{aligned}
\right .
\end{equation}
The analytic solution of the travelling pair is
\begin{equation}
\left\{
\begin{aligned}
\bm{x}_t&=\bm{c}_t+\frac{1}{2}\bm{q}_t\\
\bm{y}_t&=\bm{c}_t-\frac{1}{2}\bm{q}_t
\end{aligned}
\right .,\qquad
\left\{
\begin{aligned}
\dot{\bm{x}}_t&=\bm{c}_1-\bm{c}_0+\frac{1}{2}\dot{\bm{q}}_t\\
\dot{\bm{y}}_t&=\bm{c}_1-\bm{c}_0-\frac{1}{2}\dot{\bm{q}}_t
\end{aligned}
\right .
\end{equation}
\end{proof}

\subsection{Proof of Theorem \ref{thm:GW-OT}}
\label{pf:GW-OT}
\textbf{Theorem \ref{thm:GW-OT}}
\textit{
\(\Phi=|\frac{\mathrm{d}}{\mathrm{d}t}\Vert\bm{q}_t\Vert|^2\). When \(\lambda=0\) and \(d\ge2\), the corresponding static QOT reduces to standard OT, while when \(\lambda\to+\infty\), it reduces to standard GW-OT: \(\lambda^{-1}\text{QOT}_{S}(\mu_0,\mu_1)\to\text{GW-OT}(\mu_0,\mu_1)\).}

\begin{proof}
See \ref{app:GW and FGW} for details.
\end{proof}

\subsection{Proof of Theorem \ref{thm:pair marginalization}}
\label{pf:pair marginalization}
\textbf{Theorem \ref{thm:pair marginalization}.}
\textit{Define the marginal velocity as \(\bm{u}^i_t(\bm{x},\bm{y})=\int\bm{u}^i_t(\bm{x},\bm{y}|\bm{z},\bm{z}')\frac{\pi_t(\bm{x},\bm{y}|\bm{z},\bm{z}')q(\bm{z},\bm{z}')}{\pi_t(\bm{x},\bm{y})}\mathrm{d}\bm{z}\mathrm{d}\bm{z}'\), the marginals satisfy the marginal continuity equation.}
\begin{equation}
\label{eq:pair continuity equation app}
\partial_t\pi_t+\nabla_{\bm{x}}\cdot(\pi_t\bm{u}^1_t)+\nabla_{\bm{y}}\cdot(\pi_t\bm{u}^2_t)=0
\end{equation}
\textit{If \(q=\gamma\otimes\gamma\) for some \(\gamma\in\Pi(\mu_0,\mu_1)\), the marginal velocity transports \(\mu_0\otimes\mu_0\) to \(\mu_1\otimes\mu_1\).}

\begin{proof}
Consider the probability path \(\pi_t(\bm{x},\bm{y})\) on the two-particle augmented space \(\mathcal{X}^2\), the corresponding velocity is \(\bm{u}_t(\bm{x},\bm{y})=\begin{pmatrix}
\bm{u}^1_t(\bm{x},\bm{y})\\
\bm{u}^2_t(\bm{x},\bm{y})
\end{pmatrix}\). Applying the marginalization theorem of standard conditional flow matching \citep{cfm_lipman,cfm_tong}, we have
\begin{equation}
\partial_t\pi_t+\nabla_{(\bm{x},\bm{y})}\cdot(\pi_t\bm{u}_t)=0
\end{equation}
By definition, \(\nabla_{(\bm{x},\bm{y})}\cdot(\pi_t\bm{u}_t)=\nabla_{\bm{x}}\cdot(\pi_t\bm{u}^1_t)+\nabla_{\bm{y}}\cdot(\pi_t\bm{u}^2_t)\). Therefore, identity (\ref{eq:pair continuity equation app}) holds. If we further have \(q=\gamma\otimes\gamma\) for some \(\gamma\in\Pi(\mu_0,\mu_1)\), then by definition, we have
\begin{equation}
\begin{aligned}
\pi_0(\bm{x},\bm{y})&=\int\pi_0(\bm{x},\bm{y}|\bm{z},\bm{z}')q(\bm{z},\bm{z}')\mathrm{d}\bm{z}\mathrm{d}\bm{z}'\\
&=\int\delta_{(\bm{x}_0,\bm{y}_0)}(\bm{x},\bm{y})\gamma(\bm{x}_0,\bm{x}_1)\gamma(\bm{y}_0,\bm{y}_1)\mathrm{d}\bm{x}_0\mathrm{d}\bm{x}_1\mathrm{d}\bm{y}_0\mathrm{d}\bm{y}_1\\
&=\int\delta_{\bm{x}_0}(\bm{x})\gamma(\bm{x}_0,\bm{x}_1)\mathrm{d}\bm{x}_0\mathrm{d}\bm{x}_1\int\delta_{\bm{y}_0}(\bm{y})\gamma(\bm{y}_0,\bm{y}_1)\mathrm{d}\bm{y}_0\mathrm{d}\bm{y}_1\\
&=\mu_0(\bm{x})\mu_0(\bm{y})
\end{aligned}
\end{equation}
where \(\bm{z}=(\bm{x}_0,\bm{x_1}),\bm{z}'=(\bm{y}_0,\bm{y}_1)\). Similarly, \(\pi_1(\bm{x},\bm{y})=\mu_1(\bm{x})\mu_1(\bm{y})\). Therefore, the marginal velocity transports \(\mu_0\otimes\mu_0\) to \(\mu_1\otimes\mu_1\).
\end{proof}

\subsection{Proof of Theorem \ref{thm:particle marginalization}}
\label{pf:particle marginalization}
\textbf{Theorem \ref{thm:particle marginalization}.}
\textit{The single particle marginals satisfy the continuity equation}
\begin{equation}
\partial_t\rho_t+\nabla_{\bm{x}}\cdot(\rho_t\bm{v}_t)=0
\end{equation}

\begin{proof}
Recall the definitions
\begin{equation}
\rho_t(\bm{x})=\int_{\mathcal{X}}\pi_t(\bm{x},\bm{y})\mathrm{d}\bm{y},\quad\bm{v}_t(\bm{x})=\int_{\mathcal{X}}\bm{u}^1_t(\bm{x},\bm{y})\pi_t(\bm{y}|\bm{x})\mathrm{d}\bm{y}=\int_{\mathcal{X}}\bm{u}^1_t(\bm{x},\bm{y})\frac{\pi_t(\bm{x},\bm{y})}{\rho_t(\bm{x})}\mathrm{d}\bm{y}.
\end{equation}
By direct calculation, we have
\begin{equation}
\begin{aligned}
\partial_t\rho_t(\bm{x})&=\partial_t\int_{\mathcal{X}}\pi_t(\bm{x},\bm{y})\mathrm{d}\bm{y}=\int_{\mathcal{X}}\partial_t\pi_t(\bm{x},\bm{y})\mathrm{d}\bm{y}\\
&=-\int_{\mathcal{X}}\nabla_{\bm{x}}\cdot(\pi_t(\bm{x},\bm{y})\bm{u}^1_t(\bm{x},\bm{y}))+\nabla_{\bm{y}}\cdot(\pi_t(\bm{x},\bm{y})\bm{u}^2_t(\bm{x},\bm{y}))\mathrm{d}\bm{y}\\
&=-\nabla_{\bm{x}}\cdot\int_{\mathcal{X}}\pi_t(\bm{x},\bm{y})\bm{u}^1_t(\bm{x},\bm{y})\mathrm{d}\bm{y}-\rho_t(\bm{x})\int_{\mathcal{X}}\nabla_{\bm{y}}\cdot\Big(\pi_t(\bm{y}|\bm{x})\bm{u}^2_t(\bm{x},\bm{y})\Big)\mathrm{d}\bm{y}\\
&=-\nabla_{\bm{x}}\cdot\int_{\mathcal{X}}\pi_t(\bm{x},\bm{y})\bm{u}^1_t(\bm{x},\bm{y})\mathrm{d}\bm{y}\\
&=-\nabla_{\bm{x}}\cdot\Big(\rho_t(\bm{x})\int_{\mathcal{X}}\frac{\pi_t(\bm{x},\bm{y})}{\rho_t(\bm{x})}\bm{u}^1_t(\bm{x},\bm{y})\mathrm{d}\bm{y}\Big)\\
&=-\nabla_{\bm{x}}\cdot(\rho_t(\bm{x})\bm{v}_t(\bm{x}))
\end{aligned}
\end{equation}
\end{proof}
\textbf{Remark.} If we further have \(\int q(\bm{z},\bm{z}')\mathrm{d}\bm{z}'=\gamma(\bm{z})\) for some \(\gamma\in\Pi(\mu_0,\mu_1)\), then it is easy to check \(\rho_0=\mu_0,\rho_1=\mu_1\). In this case, \(\bm{v}_t\) transports \(\mu_0\) to \(\mu_1\). We point out that this condition is slightly weaker than \(q=\gamma\otimes\gamma\) for some \(\gamma\in\Pi(\mu_0,\mu_1)\), which we used in theorem \ref{thm:pair marginalization}.

\subsection{Proof of Proposition \ref{prop:error bound}}
\label{pf:error bound}
\textbf{Proposition \ref{prop:error bound}.}
\textit{Let the true probability flow be \(\rho_t\) and the mean-field approximation be \(\hat{\rho}_t\). Assume \(\bm{b}_t,\bm{f}_t\) are both Lipschitz, and \(M=\underset{0<t\le1}{\max}\underset{\bm{x}}{\max}\mathcal{W}_1(\rho_t,\pi_t(\cdot|\bm{x}))\), then \(\forall 0<t\le1,\exists C_t>0\) such that \(\mathcal{W}_1(\rho_t,\hat{\rho}_t)\le C_tM\).}

\begin{proof}
Let \(L_b,L_f\) be the Lipschitz constant of \(\bm{b}_t,\bm{f}_t\), and \(L\ge L_b,L_f\). The true dynamics is
\begin{equation}
\partial_t\rho_t(\bm{x})+\nabla_{\bm{x}}\cdot[\rho_t(\bm{x})\big(\bm{b}_t(\bm{x})+\int_{\mathcal{X}}\bm{f}_t(\bm{x},\bm{y})\pi_t(\bm{y}|\bm{x})\mathrm{d}\bm{y}\big)]=0.
\end{equation}
The mean-field approximation is
\begin{equation}
\partial_t\hat{\rho}_t(\bm{x})+\nabla_{\bm{x}}\cdot[\hat{\rho}_t(\bm{x})\big(\bm{b}_t(\bm{x})+\int_{\mathcal{X}}\bm{f}_t(\bm{x},\bm{y})\hat{\rho}_t(\bm{y})\mathrm{d}\bm{y}\big)]=0.
\end{equation}
Let \(R_t(\bm{x})=\int_{\mathcal{X}}\bm{f}_t(\bm{x},\bm{y})[\pi_t(\bm{y}|\bm{x})-\hat{\rho}_t(\bm{y})]\mathrm{d}\bm{y}\) represent the difference between the two velocity fields. By the Lipschitz condition, we have
\begin{equation}
\begin{aligned}
\Vert R_t(\bm{x})\Vert&=\Vert\int_{\mathcal{X}}\bm{f}_t(\bm{x},\bm{y})[\pi_t(\bm{y}|\bm{x})-\hat{\rho}_t(\bm{y})]\mathrm{d}\bm{y}\Vert\\
&\le\Vert\int_{\mathcal{X}}\bm{f}_t(\bm{x},\bm{y})[\pi_t(\bm{y}|\bm{x})-\rho_t(\bm{y})]\mathrm{d}\bm{y}\Vert+\Vert\int_{\mathcal{X}}\bm{f}_t(\bm{x},\bm{y})[\rho_t(\bm{y})-\hat{\rho}_t(\bm{y})]\mathrm{d}\bm{y}\Vert\\
&\le L\Big(\mathcal{W}_1(\pi_t(\cdot|\bm{x}),\rho_t)+\mathcal{W}_1(\rho_t,\hat{\rho}_t)\Big)\\
&\le LM+L\mathcal{W}_1(\rho_t,\hat{\rho}_t).
\end{aligned}
\end{equation}
Since \(\hat{\rho}_0=\rho_0\), the difference between \(\rho_t,\hat{\rho}_t\) can be fully described by the difference between the velocity fields. For any starting point \(X_0=Y_0=\bm{x}\), the true dynamics is 
\begin{equation}
\frac{\mathrm{d}}{\mathrm{d}t}X_t=\bm{b}_t(X_t)+\int_{\mathcal{X}}\bm{f}_t(X_t,\bm{y})\pi_t(\bm{y}|X_t)\mathrm{d}\bm{y}\triangleq A(X_t).
\end{equation}
The mean-field dynamics is
\begin{equation}
\frac{\mathrm{d}}{\mathrm{d}t}Y_t=\bm{b}_t(Y_t)+\int_{\mathcal{X}}\bm{f}_t(Y_t,\bm{y})\hat{\rho}_t(\bm{y})\mathrm{d}\bm{y}\triangleq B(Y_t).
\end{equation}
Let \(Z_t=\Vert X_t-Y_t\Vert\), we have
\begin{equation}
\begin{aligned}
\frac{\mathrm{d}}{\mathrm{d}t}(X_t-Y_t)&=A(X_t)-B(Y_t)=A(X_t)-B(X_t)+B(X_t)-B(Y_t)\\
&=R_t(X_t)+[\bm{b}_t(X_t)-\bm{b}_t(Y_t)+\int_{\mathcal{X}}\Big(\bm{f}_t(X_t,\bm{y})-\bm{f}_t(Y_t,\bm{y})\Big)\hat{\rho}_t(\bm{y})\mathrm{d}\bm{y}]
\end{aligned}
\end{equation}
Therefore,
\begin{equation}
\frac{\mathrm{d}}{\mathrm{d}t}Z_t\le\Vert\frac{\mathrm{d}}{\mathrm{d}t}(X_t-Y_t)\Vert\le LM+L\mathcal{W}_1(\rho_t,\hat{\rho}_t)+2LZ_t.
\end{equation}
By coupling \(X_t,Y_t\) generated by the same starting point \(\bm{x}\) together, we obtain an inequality of  \(\mathcal{W}_1(\rho_t,\hat{\rho}_t)\).
\begin{equation}
\begin{aligned}
\mathcal{W}_1(\rho_t,\hat{\rho}_t)&=\inf_{\gamma}\int_{\mathcal{X}^2}\Vert\bm{x}-\bm{y}\Vert\gamma(\bm{x},\bm{y})\mathrm{d}\bm{x}\mathrm{d}\bm{y}\\
&\le\int_{\mathcal{X}}Z_t(\bm{x})\rho_0(\bm{x})\mathrm{d}\bm{x}.
\end{aligned}
\end{equation}
Therefore, it is sufficient to estimate the upper bound of \(Z_t\). By Gronwall's inequality, we have
\begin{equation}
\begin{aligned}
Z_t&\le e^{2Lt}L\int_0^te^{-2Ls}(M+\mathcal{W}_1(\rho_s,\hat{\rho}_s))\mathrm{d}s\\
&=\frac{e^{2Lt}-1}{2}M+e^{2Lt}L\int_0^te^{-2Ls}\mathcal{W}_1(\rho_s,\hat{\rho}_s)\mathrm{d}s.
\end{aligned}
\end{equation}
Therefore, we get another inequality of \(\mathcal{W}_1(\rho_t,\hat{\rho}_t)\).
\begin{equation}
\begin{aligned}
\mathcal{W}_1(\rho_t,\hat{\rho}_t)&\le\int_{\mathcal{X}}\Big(\frac{e^{2Lt}-1}{2}M+e^{2Lt}L\int_0^te^{-2Ls}\mathcal{W}_1(\rho_s,\hat{\rho}_s)\mathrm{d}s\Big)\rho_0(\bm{x})\mathrm{d}\bm{x}\\
&=\frac{e^{2Lt}-1}{2}M+e^{2Lt}L\int_0^te^{-2Ls}\mathcal{W}_1(\rho_s,\hat{\rho}_s)\mathrm{d}s.
\end{aligned}
\end{equation}
Or equivalently, we have
\begin{equation}
e^{-2Lt}\mathcal{W}_1(\rho_t,\hat{\rho}_t)\le\frac{1-e^{-2Lt}}{2}M+L\int_0^te^{-2Ls}\mathcal{W}_1(\rho_s,\hat{\rho}_s)\mathrm{d}s.   
\end{equation}
Let \(F(t)=\int_0^te^{-2Ls}\mathcal{W}_1(\rho_s,\hat{\rho}_s)\mathrm{d}s\), the inequality can be reformulated as 
\begin{equation}
\label{eq:lemma}
F'(t)\le\frac{1-e^{-2Lt}}{2}M+LF(t).
\end{equation}
By Gronwall's inequality again, we have
\begin{equation}
F(t)\le\frac{M}{6L}(2e^{Lt}-3+e^{-2Lt}).
\end{equation}
Plug in (\ref{eq:lemma}), we have
\begin{equation}
\mathcal{W}_1(\rho_t,\hat{\rho}_t)\le\frac{e^{3Lt}-1}{3}M.
\end{equation}

\end{proof}

\subsection{Proof of Theorem \ref{thm:solve dynamic QOT}}
\label{pf:solve dynamic QOT}
\textbf{Theorem \ref{thm:solve dynamic QOT}.}
\textit{The minimizer is \(\bm{\bm{u}^i_{\theta}}(\bm{x},\bm{y},t)=\bm{u}^i_t(\bm{x},\bm{y})\). When \(q=\gamma\otimes\gamma\) where \(\gamma\) is the optimal QOT coupling of (\ref{eq:static QOT}), and the travelling pair used is the minimizer of (\ref{eq:travelling pair}), then the learned velocity pair generates the probability flow of the dynamic QOT problem (\ref{eq:dynamic QOT}).}

\begin{proof}
Recall the conditional loss
\begin{equation}
\label{eq:pair loss app}
\mathcal{L}_{\text{pair}}(\bm{\theta})=
\mathbb{E}_{t\sim\mathcal{U}[0,1], (\bm{z},\bm{z}')\sim q(\bm{z},\bm{z}'), (\bm{x},\bm{y})\sim \pi_t(\bm{x},\bm{y}\vert \bm{z},\bm{z}')}\sum_{i=1}^2\left\| \bm{\bm{u}^i_{\theta}}(\bm{x},\bm{y},t) - \bm{u}^i_t(\bm{x},\bm{y}\vert\bm{z},\bm{z}') \right\|_2^2
\end{equation}
Similarly to what we have done in the proof of theorem \ref{thm:pair marginalization}, on the two-particle augmented space, the standard equivalence between the marginal flow matching loss and the conditional flow matching loss \citep{cfm_lipman,cfm_tong} tells us that minimizing the conditional loss (\ref{eq:pair loss app}) is equivalent to minimizing the marginal loss
\begin{equation}
\label{eq:marginal pair loss}
\mathcal{L}_{\text{pair-marginal}}(\bm{\theta})=
\mathbb{E}_{t\sim\mathcal{U}[0,1], (\bm{x},\bm{y})\sim \pi_t(\bm{x},\bm{y})}\sum_{i=1}^2\left\| \bm{\bm{u}^i_{\theta}}(\bm{x},\bm{y},t) - \bm{u}^i_t(\bm{x},\bm{y}) \right\|_2^2
\end{equation}
Therefore, the minimizer is \(\bm{\bm{u}^i_{\theta}}(\bm{x},\bm{y},t)=\bm{u}^i_t(\bm{x},\bm{y})\). By the pair marginalization theorem \ref{thm:pair marginalization}, we know that the minimizer generates the probability flow of the dynamic QOT.
\end{proof}

\textbf{Remark.} Although the learned velocity induces the same probability flow as the dynamic QOT construction, the mechanism by which it generates this flow is not exactly the same as in the definition of dynamic QOT. In dynamic QOT, the coupling first determines where each particle is transported, and the travelling-pair path then specifies how each pairwise transport is carried out. The resulting process is therefore generally non-Markovian, since its evolution depends on the prescribed endpoint. In realistic dynamics, however, future states are typically not known in advance. Instead, the evolution is determined by the current state, possibly together with its history. A natural approximation is therefore to replace the original process by a Markov process. The velocity pair learned by flow matching defines a pair of time-dependent velocity fields on the two-particle space, so particles evolve according to their current positions and time, making the resulting dynamics Markovian. The two processes share the same two-particle marginal distribution \(\pi_t(\bm{x},\bm{y})\) at every time \(t\). In this sense, the learned velocity pair can be viewed as providing a Markovian projection of the original endpoint-conditioned process.

\subsection{Proof of Theorem \ref{thm:particle loss}}
\label{pf:particle loss}
\textbf{Theorem \ref{thm:particle loss}.}
\textit{\(\nabla_{\bm{\theta}}\mathcal{L}_M(\bm{\theta})=\nabla_{\bm{\theta}}\mathcal{L}_C(\bm{\theta})\).}

\begin{proof}
We first recall the two losses.
\begin{equation}
\begin{aligned}
&\mathcal{L}_{M}(\bm{\theta)}=\mathbb{E}_{t\sim\mathcal{U}[0,1],\bm{x}\sim\rho_t(\bm{x})}\Vert\bm{v}_{\bm{\theta}}(\bm{x},t)-\bm{v}_t(\bm{x})\Vert^2\\
&\mathcal{L}_{C}(\bm{\theta)}=\mathbb{E}_{t\sim\mathcal{U}[0,1],(\bm{z},\bm{z}')\sim q(\bm{z},\bm{z}'),(\bm{x},\bm{y})\sim\pi_t(\bm{x},\bm{y}\vert\bm{z},\bm{z}')}\Vert\bm{v}_{\bm{\theta}}(\bm{x},t)-\bm{u}^1_t(\bm{x},\bm{y}\vert\bm{z},\bm{z}')\Vert^2
\end{aligned}
\end{equation}
By direct calculation, we have
\begin{equation}
\begin{aligned}
\mathcal{L}_{C}(\bm{\theta)}&=\mathbb{E}_{t\sim\mathcal{U}[0,1],(\bm{z},\bm{z}')\sim q(\bm{z},\bm{z}'),(\bm{x},\bm{y})\sim\pi_t(\bm{x},\bm{y}\vert\bm{z},\bm{z}')}\Vert\bm{v}_{\bm{\theta}}(\bm{x},t)-\bm{u}^1_t(\bm{x},\bm{y}\vert\bm{z},\bm{z}')\Vert^2\\
&=\int_0^1\int\int_{\mathcal{X}^2}\Vert\bm{v}_{\bm{\theta}}(\bm{x},t)-\bm{u}^1_t(\bm{x},\bm{y}\vert\bm{z},\bm{z}')\Vert^2\pi_t(\bm{x},\bm{y}|\bm{z},\bm{z}')q(\bm{z},\bm{z}')\mathrm{d}\bm{x}\mathrm{d}\bm{y}
\mathrm{d}\bm{z}\mathrm{d}\bm{z}'\mathrm{d}t\\
&=\int_0^1\int\int_{\mathcal{X}^2}\Vert\bm{v}_{\bm{\theta}}(\bm{x},t)\Vert^2\pi_t(\bm{x},\bm{y}|\bm{z},\bm{z}')q(\bm{z},\bm{z}')\mathrm{d}\bm{x}\mathrm{d}\bm{y}\mathrm{d}\bm{z}\mathrm{d}\bm{z}'\mathrm{d}t\\
&\quad+\int_0^1\int\int_{\mathcal{X}^2}\Vert\bm{u}^1_t(\bm{x},\bm{y}\vert\bm{z},\bm{z}')\Vert^2\pi_t(\bm{x},\bm{y}|\bm{z},\bm{z}')q(\bm{z},\bm{z}')\mathrm{d}\bm{x}\mathrm{d}\bm{y}\mathrm{d}\bm{z}\mathrm{d}\bm{z}'\mathrm{d}t\\
&\quad-2\int_0^1\int\int_{\mathcal{X}^2}\langle\bm{v}_{\bm{\theta}}(\bm{x},t),\bm{u}^1_t(\bm{x},\bm{y}\vert\bm{z},\bm{z}')\rangle\pi_t(\bm{x},\bm{y}|\bm{z},\bm{z}')q(\bm{z},\bm{z}')\mathrm{d}\bm{x}\mathrm{d}\bm{y}\mathrm{d}\bm{z}\mathrm{d}\bm{z}'\mathrm{d}t\\
&=\int_0^1\int_{\mathcal{X}^2}\Vert\bm{v}_{\bm{\theta}}(\bm{x},t)\Vert^2\pi_t(\bm{x},\bm{y})\mathrm{d}\bm{x}\mathrm{d}\bm{y}\mathrm{d}t+C\\
&\quad-2\int_0^1\int_{\mathcal{X}^2}\langle\bm{v}_{\bm{\theta}}(\bm{x},t),\int\bm{u}^1_t(\bm{x},\bm{y}\vert\bm{z},\bm{z}')\frac{\pi_t(\bm{x},\bm{y}|\bm{z},\bm{z}')q(\bm{z},\bm{z}')}{\pi_t(\bm{x},\bm{y})}\mathrm{d}\bm{z}\mathrm{d}\bm{z}'\rangle\pi_t(\bm{x},\bm{y})\mathrm{d}\bm{x}\mathrm{d}\bm{y}\mathrm{d}t\\
&=\int_0^1\int_{\mathcal{X}^2}\Vert\bm{v}_{\bm{\theta}}(\bm{x},t)\Vert^2\pi_t(\bm{x},\bm{y})\mathrm{d}\bm{x}\mathrm{d}\bm{y}\mathrm{d}t+C\\
&\quad-2\int_0^1\int_{\mathcal{X}^2}\langle\bm{v}_{\bm{\theta}}(\bm{x},t),\bm{u}^1_t(\bm{x},\bm{y})\rangle\pi_t(\bm{x},\bm{y})\mathrm{d}\bm{x}\mathrm{d}\bm{y}\mathrm{d}t\\
&=\int_0^1\int_{\mathcal{X}^2}\Vert\bm{v}_{\bm{\theta}}(\bm{x},t)\Vert^2\pi_t(\bm{x},\bm{y})\mathrm{d}\bm{x}\mathrm{d}\bm{y}\mathrm{d}t+C\\
&\quad-2\int_0^1\int_{\mathcal{X}}\langle\bm{v}_{\bm{\theta}}(\bm{x},t),\int_{\mathcal{X}}\bm{u}^1_t(\bm{x},\bm{y})\frac{\pi_t(\bm{x},\bm{y})}{\rho_t(\bm{x})}\mathrm{d}\bm{y}\rangle\rho_t(\bm{x})\mathrm{d}\bm{x}\mathrm{d}t\\
&=\int_0^1\int_{\mathcal{X}^2}\Vert\bm{v}_{\bm{\theta}}(\bm{x},t)\Vert^2\pi_t(\bm{x},\bm{y})\mathrm{d}\bm{x}\mathrm{d}\bm{y}\mathrm{d}t+C-2\int_0^1\int_{\mathcal{X}}\langle\bm{v}_{\bm{\theta}}(\bm{x},t),\bm{v}_t(\bm{x})\rangle\rho_t(\bm{x})\mathrm{d}\bm{x}\mathrm{d}t\\
&=\int_0^1\int_{\mathcal{X}}\Vert\bm{v}_{\bm{\theta}}(\bm{x},t)\Vert^2\rho_t(\bm{x})\mathrm{d}\bm{x}\mathrm{d}t-2\int_0^1\int_{\mathcal{X}}\langle\bm{v}_{\bm{\theta}}(\bm{x},t),\bm{v}_t(\bm{x})\rangle\rho_t(\bm{x})\mathrm{d}\bm{x}\mathrm{d}t\\
&\quad+\int_0^1\int_{\mathcal{X}}\Vert\bm{v}_t(\bm{x})\Vert^2\rho_t(\bm{x})\mathrm{d}\bm{x}\mathrm{d}t+C\\
&=\int_0^1\int_{\mathcal{X}}\Vert\bm{v}_{\bm{\theta}}(\bm{x},t)-\bm{v}_t(\bm{x})\Vert^2\rho_t(\bm{x})\mathrm{d}\bm{x}\mathrm{d}t+C\\
&=\mathcal{L}_M(\bm{\theta})+C
\end{aligned}
\end{equation}
where \(C\) is some constant independent of \(\bm{\theta}\). Therefore, \(\nabla_{\bm{\theta}}\mathcal{L}_M(\bm{\theta})=\nabla_{\bm{\theta}}\mathcal{L}_C(\bm{\theta})\).
\end{proof}

\subsection{Proof of Theorem \ref{thm:bf loss}}
\label{pf:bf loss}
\textbf{Theorem \ref{thm:bf loss}.}
\textit{The minimizer of (\ref{eq:bf loss}) is \(\bm{f}_t(\bm{x},\bm{y})\).}

\begin{proof}
We first recall the loss.
\begin{equation}
\label{eq:bf loss app}
\begin{aligned}
\mathcal{L}_f(\bm{\phi})&=
\mathbb{E}_{t\sim\mathcal{U}[0,1], (\bm{z},\bm{z}')\sim q(\bm{z},\bm{z}'), (\bm{x},\bm{y})\sim \pi_t(\bm{x},\bm{y}\vert \bm{z},\bm{z}')}\left\| \bm{\bm{f}_{\phi}}(\bm{x},\bm{y},t) - \bm{f}_t(\bm{x},\bm{y}\vert\bm{z},\bm{z}') \right\|_2^2
\end{aligned}
\end{equation}
Similarly to theorem \ref{thm:solve dynamic QOT}, we have
\begin{equation}
\begin{aligned}
\mathcal{L}_f(\bm{\phi})&=
\mathbb{E}_{t\sim\mathcal{U}[0,1], (\bm{x},\bm{y})\sim \pi_t(\bm{x},\bm{y})}\left\| \bm{\bm{f}_{\phi}}(\bm{x},\bm{y},t) - \bm{f}_t(\bm{x},\bm{y}) \right\|_2^2+C
\end{aligned}
\end{equation}
where \(C\) is a constant independent of \(\bm{\phi}\). Therefore, the minimizer is \(\bm{f}_t(\bm{x},\bm{y})\).
\end{proof}

\section{Implementation details}
\label{app:implementation details}
\subsection{Computation resources}
\label{app:Computation resources}
All experiments were performed on a personal computer with NVIDIA 5070 Ti GPU.

\subsection{Network architecture}
\label{app:Network architecture}
The velocity net \(\bm{v}_{\bm{\theta}}\) was parameterized as a MLP with 128 hidden channels, 5 hidden layers and SiLU activations \cite{SiLU}. It receives the scalar time \(t\) concatenated with expression and spatial state \((\bm{x},\bm{s})\), and outputs the velocity on both expression and spatial space. Across all experiments, the networks were optimized by Adam \citep{adam} with learning rate \(2\times10^{-3}\), weight decay \(10^{-5}\), and gradient norm clipping at 5.

\subsection{Lagrangian choice for experiments}
\label{app:Lagrangian choice}
As introduced in \ref{sec:tractable lagrangian}, we used a modality separated Lagrangian for numerical experiments. On both synthetic data and real data, we only added interaction term on spatial coordinates, and used standard kinetic energy on expression spaces. That is, let two cells represented by \((\bm{x},\bm{s}),(\bm{y},\bm{h})\) where \(\bm{x},\bm{y}\) are expressions, \(\bm{s},\bm{h}\) are spatial coordinates, we choose the Lagrangian as

\begin{equation}
\begin{aligned}
\mathcal{L}=&\eta_{\text{gene}}\left(\frac{1}{2}\Vert\dot{\bm{x}}\Vert^2+\frac{1}{2}\Vert\dot{\bm{y}}\Vert^2\right)+\lambda_{\text{gene}}|\frac{\mathrm{d}}{\mathrm{d}t}\Vert\bm{x}-\bm{y}\Vert|^2\\+&\eta_{\text{spatial}}\left(\frac{1}{2}\Vert\dot{\bm{s}}\Vert^2+\frac{1}{2}\Vert\dot{\bm{h}}\Vert^2\right)+\lambda_{\text{spatial}}|\frac{\mathrm{d}}{\mathrm{d}t}\Vert\bm{s}-\bm{h}\Vert|^2
\end{aligned}
\end{equation}

Since the corresponding QOT problem remains the same under global scaling, we set \(\eta_{\text{gene}}=1\). The interaction term is only added to spatial Lagrangian, hence \(\lambda_{\text{gene}}=0\). The Lagrangian reduces to

\begin{equation}
\begin{aligned}
\mathcal{L}=&\frac{1}{2}\Vert\dot{\bm{x}}\Vert^2+\frac{1}{2}\Vert\dot{\bm{y}}\Vert^2+\eta_{\text{spatial}}\left(\frac{1}{2}\Vert\dot{\bm{s}}\Vert^2+\frac{1}{2}\Vert\dot{\bm{h}}\Vert^2\right)+\lambda_{\text{spatial}}|\frac{\mathrm{d}}{\mathrm{d}t}\Vert\bm{s}-\bm{h}\Vert|^2
\end{aligned}
\end{equation}

There are only two hyperparameters \(\eta_{\text{spatial}},\lambda_{\text{spatial}}\). We conducted an ablation study in \ref{app:ablation}.

\subsection{Synthetic rotation data details}
\label{app:toy details}
In \ref{sec:experiment}, we evaluated TP-DATE on two synthetic rotation datasets. Here, we introduce the generation details. Each synthetic rotation dataset contains 2,000 paired cells from three cell types, with two spatial coordinates and 50 expression features. The target time point is a $90^\circ$ rotation of the source time point. Therefore, the true midpoint is the corresponding $45^\circ$ rotation. 



The source spatial coordinates were sampled from three noisy curved arms. For a cell of type
\(c\), we sampled
\[
r\sim \mathcal{U}(0.18,1),\qquad
\epsilon_\theta\sim\mathcal{N}(0,0.055^2),\qquad
\epsilon_r,\epsilon_q\sim\mathcal{N}(0,0.035^2),
\]
and set
\[
\theta=\theta_c+1.05r+\epsilon_\theta,\qquad
\bm{s}_0=(0.35+2.25r+\epsilon_r)
\begin{bmatrix}\cos\theta\\ \sin\theta\end{bmatrix}
+\epsilon_q
\begin{bmatrix}-\sin\theta\\ \cos\theta\end{bmatrix},
\]
where \((\theta_0,\theta_1,\theta_2)=(15^\circ,140^\circ,260^\circ)\).

After concatenating the three cell types, we randomly permuted the cells, subtracted the global spatial centroid, and divided all coordinates by \(\sqrt{n^{-1}\sum_i\|\bm{s}_{0,i}\|_2^2}\). The spatial trajectory was a rigid counterclockwise rotation,
\[
\bm{s}_{t,i}=R(90^\circ t)\bm{s}_{0,i},\qquad t\in[0,1],
\]
where \(R(\cdot)\) is the two-dimensional rotation matrix. Consequently, the observed source, held out midpoint, and target correspond to rotations of \(0^\circ\), \(45^\circ\), and \(90^\circ\), respectively, and all pairwise spatial distances are exactly preserved.

Source expression was initialized independently sampled from \(\mathcal{N}(0,0.18^2)\). To generate cell-type markers, we added \(2.2\) to genes \(1\!-\!12\), \(13\!-\!24\), and \(25\!-\!36\) for cell types 1, 2, and 3, respectively. Genes \(37\!-\!50\) additionally received \(0.75\) times the following 14 dimensional spatial signals, evaluated using the normalized source coordinates \((x_i,y_i)\), radius \(\rho_i=\sqrt{x_i^2+y_i^2}\), and angle \(\phi_i=\text{atan2}\frac{y_i}{x_i}\):
\[
\begin{split}
(&x_i,y_i,\rho_i,\sin\phi_i,\cos\phi_i,x_iy_i,x_i^2-y_i^2,
\sin 2\phi_i,\cos 2\phi_i,\\
&\rho_i\sin(\phi_i+0.7),\rho_i\cos(\phi_i-0.4),
\exp(-0.7\rho_i^2),\tanh(1.5x_i),\tanh(1.5y_i)).
\end{split}
\]
Each expression feature was then centered and divided by its population standard deviation. For Rotation, expression remained unchanged over time. For Distractor, we constructed a drift matrix \(\bm{\Delta}\) by adding \(0.45\) to genes \(37\!-\!41\), \(42\!-\!46\), and \(47\!-\!50\) for cell types 1, 2, and 3, respectively, followed by independent noise sampled from \(\mathcal{N}(0,0.16^2)\) added for each cell, all 50 features. Denoted the expression matrix at time \(t\) as \(\bm{e}_t\), we set
\[
\bm{e}_{0.5}=\bm{e}_0+\tfrac{1}{2}\bm{\Delta},\qquad
\bm{e}_1=\bm{e}_0+\bm{\Delta}.
\]
Finally, each feature was standardized using its mean and population standard deviation computed jointly over the concatenated source, midpoint, and target expression matrices. Therefore, the Distractor dataset contains a cell type dependent temporal expression shift, while its spatial dynamics remain the same rigid rotation as in Rotation.

For the TP-DATE experiments in the main text, the hyperparameter was set to \((\eta_{\mathrm{spatial}},\lambda_{\mathrm{spatial}})=(1,16).\)

\subsection{Real data details}
\label{app:real details}
\subsubsection{Developing Mouse brain}
The developing mouse brain data was adopted from \citep{MOSTA}. This dataset contains spatial transcriptomics sections from mouse embryonic development at eight different time points. We selected brain cells according to the cell annotations provided with the dataset and performed Leiden clustering \citep{Leiden} on gene expression space for visualization purpose only. We use the snapshots from day 14.5, 15.5, and 16.5, denoted E14.5, E15.5, and E16.5 with 17591, 17031, 17296 cells, respectively. We reduced the dimension by first selecting 1500 highly variable genes, then PCA to 50D. The spatial coordinates were linearly scaled to \([-1,1]\). The hold one out experiment was trained on E14.5, E16.5, and evaluated on E15.5. The hyperparameters are set to \((\eta_{\mathrm{spatial}},\lambda_{\mathrm{spatial}})=(100,0.01)\). An overview of the data is shown in figure \ref{fig:Mouse Brain overview}. Different colors stand for different Leiden clusters.

\begin{figure}[ht]
    \centering
    \includegraphics[width=0.7\linewidth]{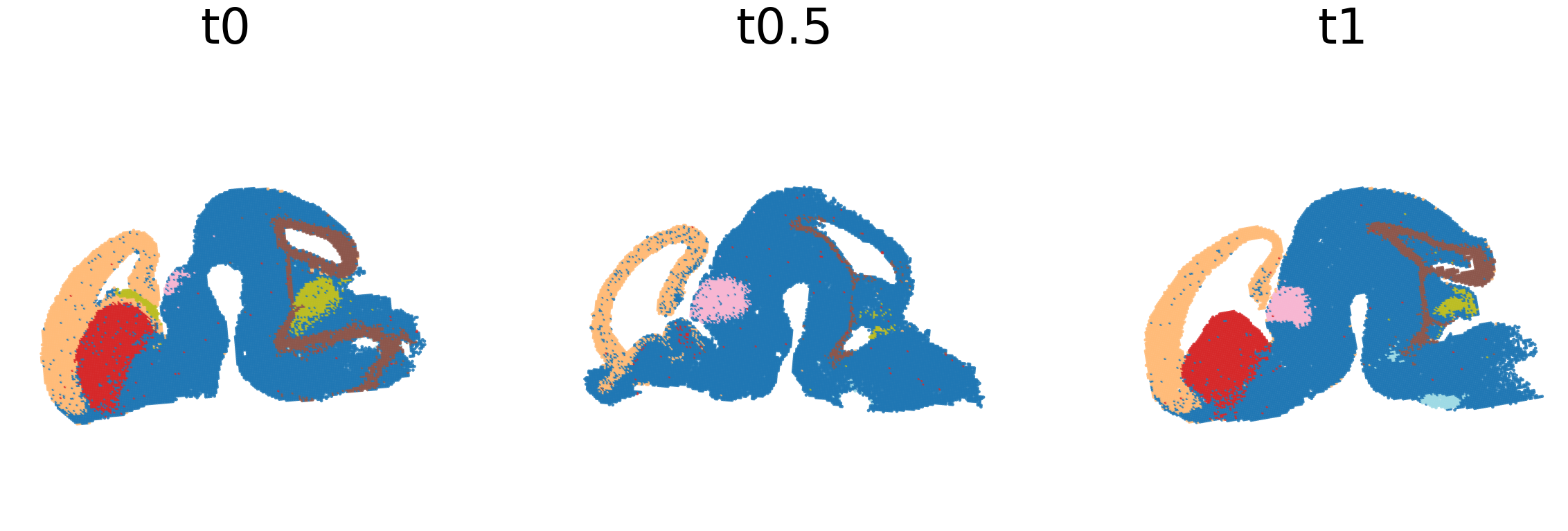}
    \caption{Mouse Brain overview}
    \label{fig:Mouse Brain overview}
\end{figure}

\subsubsection{ARTISTA}
The ARTISTA data was adopted from \citep{ARTISTA}. This dataset describes the brain regeneration process in axolotl following injury and contains spatial transcriptomics sections collected at seven different time points from Day 2 to Day 60 after injury. In the main text, we use snapshots at Day 2, Day 5, and Day 10, and subsample 7,500 cells per snapshot. The expression vectors were also dimension reduced to 50D by PCA. The spatial coordinates were linearly scaled to \([-1,1]\). For hold one out experiment, we trained on Day 2 and Day 10, and evaluated on Day 5. The hyperparameters are set to \((\eta_{\mathrm{spatial}},\lambda_{\mathrm{spatial}})=(100,0.01)\). The normalized evaluation time is therefore \(0.375\) instead of \(0.5\). An overview of the data is shown in figure \ref{fig:ARTISTA overview}. Different colors stand for Niche annotations provided with the dataset.

\begin{figure}[ht]
    \centering
    \includegraphics[width=0.7\linewidth]{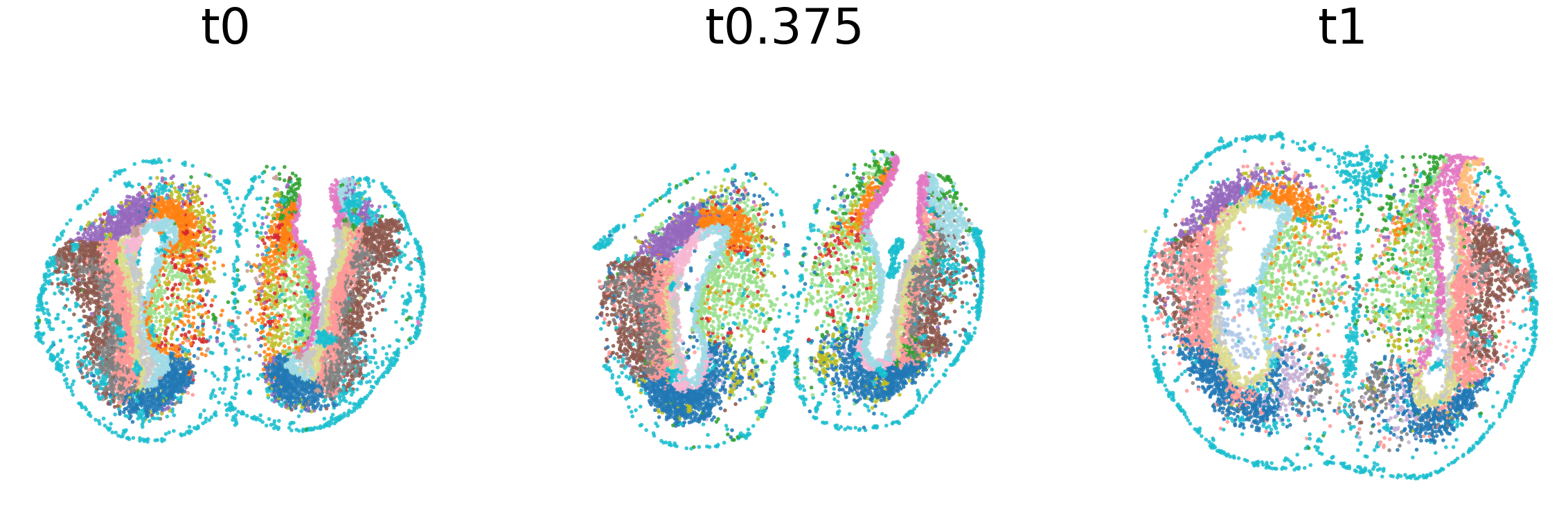}
    \caption{ARTISTA overview}
    \label{fig:ARTISTA overview}
\end{figure}

\subsubsection{Tumor}
The Tumor data was adopted from \citep{3DTumor}. This dataset contains five spatial transcriptomics sections of the same MC38 tumor collected at different depths (subQ-1 to subQ-5). We use subQ-3, subQ-4, and subQ-5 with 10,000 spatially sampled cells per snapshot. The expression vectors are also reduced to 50D via PCA. The spatial coordinates were linearly scaled to \([-1,1]\). For hold one out experiments, models are trained on subQ-3 and subQ-5, and evaluated on subQ-4. The hyperparameters are set to \((\eta_{\mathrm{spatial}},\lambda_{\mathrm{spatial}})=(0.01,1)\). An overview of the data is shown in figure \ref{fig:Tumor overview}. Different colors stand for Niche annotations provided with the dataset.

\begin{figure}[ht]
    \centering
    \includegraphics[width=0.7\linewidth]{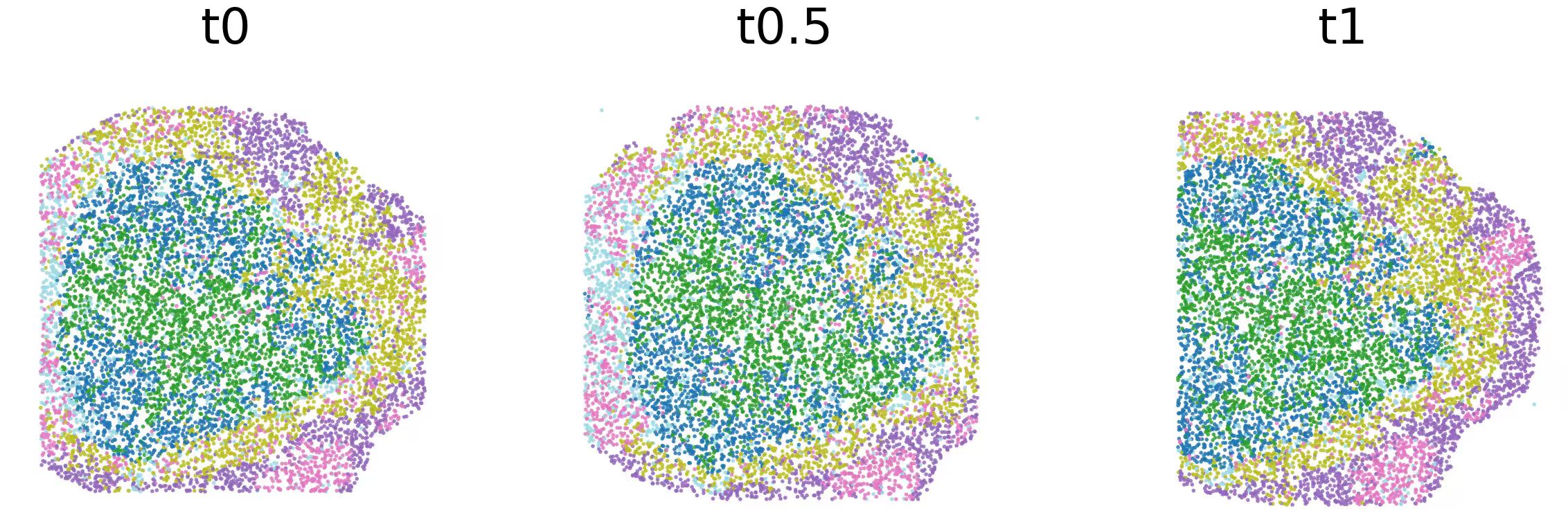}
    \caption{Tumor overview}
    \label{fig:Tumor overview}
\end{figure}

\subsection{Evaluation metrics}
\label{app:Evaluation metrics}
In the hold one out experiments on real data, we train the model on the two endpoint time points, infer the distribution at the intermediate time point, and compare the prediction with the ground truth. In the spatial transcriptomics setting, each cell is jointly characterized by its gene expression \(\bm{x}\) and spatial coordinate \(\bm{s}\), so that each snapshot can be regarded as a joint distribution \(p(\bm{x},\bm{s})\). At the intermediate time point, we evaluate the discrepancy between the ground truth distribution \(p\) and the predicted distribution \(\hat{p}\).

\subsubsection{Marginal Wassersteins}
To measure the discrepancy between two joint distribution, one can first measure the discrepancy between the marginal distributions. Let \(p_x(\bm{x})=\int p(\bm{x},\bm{s})\mathrm{d}\bm{s},p_s(\bm{s})=\int p(\bm{x},\bm{s})\mathrm{d}\bm{x}\) be the marginals on expression space and spatial space, the spatial Wasserstein distance and expression Wasserstein distance are defined as

\begin{equation}
\begin{aligned}
\text{Spatial }W_2^2(p,\hat{p})=\inf_{\gamma\in\Pi(p_s,\hat{p}_s)}\int \Vert\bm{s}-\bm{s}'\Vert^2\gamma(\bm{s},\bm{s}')\mathrm{d}\bm{s}\mathrm{d}\bm{s}'\\
\text{Expression }W_2^2(p,\hat{p})=\inf_{\gamma\in\Pi(p_x,\hat{p}_x)}\int \Vert\bm{x}-\bm{x}'\Vert^2\gamma(\bm{x},\bm{x}')\mathrm{d}\bm{x}\mathrm{d}\bm{x}'
\end{aligned}
\end{equation}
Intuitively, these marginal discrepancies measure how well can the model reconstruct the spatial or expression information solely. To further measure how well can the model reconstruct the spatial and expression information jointly, we introduce fused Wasserstein distance and spatially coupled expression MSE (SC-eMSE).

\subsubsection{Fused Wasserstein}
Fused \(W_2^2\) is defined as
\begin{equation}
\text{Fused }W_2^2(p,\hat{p})=\inf_{\gamma\in\Pi(p,\hat{p})}\int \Vert\bm{z}-\bm{z}'\Vert^2\gamma(\bm{z},\bm{z}')\mathrm{d}\bm{z}\mathrm{d}\bm{z}'
\end{equation}
where \(\bm{z}=(\bm{x},\alpha\bm{s})\). The expression vector \(\bm{x}\) was concatenated with a scaled spatial vector \(\alpha\bm{s}\), and the standard Wasserstein distance is calculated based on the concatenated vector. \(\alpha\) was introduced to balance the importance of \(\bm{x}\) and \(\bm{s}\), since they may have different dimension and scale. For all datasets, we choose \(\alpha\) with a common rule
\begin{equation}
\alpha=
\frac{\sum_{d=1}^{D_x}\operatorname{Std}(\bm{x}^d)}
{\sum_{d=1}^{D_s}\operatorname{Std}(\bm{s}^d)}.    
\end{equation}
where \(D_x,D_s\) are the dimensions of expression space and spatial space, \(\bm{x}^d,\bm{s}^d\) are the \(d\)-th components of \(\bm{x},\bm{s}\). Intuitively, this selection of \(\alpha\) balances the total standard deviation of expression and space. In hold one out experiments, the standard deviations are calculated only on the training time points.

\subsubsection{SC-eMSE}
To test whether expression is reconstructed at the correct spatial location, we additionally report spatially coupled expression MSE (SC-eMSE). Let \(\gamma\) be the optimal coupling between \(\rho,\hat{\rho}\), obtained using spatial coordinates \(\bm{s}\) alone with the spatial cost \(\Vert\bm{s}-\bm{s}'\Vert^2\), we define
\[
\operatorname{SC\text{-}eMSE}(\rho,\hat{\rho})=\frac{1}{D_x}\int\Vert\bm{x}-\bm{x}'\Vert^2\gamma(\bm{z},\bm{z}')\mathrm{d}\bm{z}\mathrm{d}\bm{z}'
\]
where \(\bm{z}=(\bm{x},\bm{s})\). The spatial coupling \(\gamma\) is fitted without expression information, while the transport cost \(\Vert\bm{x}-\bm{x}'\Vert^2\) is calculated without spatial information. Intuitively, SC-eMSE measures whether the predicted snapshot shares similar spatial expression patterns with the ground truth.

\subsubsection{Toydata metrics}
On rotation toys, the ground truth dynamics are known rigid rotations. Therefore, we can directly calculate the spatial MSE to measure the spatial level accuracy, instead of calculating distribution-level Wasserstein distance. At the unseen time point \(t\), let the ground truth spatial states be \(\bm{s}^\star\), the predicted spatial states be \(\hat{\bm{s}}\). The spatial MSE is defined as
\begin{equation}
\operatorname{MSE}_{s}=\frac{1}{n}\sum_{i=1}^n
\Vert\hat{\bm{s}}_i-\bm{s}_i^\star\Vert^2.   
\end{equation}

We also calculated the pair distortion to measure how well can the predicted dynamics preserve the spatial structure. We uniformly sampled \(M=6000\) pairs
\(\mathcal P=\{(i_r,k_r)\}_{r=1}^M\) where \(i_t\neq k_r\). The pair distortion is defined as 
\begin{equation}
\operatorname{PD}=\frac{1}{M}\sum_{r=1}^M
\frac{\left|\Vert\hat s_{i_r}-\hat s_{k_r}\Vert_2
-\Vert s_{i_r}^\star-s_{k_r}^\star\Vert_2\right|}
{\Vert s_{i_r}^\star-s_{k_r}^\star\Vert_2}.    
\end{equation}
Intuitively, it measures the average relative error of pairwise spatial distance. A lower PD means the predicted dynamics preserves the spatial structure better. In practice, \(\Vert s_{i_r}^\star-s_{k_r}^\star\Vert_2\) can be easily calculated at time \(0\)
\[
\Vert s_{i_r}^\star-s_{k_r}^\star\Vert_2=\Vert s_{i_r}^0-s_{k_r}^0\Vert_2
\]
since the true dynamics are rigid transformations. Note that, the ground truth pair distortions are zero on toy datasets, so lower values are meaningful here. But it is not the case of real datasets where nonzero biological deformation may
be correct, and the held out cells have no source-cell correspondence. Therefore, we only report pair distortions on toy datasets.

\subsection{GW-CFM and FGW-CFM baselines}
\label{app:GW-CFM}
In the main text, we compared TP-DATE with GW-CFM and FGW-CFM. These baselines are introduced as a naive dynamic extension of static GW-OT and FGW-OT. We used GW-OT coupling or FGW-OT coupling, and the displacement conditional path \(\bm{x}_t=(1-t)\bm{x}_0+t\bm{x}_1\) for conditional flow matching training. That is, for two probability densities \(\mu_0,\mu_1\), we first calculate the GW-OT coupling or FGW-OT coupling \(\gamma(\bm{x}_0,\bm{x}_1)\). Next, we parameterize a velocity network \(\bm{v}_{\bm{\theta}}(\bm{x},t)\), and train it by minimizing the conditional flow matching loss
\begin{equation}
\begin{aligned}
\label{eq:CFM app}
&\mathcal{L}_{\text{CFM}}(\bm{\theta})=
\mathbb{E}_{t\sim\mathcal{U}[0,1], (\bm{x}_0,\bm{x}_1)\sim \gamma(\bm{x}_0,\bm{x}_1), \bm{x}=(1-t)\bm{x}_0+t\bm{x}_1}\left\| \bm{\bm{v}_{\theta}}(\bm{x},t) - (\bm{x}_1-\bm{x}_0) \right\|_2^2.
\end{aligned}
\end{equation}
It can be viewed as replacing the OT coupling in OT-CFM by GW/FGW-OT coupling. Intuitively, it is a kind of linear interpolation between \(\mu_0,\mu_1\) induced by the corresponding coupling, therefore a natural choice for dynamic extension. For FGW-OT, we fused OT and GW-OT with weight \(7:3\).

\subsection{Static QOT solver}
\label{app:static QOT solver}
The static QOT problem is
\begin{equation}
\label{eq:static QOT app2}
\begin{aligned}
&\text{QOT}_S(\mu_0,\mu_1) =\inf_{\gamma\in\Pi(\mu_0,\mu_1)} \int_{\mathcal{X}^4} \mathcal{A}(\bm{z},\bm{z}')\gamma(\bm{z})\gamma(\bm{z}')\mathrm{d} \bm{z}\mathrm{d} \bm{z}'.\\
\end{aligned}
\end{equation}
In practice, \(\mu_0,\mu_1\) are represented by point cloud data \(\{X_i\}_{i=1:M},\{Y_j\}_{j=1:N}\). The conditional variables \(\bm{z}=(X_i,Y_j),\bm{z}'=(X_k,Y_l)\) are two end point pairs. The static cost in (\ref{eq:static QOT app2}) can be written as
\begin{equation}
\label{eq:discrete static QOT}
Q(\Gamma)=\sum A_{ij,kl}\Gamma_{ij}\Gamma_{kl}
\end{equation}
where \(\Gamma\in\{\Gamma\in\mathbb{R}_{\ge0}^{M\times N}|\Gamma\bm{1}=\frac{1}{M}\bm{1},\bm{1}^{\mathrm{T}}\Gamma=\frac{1}{N}\bm{1}^{\mathrm{T}}\}\) is the empirical coupling, \(A_{ij,kl}=A(X_i,Y_j,X_k,Y_l)\) is the path action. The static QOT problem is then formulated as
\begin{equation}
\min_{\Gamma}\sum A_{ij,kl}\Gamma_{ij}\Gamma_{kl}
\end{equation}
which is a quadratic programming w.r.t \(\Gamma\).
\subsubsection{Frank-Wolfe algorithm}
The Frank-Wolfe algorithm \citep{kerdoncuff2021sampled} is an iterative first-order optimization algorithm for convex constraint optimization. Though QOT problem may not be convex, it is standard and widely used in solving Gromov-Wasserstein type static QOT problem \citep{pot}. Consider a convex set \(D\) and an optimization problem
\begin{equation}
\min_{\bm{x}\in D}f(\bm{x})
\end{equation}
The Frank-Wolfe algorithm solves it by iteratively doing
\begin{equation}
\label{eq:Frank-Wolfe}
\begin{aligned}
&\text{Initialization}\quad \bm{x}_0\in D,\quad k=0\\
&\text{Step1.}\quad 
\bm{s}_k=\mathop{\arg \min}_{\bm{s}\in D} \bm{s}^{\mathrm{T}}\nabla f(\bm{x}_k)\\
&\text{Step2.}\quad \bm{x}_{k+1}=\bm{x}_k+\frac{2}{k+2}(\bm{s}_k-\bm{x}_k),\quad k=k+1\\
\end{aligned}
\end{equation}
In our case, with symmetric conditions such as \(\mathcal{L}\) is symmetric to \((\bm{x},\dot{\bm{x}}),(\bm{y},\dot{\bm{y}})\), it is easy to check \(A_{ij,kl}=A_{kl,ij}\). Therefore, \(\nabla Q(\Gamma)=2\sum A_{ij,kl}\Gamma_{kl}\). The Frank-Wolfe algorithm (\ref{eq:Frank-Wolfe}) can be realized as
\begin{equation}
\label{eq:Frank-Wolfe QOT}
\begin{aligned}
&\text{Initialization}\quad \Gamma^0,\quad n=0\\
&\text{Step1.}\quad G_{ij}^n=2\sum A_{ij,kl}\Gamma_{kl}^n\\
&\text{Step2.}\quad 
\Pi^{n}=\mathop{\arg \min}_{\Pi} \langle\Pi,G^n\rangle\\
&\text{Step3.}\quad \Gamma^{n+1}=\Gamma^n+\frac{2}{n+2}(\Pi^n-\Gamma^n),\quad n=n+1\\
\end{aligned}
\end{equation}
In step 1, we estimate the gradient \(G\). In step 2, we solve a standard optimal transport subproblem. In step 3, we update \(\Gamma\). The OT subproblem is relaxed and solved by Sinkhorn.

\subsubsection{Gradient estimation}
The full computation of step 1 is \(\mathcal{O}(M^2N^2)\) which is expensive. To reduce the computational cost, we estimate the gradient via Monte Carlo.
\begin{equation}
\hat{G}_{ij}=\frac{2}{R}\sum_{h=1}^RA_{ij,k_hl_h},\quad (k_h,l_h)\overset{i.i.d.}{\sim}\Gamma
\end{equation}
The computational complexity is \(\mathcal{O}(RMN)\) where \(R\) is the number of Monte Carlo samples. In the main text, we use \(R=64\).

\subsubsection{Sparse KNN}
If \(\mathcal{O}(RMN)\) is still too expensive, we can further construct a bidirectional KNN graph between \(\{X_i\}_{i=1:M},\{Y_j\}_{j=1:N}\) and restrict \(\Gamma\) on the graph. Intuitively, \(X_i\) is more likely to transport to some \(Y_j\) near to itself rather than far away from itself. Therefore, we construct two KNN graphs on gene expression space. Each \(X_i\) is linked to its \(K\) nearest \(Y_j\), and each \(Y_j\) is also linked to its \(K\) nearest \(X_i\). We take the union of the edges, denoted \(E_0\). That is, \((i,j)\in E_0\) means \(X_i\) is one of the \(K\) nearest neighbors of \(Y_j\), or \(Y_j\) is one of the \(K\) nearest neighbors of \(X_i\). It is easy to check \(|E_0|\le K(M+N)\). We use bidirectional KNN rather than one way KNN so that all points have neighbors. We then want to restrict the coupling \(\Gamma\) on \(E_0\), that is
\begin{equation}
\label{eq:sparse gamma 0}
\Gamma\in\mathbb{R}_{\ge0}^{M\times N},\quad\Gamma\bm{1}=\frac{1}{M}\bm{1},\quad \quad\bm{1}^{\mathrm{T}}\Gamma=\frac{1}{N}\bm{1}^{\mathrm{T}},\quad \Gamma_{ij}=0,(i,j)\notin E_0.
\end{equation}
Although the bidirectional construction guarantees that every row and column has at least one admissible edge, this alone does not necessarily guarantee the existence of a coupling with the prescribed marginals. We therefore augment \(E_0\) by a small number of additional edges to guarantee feasibility. Specifically, initialize residual masses
\[
r_i=\frac{1}{M},\qquad s_j=\frac{1}{N},
\]
and an auxiliary coupling \(\bar{\Gamma}=0\). We first scan all edges \((i,j)\in E_0\) in dictionary order. For each edge, we assign
\[
\delta_{ij}=\min\{r_i,s_j\},\qquad
\bar{\Gamma}_{ij}\leftarrow \bar{\Gamma}_{ij}+\delta_{ij},
\]
and update \(r_i\leftarrow r_i-\delta_{ij}\), \(s_j\leftarrow s_j-\delta_{ij}\). After all KNN edges have been scanned, if residual masses remain, we repeatedly select a pair \(i,j\) with \(r_i>0\) and \(s_j>0\), add the edge \((i,j)\), and assign
\[
\delta=\min\{r_i,s_j\}.
\]
Each added edge exhausts at least one residual row or column. Therefore, at most \(M+N-1\) additional edges are required. Denoting these repair edges by \(E_{\mathrm{rep}}\), we finally set
\[
E=E_0\cup E_{\mathrm{rep}}.
\]
By construction, \(\bar{\Gamma}\) satisfies
\[
\bar{\Gamma}\bm{1}=\frac{1}{M}\bm{1},
\qquad
\bm{1}^{\mathrm{T}}\bar{\Gamma}
=\frac{1}{N}\bm{1}^{\mathrm{T}},
\qquad
\operatorname{supp}(\bar{\Gamma})\subseteq E,
\]
which explicitly guarantees that the sparse coupling constraint is feasible. Moreover,
\[
|E|
\le K(M+N)+M+N-1,
\]
and the feasibility repair requires only
\(\mathcal{O}(K(M+N))\) additional computation. We then restrict the coupling \(\Gamma\) on the augmented sparse support \(E\), that is
\begin{equation}
\label{eq:sparse gamma}
\Gamma\in\mathbb{R}_{\ge0}^{M\times N},\quad
\Gamma\bm{1}=\frac{1}{M}\bm{1},\quad
\bm{1}^{\mathrm{T}}\Gamma=\frac{1}{N}\bm{1}^{\mathrm{T}},\quad
\Gamma_{ij}=0,\ (i,j)\notin E.
\end{equation}
Mathematically, solving the OT subproblem under constraint (\ref{eq:sparse gamma}) is equivalent to solving a standard OT problem with a masked cost
\begin{equation}
\left\{
\begin{aligned}
&\tilde{G}_{ij}=G_{ij},\quad (i,j)\in E\\
&\tilde{G}_{ij}=+\infty,\quad (i,j)\notin E
\end{aligned}
\right .
\end{equation}
In Sinkhorn iteration, infinite cost leads to zero element in Gibbs kernel
\begin{equation}
\left\{
\begin{aligned}
&K_{ij}=\exp(-\tilde{G}_{ij}/\tau)=\exp(-G_{ij}/\tau),\quad(i,j)\in E\\
&K_{ij}=\exp(-\tilde{G}_{ij}/\tau)=0,\quad(i,j)\notin E\\
\end{aligned}
\right .
\end{equation}
where \(\tau\) is the coefficient of entropy relaxation. Further, a zero element in Gibbs kernel leads to zero element in updated coupling
\begin{equation}
\Gamma^{\text{new}}_{ij}=u_iK_{ij}v_j=0,\quad(i,j)\notin E
\end{equation}
where \(u,v\) are dual variables. The analysis above shows that we can only estimate \(G_{ij}\) and update \(\Gamma_{ij}\) for \((i,j)\in E\). For \((i,j)\notin E\), \(K_{ij},\Gamma_{ij}\) are automatically zero. The complexity of gradient estimation is therefore further reduced to \(\mathcal{O}(RK(M+N))\). In the main text, we use \(\tau=10^{-4}, K=64\). We conducted an ablation study for \(K\) in \ref{app:ablation}.

\section{Additional results}
\subsection{Ablation studies}
\label{app:ablation}
\subsubsection{Loss weights}
To demonstrate the robustness of TP-DATE, we conducted an ablation study for hyperparameters \((\eta_{\text{spatial}},\lambda_{\text{spatial}})\) \ref{app:Lagrangian choice} on ARTISTA. As shown in Table \ref{tab:artista-sensitivity}, TP-DATE remains robust under different hyperparameter settings. We did not run different random seeds for this ablation experiment. All results are reported in one run.
\begin{table}[ht]
  \centering
  \caption{TP-DATE hyperparameter sensitivity on ARTISTA.}
  \label{tab:artista-sensitivity}
  \resizebox{0.8\linewidth}{!}{%
\begin{tabular}{cccccc}
\toprule
$\eta_{\mathrm{spatial}}$ & $\lambda_{\mathrm{spatial}}$ & Spatial $W_2$ ($\downarrow$) & Expression $W_2$ ($\downarrow$) & SC-eMSE ($\downarrow$) & Balanced fused $W_2$ ($\downarrow$) \\
\midrule
$10^{-2}$ & $10^{-2}$ & 0.10021 & 5.39525 & 0.89497 & 8.34454 \\
$10^{-2}$ & $10^{-1}$ & 0.10339 & 5.40739 & 0.90109 & 8.48391 \\
$10^{-2}$ & $1$       & 0.09922 & 5.41446 & 0.90192 & 8.35143 \\
$10^{-2}$ & $10$      & 0.10366 & 5.42960 & 0.90873 & 8.49660 \\
$10^{-2}$ & $10^{2}$  & 0.11070 & 5.35272 & 0.89337 & 8.75079 \\
\midrule
$10^{-1}$ & $10^{-2}$ & 0.10039 & 5.38527 & 0.89438 & 8.34612 \\
$10^{-1}$ & $10^{-1}$ & 0.10099 & 5.36024 & 0.88781 & 8.35834 \\
$10^{-1}$ & $1$       & 0.10213 & 5.35141 & 0.88634 & 8.38445 \\
$10^{-1}$ & $10$      & 0.10323 & 5.40274 & 0.90401 & 8.48153 \\
$10^{-1}$ & $10^{2}$  & 0.10297 & 5.40772 & 0.91005 & 8.49803 \\
\midrule
$1$ & $10^{-2}$ & 0.09734 & 5.40196 & 0.89723 & 8.24340 \\
$1$ & $10^{-1}$ & 0.09875 & 5.42559 & 0.90137 & 8.29199 \\
$1$ & $1$       & 0.09985 & 5.50784 & 0.92271 & 8.41832 \\
$1$ & $10$      & 0.09786 & 5.46790 & 0.91053 & 8.30708 \\
$1$ & $10^{2}$  & 0.10537 & 5.24299 & 0.86934 & 8.50025 \\
\midrule
$10$ & $10^{-2}$ & 0.09331 & 5.36099 & 0.87991 & 8.06390 \\
$10$ & $10^{-1}$ & 0.09105 & 5.34367 & 0.87604 & 7.96595 \\
$10$ & $1$       & 0.09381 & 5.37037 & 0.87955 & 8.09346 \\
$10$ & $10$      & 0.09014 & 5.36759 & 0.87766 & 7.94178 \\
$10$ & $10^{2}$  & 0.09154 & 5.43260 & 0.89303 & 8.03001 \\
\midrule
$10^{2}$ & $10^{-2}$ & 0.08922 & 5.23544 & 0.83938 & 7.79900 \\
$10^{2}$ & $10^{-1}$ & 0.08859 & 5.33771 & 0.86599 & 7.86204 \\
$10^{2}$ & $1$       & 0.09080 & 5.38808 & 0.87992 & 8.00044 \\
$10^{2}$ & $10$      & 0.08824 & 5.28488 & 0.85379 & 7.79917 \\
$10^{2}$ & $10^{2}$  & 0.08751 & 5.31516 & 0.86013 & 7.77876 \\
\bottomrule
\end{tabular}%
  }
\end{table}

\subsubsection{Sparse KNN}
We also conducted an ablation study for \(K\) in sparse KNN construction on the Tumor dataset. We tested \(K=64,128,256,512,1024,\) on the Tumor dataset. The corresponding model performance are approximately the same, demonstrating the robustness of sparse KNN technique (Table \ref{tab:tumor-knn-sensitivity}).

\begin{table}[ht]
  \centering
  \caption{$K$ ablation study on Tumor dataset. The dagger
  marks the value used in the main experiments.}
  \label{tab:tumor-knn-sensitivity}
  \resizebox{\linewidth}{!}{%
\begin{tabular}{rcccc}
\toprule
$k$ & Spatial $W_2$ ($\downarrow$) & Expression $W_2$ ($\downarrow$) & SC-eMSE ($\downarrow$) & Balanced fused $W_2$ ($\downarrow$)\\
\midrule
$64^{\dagger}$ & $0.069590\!\pm\!0.001515$ & $4.819554\!\pm\!0.138743$ & $1.045954\!\pm\!0.028812$ & $7.374801\!\pm\!0.087657$\\
$128$ & $0.070472\!\pm\!0.002885$ & $4.802107\!\pm\!0.196712$ & $1.037978\!\pm\!0.044555$ & $7.399844\!\pm\!0.127601$\\
$256$ & $0.069701\!\pm\!0.001579$ & $4.766193\!\pm\!0.215132$ & $1.029943\!\pm\!0.052658$ & $7.348703\!\pm\!0.145340$\\
$512$ & $0.071247\!\pm\!0.002133$ & $4.816275\!\pm\!0.175593$ & $1.037625\!\pm\!0.039205$ & $7.395463\!\pm\!0.142730$\\
$1024$ & $0.070168\!\pm\!0.003771$ & $4.830717\!\pm\!0.111594$ & $1.042600\!\pm\!0.017763$ & $7.395537\!\pm\!0.042100$\\
\bottomrule
\end{tabular}%
  }
\end{table}

We also recorded the training time and memory usage under different values of \(K\). As shown in Figs.\ref{fig:KNN Tumor}, on a log--log scale, the slopes of the linear regressions for both training time and GPU memory usage with respect to \(K\) are close to \(1\), indicating that, as expected, the computational cost of TP-DATE scales approximately linearly with \(K\). CPU memory usage does not exhibit the same linear dependence, mainly because CPU memory is substantially affected by data loading, preprocessing, and other fixed overheads, which dominate the algorithmic memory cost when \(K\) is relatively small.


\begin{figure}[ht]
    \centering
    \includegraphics[width=0.8\linewidth]{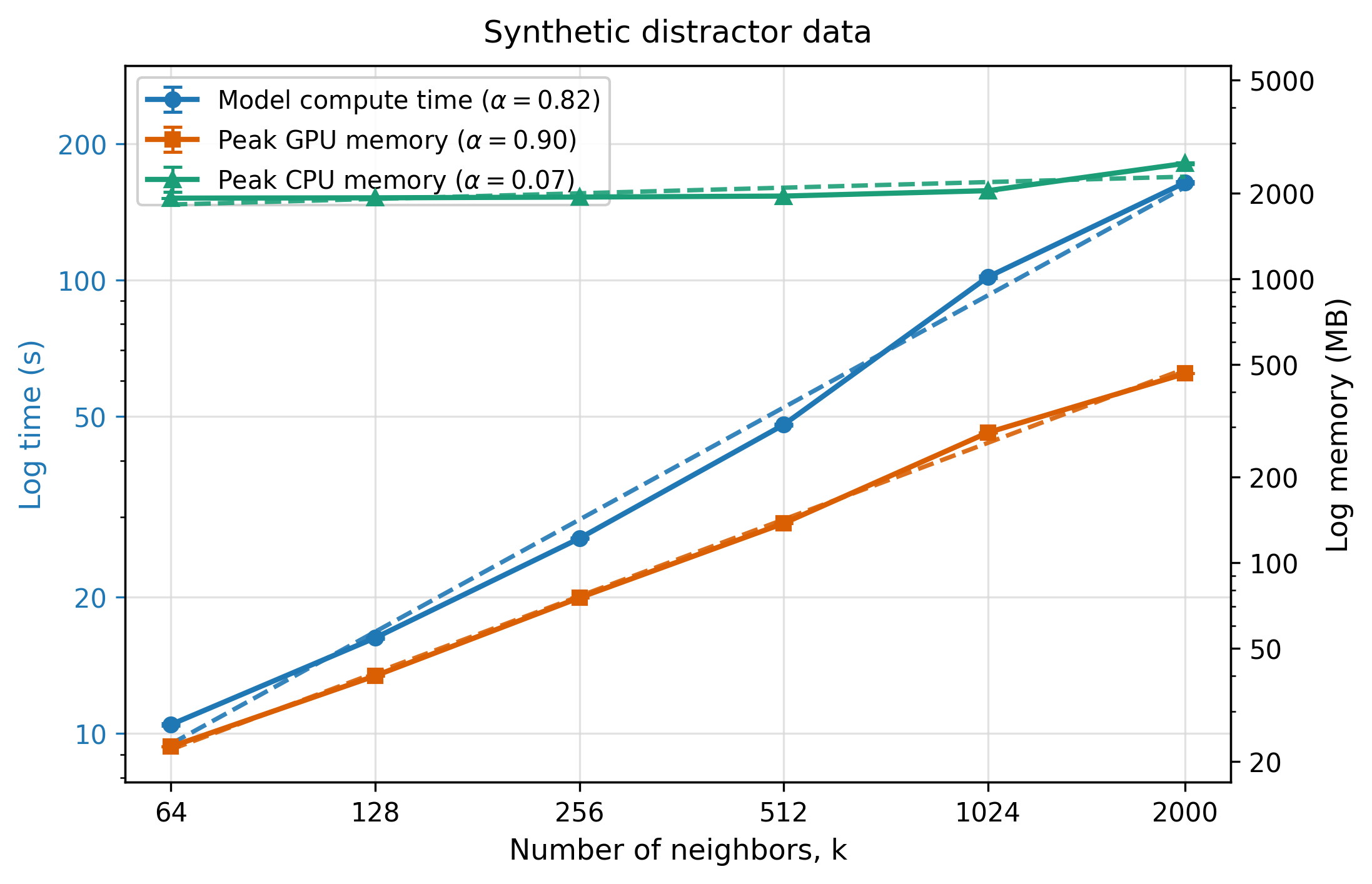}
    \caption{Training time and memory usage for different \(K\) on Tumor.}
    \label{fig:KNN Tumor}
\end{figure}

\subsubsection{Conditional path}
In the Mouse Brain and ARTISTA experiments, we used relatively small weights for the interaction term. To isolate the contribution of the interaction-induced conditional path from that of the static QOT coupling, we trained an additional baseline using exactly the same QOT coupling as TP-DATE but replacing the travelling-pair conditional path with standard displacement interpolation. All other training settings and random seeds were kept unchanged. As shown in Table~\ref{tab:path ablation}, TP-DATE achieves lower mean errors across all evaluation metrics on both datasets, with particularly clear improvements in expression reconstruction, SC-eMSE, and fused \(W_2\). These results indicate that the performance gain cannot be attributed solely to the static QOT coupling. Even with a relatively small interaction weight, the interaction induced dynamic conditional path provides an additional and consistent benefit. We do not additionally consider the zero-interaction coupling as a separate ablation, since removing the interaction term reduces the induced static QOT to the corresponding weighted standard OT formulation, whose flow-matching counterpart is already represented by OT-CFM.

\begin{table}[ht]
    \centering
    \caption{Conditional path ablation on spatial transcriptomics hold one out experiment. We report mean and
    standard deviation over 5 random seeds.}
    \label{tab:path ablation}
    \resizebox{\linewidth}{!}{%
  \begin{tabular}{llcccc}
  \toprule
  Dataset & Method & Spatial $W_2$ ($\downarrow$) & Expression $W_2$ ($\downarrow$) & SC-eMSE ($\downarrow$) & Balanced fused $W_2$ ($
  \downarrow$) \\
  \midrule
  \multirow{2}{*}{Mouse brain}
   & TP-DATE (ours)
   & $\mathbf{0.08678\!\pm\!0.00291}$
   & $\mathbf{6.40989\!\pm\!0.04644}$
   & $\mathbf{1.47693\!\pm\!0.02312}$
   & $\mathbf{9.59301\!\pm\!0.09255}$ \\
   & QOT + straight path
   & $0.09615\!\pm\!0.00500$
   & $6.45808\!\pm\!0.05616$
   & $1.52589\!\pm\!0.03583$
   & $10.05324\!\pm\!0.22572$ \\
  \midrule
  \multirow{2}{*}{ARTISTA}
   & TP-DATE (ours)
   & $\mathbf{0.09196\!\pm\!0.00500}$
   & $\mathbf{5.39310\!\pm\!0.28347}$
   & $\mathbf{0.87824\!\pm\!0.07587}$
   & $\mathbf{8.01158\!\pm\!0.27289}$ \\
   & QOT + straight path
   & $0.09210\!\pm\!0.00345$
   & $5.94608\!\pm\!0.39138$
   & $1.02462\!\pm\!0.10252$
   & $8.42871\!\pm\!0.30357$ \\
  \bottomrule
  \end{tabular}%
    }
  \end{table}
  
\subsection{Scalability}
\label{app:scaling}
We trained TP-DATE and simulation-based baselines (stVCR \citep{peng2026stvcr}, CytoBridge \citep{zhang2025cytobridge}) on different cell numbers. We downsampled the 
subQ-3, subQ-5 slices of the Tumor dataset \citep{3DTumor} to 10,000, 30,000, 50,000, 70,000, and 90,000 cells, respectively, and recorded the training time and memory usage of the three methods on datasets of the corresponding sizes. As shown in Figure \ref{fig:Memory}, CytoBridge ran out of memory on 90000, stVCR and TP-DATE both require moderate memory. As shown in Figure \ref{fig:Time}, TP-DATE exhibits computational cost that scales linearly with dataset size and is more efficient than the two simulation-based baselines. This highlights the computational efficiency of simulation-free approaches such as flow matching.

\begin{figure}[ht]
    \centering
    \includegraphics[width=0.8\linewidth]{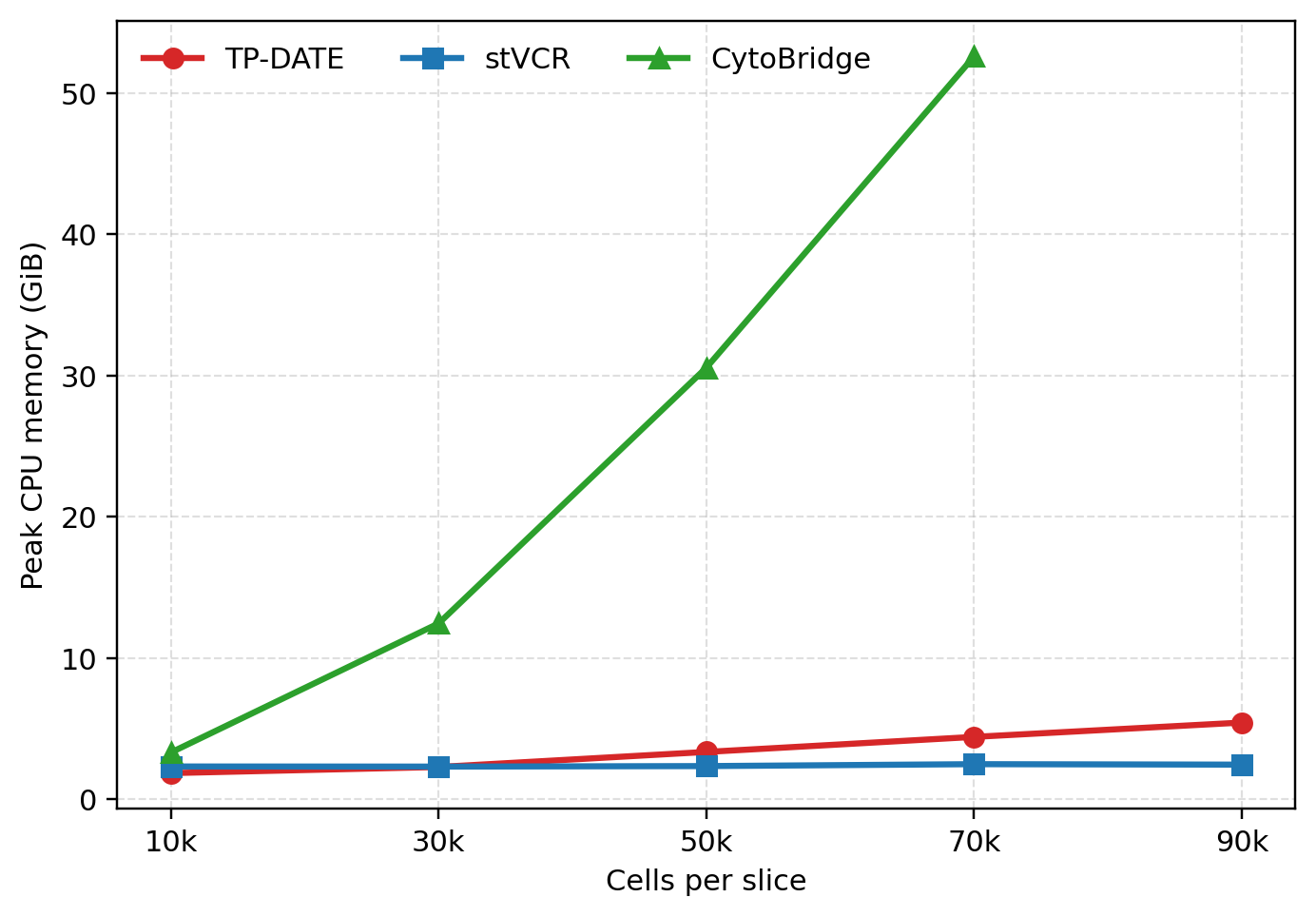}
    \includegraphics[width=0.8\linewidth]{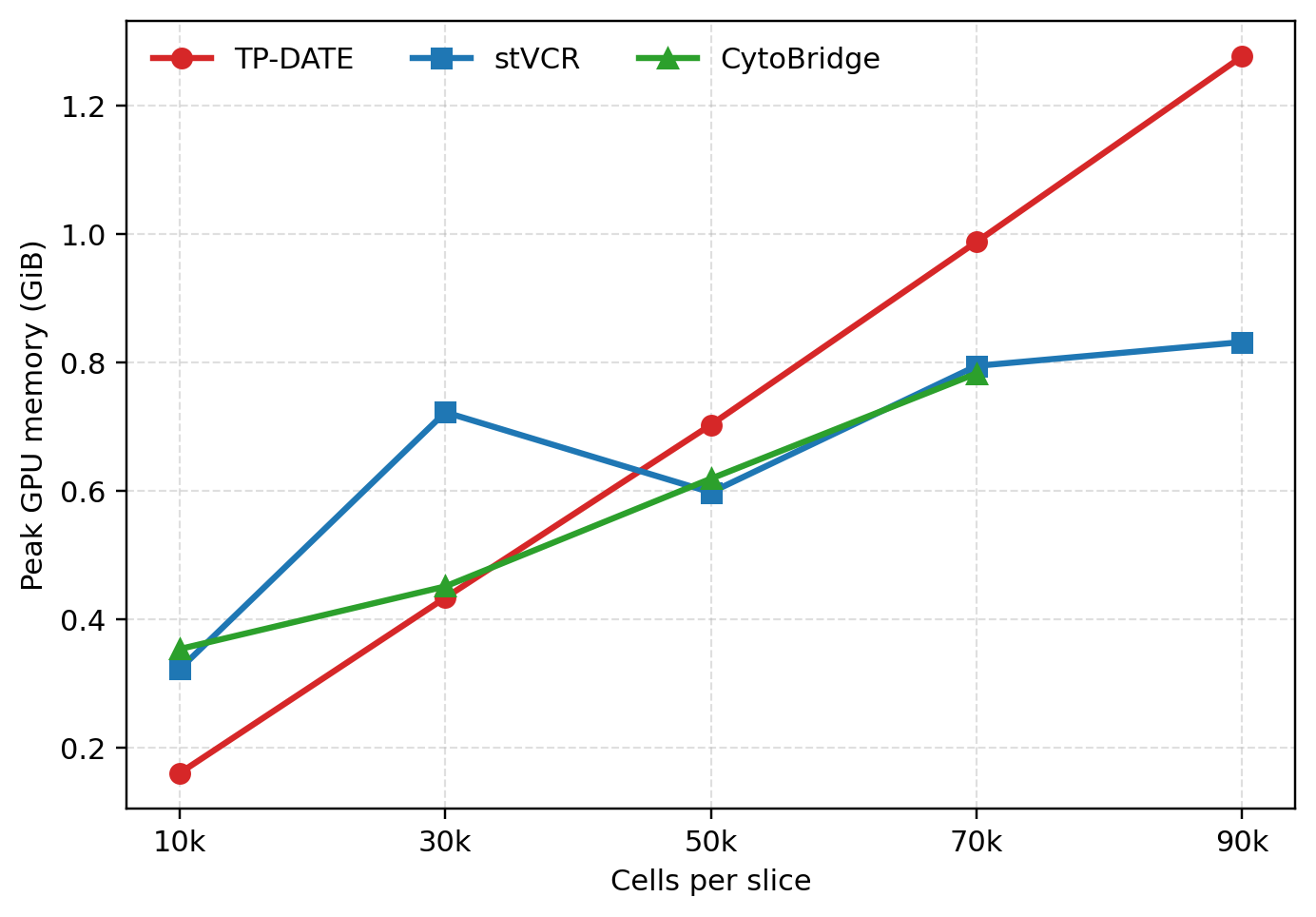}
    \caption{Training memory on different size data}
    \label{fig:Memory}
\end{figure}
\begin{figure}[!ht]
    \centering
    \includegraphics[width=0.8\linewidth]{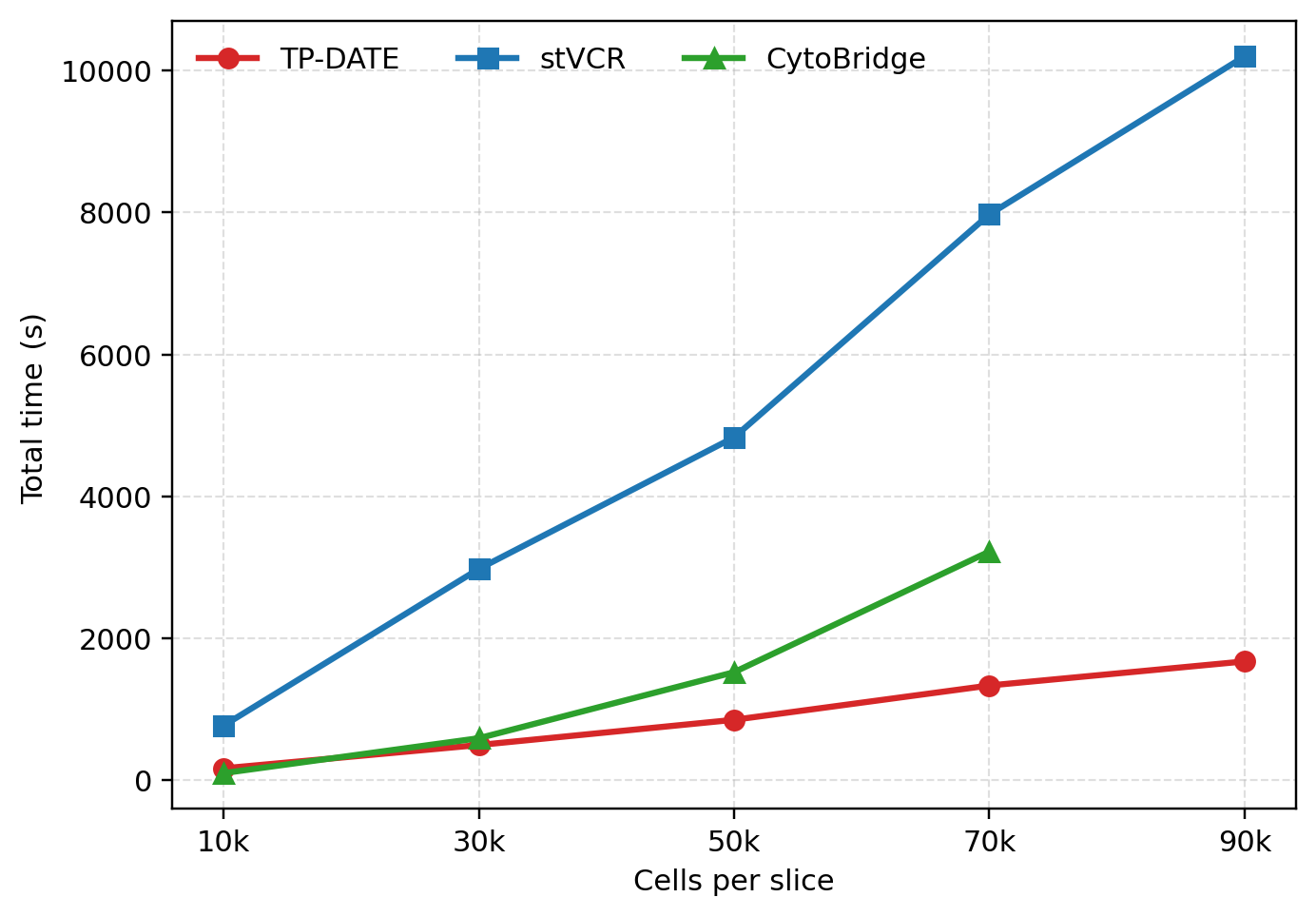}
    \caption{Training time on different size data}
    \label{fig:Time}
\end{figure}

\clearpage
\section{Relations to other works}
\subsection{GW-OT and FGW-OT}
\label{app:GW and FGW}
Consider the Lagrangian
\begin{equation}
\label{eq:Lagrangian}
\mathcal{L}(t,\bm{x},\bm{y},\dot{\bm{x}},\dot{\bm{y}})=\frac{1}{2}\Vert\dot{\bm{x}}\Vert^2+\frac{1}{2}\Vert\dot{\bm{y}}\Vert^2+\lambda|\frac{\mathrm{d}}{\mathrm{d}t}\Vert\bm{x}_t-\bm{y}_t\Vert|^2.
\end{equation}
\subsubsection{Standard static OT}
When \(\lambda=0\), the static cost is
\begin{equation}
\mathcal{A}(\bm{x}_0,\bm{x}_1,\bm{y}_0,\bm{y}_1) =\inf_{\bm{x}_t,\bm{y}_t} \int_0^1\frac{1}{2}\Vert\dot{\bm{x}}_t\Vert^2+\frac{1}{2}\Vert\dot{\bm{y}}_t\Vert^2\mathrm{d}t=\frac{1}{2}\Big(\Vert\bm{x}_1-\bm{x}_0\Vert^2+\Vert\bm{y}_1-\bm{y}_0\Vert^2\Big).
\end{equation}
The corresponding static form is
\begin{equation}
\begin{aligned}
\text{QOT}_S(\mu_0,\mu_1) &=\inf_{\gamma\in\Pi(\mu_0,\mu_1)} \int_{\mathcal{X}^4} \mathcal{A}(\bm{x}_0,\bm{x}_1,\bm{y}_0,\bm{y}_1)\gamma(\bm{x}_0,\bm{x}_1)\gamma(\bm{y}_0,\bm{y}_1)\mathrm{d} \bm{x}_0\mathrm{d}\bm{x}_1\mathrm{d}\bm{y}_0\mathrm{d}\bm{y}_1\\
&=\inf_{\gamma\in\Pi(\mu_0,\mu_1)} \frac{1}{2}\Big(\int_{\mathcal{X}^2}\Vert\bm{x}_1-\bm{x}_0\Vert^2\gamma(\bm{x}_0,\bm{x}_1)\mathrm{d} \bm{x}_0\mathrm{d}\bm{x}_1+\int_{\mathcal{X}^2}\Vert\bm{y}_1-\bm{y}_0\Vert^2\gamma(\bm{y}_0,\bm{y}_1)\mathrm{d}\bm{y}_0\mathrm{d}\bm{y}_1\Big)\\
&=\inf_{\gamma\in\Pi(\mu_0,\mu_1)}\int_{\mathcal{X}^2}\Vert\bm{x}_1-\bm{x}_0\Vert^2\gamma(\bm{x}_0,\bm{x}_1)\mathrm{d} \bm{x}_0\mathrm{d}\bm{x}_1\\
&=\text{OT}(\mu_0,\mu_1).
\end{aligned}
\end{equation}
It is equivalent to the standard static OT.
\subsubsection{Standard GW-OT}
When there are no kinetic terms, which is \(\mathcal{L}=|\frac{\mathrm{d}}{\mathrm{d}t}\Vert\bm{x}_t-\bm{y}_t\Vert|^2\), let \(r_t=\Vert\bm{x}_t-\bm{y}_t\Vert\), the static cost is
\begin{equation}
\label{eq:GW path action}
\mathcal{A}(\bm{x}_0,\bm{x}_1,\bm{y}_0,\bm{y}_1) =\inf_{r_t} \int_0^1|\dot{r}_t|^2\mathrm{d}t\quad\text{s.t.}\quad r_0=\Vert\bm{x}_0-\bm{y}_0\Vert,\quad r_1=\Vert\bm{x}_1-\bm{y}_1\Vert.
\end{equation}
Using the Euler-Lagrange equation, it is easy to show that when \(d\ge 2\), the optimal \(r_t\) is attained by \(r_t=(1-t)r_0+tr_1\). Therefore, the path action is
\begin{equation}
\mathcal{A}(\bm{x}_0,\bm{x}_1,\bm{y}_0,\bm{y}_1)=(r_1-r_0)^2= (\Vert\bm{x}_1-\bm{y}_1\Vert-\Vert\bm{x}_0-\bm{y}_0\Vert)^2.
\end{equation}
The corresponding static form is
\begin{equation}
\begin{aligned}
\text{QOT}_S(\mu_0,\mu_1) &=\inf_{\gamma\in\Pi(\mu_0,\mu_1)} \int_{\mathcal{X}^4} \mathcal{A}(\bm{x}_0,\bm{x}_1,\bm{y}_0,\bm{y}_1)\gamma(\bm{x}_0,\bm{x}_1)\gamma(\bm{y}_0,\bm{y}_1)\mathrm{d} \bm{x}_0\mathrm{d}\bm{x}_1\mathrm{d}\bm{y}_0\mathrm{d}\bm{y}_1\\
&=\inf_{\gamma\in\Pi(\mu_0,\mu_1)} \int_{\mathcal{X}^4} (\Vert\bm{x}_1-\bm{y}_1\Vert-\Vert\bm{x}_0-\bm{y}_0\Vert)^2\gamma(\bm{x}_0,\bm{x}_1)\gamma(\bm{y}_0,\bm{y}_1)\mathrm{d} \bm{x}_0\mathrm{d}\bm{x}_1\mathrm{d}\bm{y}_0\mathrm{d}\bm{y}_1\\
&=\text{GWOT}(\mu_0,\mu_1)
\end{aligned}
\end{equation}
which is equivalent to standard GW-OT \citep{gwot,moscot}. We point out that different to static OT which has a dynamic form, GW-OT does not naturally have an unique dynamic form. Because the static cost (\ref{eq:GW path action}) is defined by only minimizing w.r.t \(r_t\), which can not uniquely determine \(\bm{q}_t\). Intuitively, the travelling pair path action only penalizes the variation of the length of \(\bm{q}_t\), but doesn't penalize the rotation.
\subsubsection{Dynamic analog to FGW-OT}
Previous studies have combined the standard static OT and GW-OT together to obtain a fused GW-OT (FGW-OT) \citep{FGWOT,moscot}, which simultaneously minimizes the kinetic energy and the preserves the local structure. Its cost is defined as a weighted sum of static OT cost and static GW-OT cost.
\begin{equation}
\label{eq:FGW-OT}
C_\text{FGW}(\mu_0,\mu_1)=\alpha C_\text{GW}(\mu_0,\mu_1)+(1-\alpha)C_\text{OT}(\mu_0,\mu_1)
\end{equation}
As discussed above, the Lagrangian (\ref{eq:Lagrangian}) is also a weighted sum of OT and GW-OT up to a scalar, but in a dynamic sense. Hence, intuitively, the corresponding dynamic form can be viewed as a dynamic analog to FGW-OT.

As discussed in \ref{pf:RK path}, the Lagrangian (\ref{eq:Lagrangian}) can be written as \(\Vert\dot{\bm{c}}_t\Vert^2+\frac{1}{4}r_t^2\dot{\theta}_t^2+(\frac{1}{4}+\lambda)\dot{r}_t^2\) under the C-Q decomposition and the polar coordinates on the plane spanned by \(\bm{q}_0,\bm{q}_1\). The first term is the kinetic energy of the barycenter, which is an analog to the standard kinetic term \(\frac{1}{2}\Vert\dot{\bm{x}}\Vert^2+\frac{1}{2}\Vert\dot{\bm{y}}\Vert^2\). The third term exactly leads to the static path action of GW-OT (\ref{eq:GW path action}). The second term further penalizes the variation of \(\theta_t\), i.e. the rotation of \(\bm{q}_t\). Therefore, the Lagrangian (\ref{eq:Lagrangian}) actually leads to a more strict preservation of local structure than FGW-OT.

We also derived the analytic form of the path action in \ref{pf:RK path}.
\begin{equation}
\mathcal{A}(\bm{x}_0,\bm{x}_1,\bm{y}_0,\bm{y}_1)=\Vert\bm{c}_1-\bm{c}_0\Vert^2+(\frac{1}{4}+\lambda)(r_0^2+r_1^2-2r_0r_1\cos k\alpha)
\end{equation}
When \(\lambda=0\), 
\begin{equation}
\begin{aligned}
\mathcal{A}(\bm{x}_0,\bm{x}_1,\bm{y}_0,\bm{y}_1)&=\Vert\bm{c}_1-\bm{c}_0\Vert^2+\frac{1}{4}(r_0^2+r_1^2-2r_0r_1\cos\alpha)\\&=\Vert\bm{c}_1-\bm{c}_0\Vert^2+\frac{1}{4}\Vert\bm{q}_1-\bm{q}_0\Vert^2=\frac{1}{2}\Vert\bm{x}_1-\bm{x}_0\Vert^2+\frac{1}{2}\Vert\bm{y}_1-\bm{y}_0\Vert^2.
\end{aligned}
\end{equation}
It reduces to OT. When \(\lambda\to\infty\),
\begin{equation}
\begin{aligned}
\lambda^{-1}\mathcal{A}(\bm{x}_0,\bm{x}_1,\bm{y}_0,\bm{y}_1)&=\mathcal{O}(\lambda^{-1})+(r_0^2+r_1^2-2r_0r_1\cos\Big(\sqrt{\frac{1}{1+4\lambda}}\alpha\Big))\\
&=\mathcal{O}(\lambda^{-1})+(r_0^2+r_1^2-2r_0r_1(1+\mathcal{O}(\lambda^{-1})))\\
&=\mathcal{O}(\lambda^{-1})+(r_0^2+r_1^2-2r_0r_1)\\
&=(r_1-r_0)^2+\mathcal{O}(\lambda^{-1})\to(\Vert\bm{x}_1-\bm{y}_1\Vert-\Vert\bm{x}_0-\bm{y}_0\Vert)^2
\end{aligned}
\end{equation}
It reduces to GW-OT. Therefore, for general \(\lambda\in(0,+\infty)\), it can be viewed as a dynamic fusion of OT and GW-OT rather than FGW-OT, which is just a static fusion.

\subsubsection{The 1D case}
We need \(d\ge2\) (\(d\) is the dimension of the space) to have (\ref{eq:GW path action}) be equivalent to the static GW-OT cost. To be self-contained, we briefly discuss the 1D case \((d=1)\) here. When \(d=1\), the Lagrangian \(\mathcal{L}=(\frac{\mathrm{d}}{\mathrm{d}t}|x_t-y_t|)^2=\Big(\frac{\mathrm{d}}{\mathrm{d}t}(x_t-y_t)\Big)^2\). Let \(q_1=x_1-y_1,q_0=x_0-y_0\), the path action is
\begin{equation}
\mathcal{A}(x_0,x_1,y_0,y_1) =\inf_{q_t} \int_0^1|\dot{q}_t|^2\mathrm{d}t\quad\text{s.t.}\quad q_0=x_0-y_0,\quad q_1=x_1-y_1.
\end{equation}
The optimal path is \(q_t=(1-t)q_0+tq_1\), instead of \(r_t=(1-t)r_0+tr_1\). The path action is therefore
\begin{equation}
\mathcal{A}(x_0,x_1,y_0,y_1) = (q_1-q_0)^2=(x_1-y_1-x_0+y_0)^2
\end{equation}
instead of \((|x_1-y_1|-|x_0-y_0|)^2\). Thus, when \(d=1\), our path action does not reduce to the static cost of GW-OT. In fact, the two cost are equal if and only if \((x_1-y_1)(x_0-y_0)\ge0\). Geometrically, in a high dimensional space (\(d\ge2\)), one can continuously deform the configuration and reverse the relative positions of two points while keeping the magnitude of their relative displacement equal to the linear interpolation between its endpoint values. But it is impossible in one dimension. Reversing the relative positions of two points necessarily forces their relative displacement to vanish at some intermediate time, so its magnitude cannot follow the linear interpolation between the initial and final values.

\subsection{IGW-OT}
\label{app:IGW-OT}
Developing dynamic formulations of GW-like OT has long attracted considerable attention. A recent successful attempt is the inner product GW-OT (IGW-OT), proposed by \citep{IGWOT}, which replaces the cost based on discrepancies between pairwise distances in GW-OT with one based on discrepancies between inner products. 

\begin{equation}
\label{eq:IGW-OT}
\text{IGW-OT}^2_S(\mu_0,\mu_1) 
=\inf_{\gamma\in\Pi(\mu_0,\mu_1)} \int_{\mathcal{X}^4} (\langle\bm{x}_1,\bm{y}_1\rangle-\langle\bm{x}_0,\bm{y}_0\rangle)^2\gamma(\bm{x}_0,\bm{x}_1)\gamma(\bm{y}_0,\bm{y}_1)\mathrm{d} \bm{x}_0\mathrm{d}\bm{x}_1\mathrm{d}\bm{y}_0\mathrm{d}\bm{y}_1.
\end{equation}

They developed a dynamic formulation for the above static QOT problem, denoted as IGW-OT. Intuitively, IGW-OT likewise aims to preserve spatial structure as much as possible throughout the transport process. Different to TP-DATE, they focused on the gradient flow and Riemannian structure of IGW-OT.

We point out that, with an appropriate choice of Lagrangian within the TP-DATE framework, IGW-OT can also be recovered as a special case of the dynamic QOT formulation introduced in our work. Take
\begin{equation}
\mathcal{L}(t,\bm{x}_t,\bm{y}_t,\dot{\bm{x}}_t,\dot{\bm{y}}_t)=|\frac{\mathrm{d}}{\mathrm{d}t}\langle\bm{x}_t,\bm{y}_t\rangle|^2=|\langle\dot{\bm{x}}_t,\bm{y}_t\rangle+\langle\bm{x}_t,\dot{\bm{y}}_t\rangle|^2
\end{equation}
The path action is
\begin{equation}
\label{eq:IGW path action}
\mathcal{A}(\bm{x}_0,\bm{x}_1,\bm{y}_0,\bm{y}_1) =\inf_{r_t} \int_0^1|\dot{r}_t|^2\mathrm{d}t\quad\text{s.t.}\quad r_0=\langle\bm{x}_0,\bm{y}_0\rangle,\quad r_1=\langle\bm{x}_1,\bm{y}_1\rangle.
\end{equation}
Using the Euler-Lagrange equation, it is easy to show that when \(d\ge 2\), the optimal \(r_t\) is attained by \(r_t=(1-t)r_0+tr_1\). Therefore, the path action is
\begin{equation}
\mathcal{A}(\bm{x}_0,\bm{x}_1,\bm{y}_0,\bm{y}_1)=(r_1-r_0)^2=(\langle\bm{x}_1,\bm{y}_1\rangle-\langle\bm{x}_0,\bm{y}_0\rangle)^2.
\end{equation}
The corresponding static form is
\begin{equation}
\begin{aligned}
\text{QOT}_S(\mu_0,\mu_1) &=\inf_{\gamma\in\Pi(\mu_0,\mu_1)} \int_{\mathcal{X}^4} \mathcal{A}(\bm{x}_0,\bm{x}_1,\bm{y}_0,\bm{y}_1)\gamma(\bm{x}_0,\bm{x}_1)\gamma(\bm{y}_0,\bm{y}_1)\mathrm{d} \bm{x}_0\mathrm{d}\bm{x}_1\mathrm{d}\bm{y}_0\mathrm{d}\bm{y}_1\\
&=\inf_{\gamma\in\Pi(\mu_0,\mu_1)} \int_{\mathcal{X}^4} (\langle\bm{x}_1,\bm{y}_1\rangle-\langle\bm{x}_0,\bm{y}_0\rangle)^2\gamma(\bm{x}_0,\bm{x}_1)\gamma(\bm{y}_0,\bm{y}_1)\mathrm{d} \bm{x}_0\mathrm{d}\bm{x}_1\mathrm{d}\bm{y}_0\mathrm{d}\bm{y}_1\\
&=\text{IGW-OT}^2_S(\mu_0,\mu_1)
\end{aligned}
\end{equation}
The convexity of Lagrangian is also satisfied. To show that, let 
\begin{equation}
\bm{v}_t=\begin{pmatrix}
\dot{\bm{x}}_t\\\dot{\bm{y}}_t
\end{pmatrix}
\qquad
\bm{z}_t=\begin{pmatrix}
\bm{y}_t\\\bm{x}_t
\end{pmatrix}
\end{equation}
we have
\begin{equation}
|\langle\dot{\bm{x}}_t,\bm{y}_t\rangle+\langle\bm{x}_t,\dot{\bm{y}}_t\rangle|^2=|\bm{z}_t\cdot\bm{v}_t|^2=\bm{v}_t^{\mathrm{T}}(\bm{z}_t\bm{z}_t^{\mathrm{T}})\bm{v}_t.
\end{equation}
Therefore, the Lagrangian is convex w.r.t \(\bm{v}_t\), hence \((\dot{\bm{x}}_t,\dot{\bm{y}}_t)\). Thus, the static IGW objective is recovered as a special case of our path action construction, while the induced dynamic flow need not coincide with their intrinsic IGW flow (for mathematical details, please refer to their paper). From this perspective, our TP-DATE framework can be viewed as a more general dynamic QOT framework.

\subsubsection{The 1D case}
In the IGW-OT case, we also need \(d\ge2\) to make \(r_t=(1-t)r_0+tr_1\) attainable. We give an explicit construction here. Let \(\bm{p}_t=\frac{\bm{x}_t+\bm{y}_t}{\sqrt{2}},\bm{q}_t=\frac{\bm{x}_t-\bm{y}_t}{\sqrt{2}}\), we have \(\langle\bm{x}_t,\bm{y}_t\rangle=\Vert\bm{p}_t\Vert^2-\Vert\bm{q}_t\Vert^2
\). When \(d\ge2\), it is easy to construct \(\bm{p}_t,\bm{q}_t\) such that \(\Vert\bm{p}_t\Vert^2=(1-t)\Vert\bm{p}_0\Vert^2+t\Vert\bm{p}_1\Vert^2,\Vert\bm{q}_t\Vert^2=(1-t)\Vert\bm{q}_0\Vert^2+t\Vert\bm{q}_1\Vert^2\), hence \(\langle\bm{x}_t,\bm{y}_t\rangle=(1-t)\langle\bm{x}_0,\bm{y}_0\rangle+t\langle\bm{x}_1,\bm{y}_1\rangle\).

When \(d=1\), consider \(x_0=1,y_0=-1,x_1=-1,y_1=1\). By direct calculation, we have \(x_0y_0=x_1y_1=-1\). But, it is impossible to construct a travelling pair \((x_t,y_t)\) such that \(x_ty_t\equiv-1\), since there must exist some \(t\) such that \(x_t=0\), hence \(x_ty_t=0\neq-1\). The reason is similar to the GW-OT case. A reflection can be continuously realized by a rigid body rotation in high dimensional space \(d\ge2\), but can never be realized by a continuous rigid transformation in 1D spaces.

When \(d=1\), consider \(\mathcal{L}=\Big(\frac{\mathrm{d}}{\mathrm{d}t}(x_ty_t)\Big)^2\), by direct calculation, one can easily check that the corresponding static cost is
\begin{equation}
\mathcal{A}(x_0,x_1,y_0,y_1)=\left\{
\begin{aligned}
&(|x_0y_0|+|x_1y_1|)^2,\quad x_0x_1<0,y_0y_1<0\\
&(x_1y_1-x_0y_0)^2,\quad \text{otherwise}
\end{aligned}
\right .
\end{equation}
The infimum is attained when \(x_0y_0x_1y_1\neq0\) or \(x_0y_0=x_1y_1=0\), but may not be attainable when \(x_0y_0=0,x_1y_1\neq0\) or \(x_0y_0\neq0,x_1y_1=0\).

\subsection{OT-CFM}
\citep{cfm_tong} has proposed OT-CFM, a flow matching framework for solving standard dynamic OT problem. It uses the optimal OT coupling under squared Euclidean distance cost, and the displacement conditional path for flow matching training. The learned flow can be proved to solve the standard dynamic OT problem with kinetic energy cost.
\begin{equation}
\begin{aligned}
\label{eq:ot-cfm action}
&\text{OT}(\mu_0,\mu_1) =\inf_{\rho,\bm{u}} \int_0^1\int_{\mathcal{X}}  \, \frac{1}{2}\Vert \bm{u}(\bm{x},t)\Vert_2^2\rho_t(\bm{x})\mathrm{d} \bm{x}\mathrm{d}t \\
&\text{s.t.} \ \ \ \ \ \ \ \partial_t\rho+\nabla_{\bm{x}}\cdot(\rho \bm{u})=0,\ \rho_0=\mu_0,\ \rho_1=\mu_1
\end{aligned}
\end{equation}

Inspired by this pioneering work, we proposed a new travelling pair flow matching framework which can solve the dynamic QOT problem. When choosing the Lagrangian in our dynamic QOT framework as
\begin{equation}
\mathcal{L}(t,\bm{x},\bm{y},\dot{\bm{x}},\dot{\bm{y}})=\frac{1}{2}\Vert\dot{\bm{x}}\Vert^2+\frac{1}{2}\Vert\dot{\bm{y}}\Vert^2
\end{equation}
the corresponding dynamic QOT reduces to (\ref{eq:ot-cfm action}). Therefore, TP-DATE reduces to OT-CFM under this setting.

\subsection{Static structured transport in spatial transcriptomics}
PASTE \citep{PASTE} and PASTE2 \citep{PASTE2} formulate spatial transcriptomic slice alignment using FGW and partial-FGW, respectively, and primarily target spatial alignment, integration, and 3D reconstruction. MOSCOT \citep{moscot} provides a unified OT framework for mapping cells across time and space, including GW- and FGW-type formulations for spatial and spatiotemporal correspondence. SOCS \citep{SOCS} further targets time-series spatial transcriptomics by estimating an ancestor--descendant transport matrix between two observed time points while preserving gene-expression, geometric, and contiguous-structure information. Despite their different application settings, these methods operate at the level of couplings between observed datasets and do not model intermediate time measure paths or dynamical vector fields. In this sense, they perform static endpoint alignment. In contrast, TP-DATE goes beyond endpoint alignment by reconstructing continuous-time dynamics, and thus constitutes a dynamic trajectory inference framework.

\subsection{GENOT}
\citep{klein2024genot} also combines flow matching with GW-OT, but with a fundamentally different purpose. GENOT is a neural OT solver that uses flow matching to learn couplings associated with static entropic OT, GW-OT, or FGW-OT. In contrast, TP-DATE does not use flow matching to solve the QOT coupling. It uses flow matching to find the dynamic QOT probability flow. The flow in GENOT serves as a parameterization of a static transport plan, while the flow in TP-DATE represents the reconstructed dynamics between snapshots. The two approaches therefore address different problems.

\subsection{NicheFlow}
NicheFlow \citep{nicheflow} advances spatial trajectory inference by modeling local cellular microenvironments as structured point clouds, jointly capturing changes in spatial coordinates and gene expression states. It combines OT-based source--target matching with Variational Flow Matching (VFM) to generate future microenvironments conditioned on observed source niches. However, its flow evolves from noise to the target state conditioned on the source, so the internal flow time represents a generative process rather than the continuous time biological dynamics between two tissue states. In contrast, TP-DATE directly models source-to-target continuous time dynamics and imposes structure preservation at the dynamical level through dynamic QOT.

\subsection{ContextFlow}
ContextFlow \citep{rathod2026contextflowcontextawareflowmatching}
is a recently proposed context-aware generative modeling approach for transcriptomic snapshot interpolation. It incorporates spatial information and cell--cell communication signals into a static OT cost, thereby obtaining couplings that account for these contextual factors. These couplings are then used in conditional flow matching to reconstruct the dynamics. From this perspective, ContextFlow can be viewed as introducing a class of context-aware static OT formulations. In contrast, TP-DATE incorporates spatial structure directly at the dynamical level rather than indirectly through a static coupling. Viewed this way, TP-DATE extends the context-aware principle to dynamic settings through the framework of dynamic QOT.

\subsection{stVCR}
stVCR \citep{peng2026stvcr} models spatiotemporal dynamics of  
single cells with Wasserstein-Fisher-Rao dynamics (WFR). WFR can be viewed as an unbalanced extension of standard OT. It extends OT from transporting probability densities to transporting unbalanced measures. 
\begin{equation}
\begin{aligned}
&\text{WFR}_{\delta}^2(\mu_0,\mu_1) =\inf_{\rho,\bm{u}} \int_0^1\int_{\mathcal{X}}  \, \frac{1}{2}[\Vert \bm{u}(\bm{x},t)\Vert_2^2+\delta^2g_t^2(\bm{x})]\rho_t(\bm{x})\mathrm{d} \bm{x}\mathrm{d}t \\
&\text{s.t.} \ \ \ \ \ \ \ \ \ \ \ \ \ \partial_t\rho+\nabla_{\bm{x}}\cdot(\rho \bm{u})=g\rho,\ \rho_0=\mu_0,\ \rho_1=\mu_1
\end{aligned}
\end{equation}
It models the cell proliferation and apoptosis by a growth rate term \(g_t(\bm{x})\). When applied on scRNA-seq data without spatial information, this WFR problem can be solved by unbalanced flow matching \citep{wfr_fm}. On spatial transcriptomics data, to preserve the spatial structure, an optional spatial structure preserving loss is added. To account for possible slight misalignment between the coordinate systems of spatial transcriptomics slices, stVCR further introduces a rigid body transformation invariant OT formulation. Before each OT training step, it first solves an optimal rigid body transformation to align the coordinate systems. Both techniques make it unsolvable for flow matching based methods. Therefore, stVCR used NeuralODE method to solve the corresponding OT problem. 

In contrast, TP-DATE naturally preserves the spatial structure and addresses slight misalignment by adding interaction term \(\Phi\), and is simulation-free. Although TP-DATE is more efficient than stVCR in modeling spatial structure preservation, it does not currently account for cell proliferation and apoptosis as stVCR does. This limitation points to an important direction for future work: extending the dynamic QOT framework to the unbalanced setting.

\subsection{CytoBridge}
CytoBridge \citep{zhang2025cytobridge} is a NeuralODE framework for learning interactions and mean-field interactive dynamics from snapshots. They focuses on the unbalanced mean-field Schr\"odinger bridge problem (UMFSB).
\begin{equation}
\begin{aligned}
&\text{UMFSB}(\mu_0,\mu_1) =\inf_{\rho,\bm{u}} \int_0^1\int_{\mathcal{X}}  \, \frac{1}{2}[\Vert \bm{u}(\bm{x},t)\Vert_2^2+g_t^2(\bm{x})]\rho_t(\bm{x})\mathrm{d} \bm{x}\mathrm{d}t \\
&\text{s.t.} \  \ \ \partial_t\rho(\bm{x})+\nabla_{\bm{x}}\cdot[\rho(\bm{x}) (\bm{u}_t(\bm{x})-\int k(\bm{x},\bm{y})\nabla_{\bm{x}}\Phi(\bm{x}-\bm{y})\rho_t(\bm{y})\mathrm{d}\bm{y})]=g\rho\\
&\ \ \ \ \ \ \ \ \ \ \ \ \ \ \ \ \ \ \ \ \ \ \ \ \ \ \ \ \ \ \ \ \ \ \ \ \ \ \ \ \rho_0=\mu_0,\ \rho_1=\mu_1
\end{aligned}
\end{equation}
where \(\Phi\) is some interaction potential for modeling cell-cell interactions, and \(k\) is some gate function to control the intensity of the interaction. The key distinction between CytoBridge and TP-DATE is where the interaction term is introduced. CytoBridge incorporates the interaction term directly into the dynamical equation as part of the constraint, whereas TP-DATE retains the standard continuity equation constraint on the two-particle space and instead places the interaction term in the action functional.

One advantage of CytoBridge is that, because the interaction term is part of the dynamics to be learned, the interaction itself can be inferred from data through a NeuralODE. This is feasible for NeuralODE based methods. In principle, essentially any interaction term can be parameterized by a neural network and learned by minimizing the loss. For flow matching based methods, however, this is generally much more challenging, since the conditional paths determined by the interaction term must typically be computed in advance. Whether techniques from inverse problems can be used to overcome this limitation and enable flow matching methods to learn such interactions from the data  is therefore a very interesting direction for future research.

Another interesting question is whether CytoBridge can be made simulation-free. From TrajectoryNet \citep{trajectorynet} to OT-CFM \citep{cfm_tong}, Diffusion Schr\"odinger bridge \citep{bortoli2021diffusion,shi2024diffusion} to \(\text{SF}^2\text{M}\) \citep{tong2023simulationfree}, stVCR \citep{peng2026stvcr} to WFR-FM \cite{wfr_fm}, previous works have already built simulation-free flow matching based algorithms for OT, Schr\"odinger bridge, and WFR. There is a trend of converting simulation-based or iteration-based Neural OT solvers to simulation-free solvers by flow matching. However, the success of these algorithms rely on an important mathematical property of the underlying OT problems: the dynamical constraint is linear, or equivalently, local, with respect to the measure flow \(\rho_t\). This property is precisely what allows the marginalization theorem to hold in these settings. Once a CytoBridge style interaction term is introduced directly into the dynamical constraint, this linearity or locality is lost. One can readily verify that, for the CytoBridge constraint equation, the usual marginalization argument no longer applies. This limitation is also reflected in the independence of conditional paths in standard flow matching. Once the initial and terminal states are fixed, each conditional path is determined independently of the others, with no coupling between different paths. As a result, standard conditional path constructions are not naturally equipped to represent interactions between particles. From this perspective, extending CytoBridge to a fully simulation-free flow matching formulation appears to be fundamentally challenging.

TP-DATE's travelling-pair flow matching may provide a possible starting point for addressing this challenge. By considering the problem on the two-particle space, we simultaneously enable interactions between conditional paths while preserving the linearity of the dynamical constraint. Moreover, through velocity decomposition, the resulting single-particle dynamics can be interpreted as an approximation to mean-field dynamics, in which interactions at the population level emerge from underlying pairwise interactions. Although TP-DATE was not originally designed for this purpose, a promising direction for future work is to build on travelling-pair flow matching and investigate whether formulating the problem on the two-particle space can enable efficient flow matching based modeling of nonlocal dynamics.

\section{Velocity decomposition}
\label{velocity decomposition gauges}
In general cases, one may want to only have a velocity decomposition at conditional velocity level \(\bm{u}^1_t(\bm{x},\bm{y}|\bm{z},\bm{z}')=\bm{b}_t(\bm{x}|\bm{z},\bm{z}')+\bm{f}_t(\bm{x},\bm{y}|\bm{z},\bm{z}').\) In this case, we want to regress the marginal velocities via flow matching.
\begin{equation}
\begin{aligned}
\mathcal{L}_b(\bm{\theta})&=
\mathbb{E}_{t\sim\mathcal{U}[0,1], (\bm{z},\bm{z}')\sim q(\bm{z},\bm{z}'), (\bm{x},\bm{y})\sim \pi_t(\bm{x},\bm{y}\vert \bm{z},\bm{z}')}\left\| \bm{\bm{b}_{\theta}}(\bm{x},t) - \bm{b}_t(\bm{x}\vert\bm{z},\bm{z}') \right\|_2^2\\
\mathcal{L}_f(\bm{\phi})&=
\mathbb{E}_{t\sim\mathcal{U}[0,1], (\bm{z},\bm{z}')\sim q(\bm{z},\bm{z}'), (\bm{x},\bm{y})\sim \pi_t(\bm{x},\bm{y}\vert \bm{z},\bm{z}')}\left\| \bm{\bm{f}_{\phi}}(\bm{x},\bm{y},t) - \bm{f}_t(\bm{x},\bm{y}\vert\bm{z},\bm{z}') \right\|_2^2
\end{aligned}
\end{equation}
Similarly to theorem \ref{thm:bf loss}, we still have
\begin{equation}
\begin{aligned}
\mathcal{L}_b(\bm{\theta})&=
\mathbb{E}_{t\sim\mathcal{U}[0,1], (\bm{x},\bm{y})\sim \pi_t(\bm{x},\bm{y})}\left\| \bm{\bm{b}_{\theta}}(\bm{x},t) - \bm{b}_t(\bm{x}) \right\|_2^2+C_1\\&=\mathbb{E}_{t\sim\mathcal{U}[0,1], \bm{x}\sim \rho_t(\bm{x})}\left\| \bm{\bm{b}_{\theta}}(\bm{x},t) - \bm{b}_t(\bm{x}) \right\|_2^2+C_1\\
\mathcal{L}_f(\bm{\phi})&=
\mathbb{E}_{t\sim\mathcal{U}[0,1], (\bm{x},\bm{y})\sim \pi_t(\bm{x},\bm{y})}\left\| \bm{\bm{f}_{\phi}}(\bm{x},\bm{y},t) - \bm{f}_t(\bm{x},\bm{y}) \right\|_2^2+C_2
\end{aligned}
\end{equation}
where \(C_1,C_2\) are constants independent of \(\bm{\theta},\bm{\phi}\). Therefore, the minimizer is \(\bm{b}_t(\bm{x})\) and \(\bm{f}_t(\bm{x},\bm{y})\). The problem here is that the conditional velocity decomposition may not lead to a marginal velocity decomposition, i.e. \(\bm{u}^1_t(\bm{x},\bm{y})=\bm{b}_t(\bm{x})+\bm{f}_t(\bm{x},\bm{y})\) does not naturally hold. By marginalization theorems, we have
\begin{equation}
\left\{
\begin{aligned}
&\bm{u}^1_t(\bm{x},\bm{y})=\int\bm{u}^1_t(\bm{x},\bm{y}|\bm{z},\bm{z}')\frac{\pi_t(\bm{x},\bm{y}|\bm{z},\bm{z}')q(\bm{z},\bm{z}')}{\pi_t(\bm{x},\bm{y})}\mathrm{d}\bm{z}\mathrm{d}\bm{z}'\\
&\bm{b}_t(\bm{x})=\int\Big(\int\bm{b}_t(\bm{x}|\bm{z},\bm{z}')\frac{\pi_t(\bm{x},\bm{y}|\bm{z},\bm{z}')q(\bm{z},\bm{z}')}{\pi_t(\bm{x},\bm{y})}\mathrm{d}\bm{z}\mathrm{d}\bm{z}'\Big)\frac{\pi_t(\bm{x},\bm{y})}{\rho_t(\bm{x})}\mathrm{d}\bm{y}\\
&\bm{f}_t(\bm{x},\bm{y})=\int\bm{f}_t(\bm{x},\bm{y}|\bm{z},\bm{z}')\frac{\pi_t(\bm{x},\bm{y}|\bm{z},\bm{z}')q(\bm{z},\bm{z}')}{\pi_t(\bm{x},\bm{y})}\mathrm{d}\bm{z}\mathrm{d}\bm{z}'
\end{aligned}
\right .
\end{equation}
The conditional velocity decomposition yields 
\begin{equation}
\begin{aligned}
&\mathbb{E}[\bm{b}_t(\bm{X}|\bm{z},\bm{z}')|\bm{X}=\bm{x},\bm{Y}=\bm{y}]\\=&\int\bm{b}_t(\bm{x}|\bm{z},\bm{z}')\frac{\pi_t(\bm{x},\bm{y}|\bm{z},\bm{z}')q(\bm{z},\bm{z}')}{\pi_t(\bm{x},\bm{y})}\mathrm{d}\bm{z}\mathrm{d}\bm{z}'\\=&\int[\bm{u}^1_t(\bm{x},\bm{y}|\bm{z},\bm{z}')-\bm{f}_t(\bm{x},\bm{y}|\bm{z},\bm{z}')]\frac{\pi_t(\bm{x},\bm{y}|\bm{z},\bm{z}')q(\bm{z},\bm{z}')}{\pi_t(\bm{x},\bm{y})}\mathrm{d}\bm{z}\mathrm{d}\bm{z}'\\
=&\bm{u}^1_t(\bm{x},\bm{y})-\bm{f}_t(\bm{x},\bm{y})
\end{aligned}
\end{equation}
By definition, 
\begin{equation}
\bm{b}_t(\bm{x})=\mathbb{E}[\bm{b}_t(\bm{X}|\bm{z},\bm{z}')|\bm{X}=\bm{x}]
\end{equation}
Therefore, the necessary and sufficient condition of marginal velocity decomposition is
\begin{equation}
\mathbb{E}[\bm{b}_t(\bm{X}|\bm{z},\bm{z}')|\bm{X}=\bm{x},\bm{Y}=\bm{y}]=\mathbb{E}[\bm{b}_t(\bm{X}|\bm{z},\bm{z}')|\bm{X}=\bm{x}]
\end{equation}
Intuitively, the condition requires that, once the current state of the particle is given, the state of the interacting particle provides no additional information about the conditional mean of the self-driven velocity. It is a conditional mean-independence condition, which is weaker than conditional independence. The simplest example is what we introduced in the main text \(\bm{b}_t(\bm{x},\bm{y}|\bm{z},\bm{z}')=\bm{b}_t(\bm{x})\). It is easy to check that it satisfies the condition. Other kind of decompositions have to be designed carefully to satisfy this condition, in order to be learned by flow matching.

\section{The BB-form}
\label{app:BB-form}
For convenience, we restate the three forms below. The static form is 
\begin{equation}
\label{eq:static QOT E}
\begin{aligned}
&\text{QOT}_S(\mu_0,\mu_1) =\inf_{\gamma\in\Pi(\mu_0,\mu_1)} \int_{\mathcal{X}^4} \mathcal{A}(\bm{z},\bm{z}')\gamma(\bm{z})\gamma(\bm{z}')\mathrm{d} \bm{z}\mathrm{d} \bm{z}'.\\
\end{aligned}
\end{equation}
The dynamic form is
\begin{equation}
\label{eq:dynamic QOT E}
\begin{aligned}
&\text{QOT}_{D}(\mu_0,\mu_1) = \\&\inf\int_0^1\int\int_{\mathcal{X}^2} \mathcal{L}(t,\bm{x},\bm{y},u^1_t(\bm{x},\bm{y}|\bm{z},\bm{z}'),u^2_t(\bm{x},\bm{y}|\bm{z},\bm{z}'))\pi_t(\bm{x},\bm{y}|\bm{z},\bm{z}')\gamma(\bm{z})\gamma(\bm{z}')\mathrm{d} \bm{x}\mathrm{d} \bm{y}\mathrm{d} \bm{z}\mathrm{d} \bm{z}'\mathrm{d}t.
\end{aligned}
\end{equation}
The BB-form is
\begin{equation}
\label{eq:BB QOT E}
\begin{aligned}
&\text{QOT}_{BB}(\mu_0,\mu_1) =\inf_{\pi,\bm{u}^1,\bm{u}^2} \int_0^1\int_{\mathcal{X}^2} \mathcal{L}(t,\bm{x},\bm{y},u^1_t(\bm{x},\bm{y}),u^2_t(\bm{x},\bm{y}))\pi_t(\bm{x},\bm{y})\mathrm{d} \bm{x}\mathrm{d} \bm{y}\mathrm{d}t\\
&\partial_t\pi_t(\bm{x},\bm{y})+\nabla_{\bm{x}}\cdot(\pi_t(\bm{x},\bm{y})u^1_t(\bm{x},\bm{y}))+\nabla_{\bm{y}}\cdot(\pi_t(\bm{x},\bm{y})u^2_t(\bm{x},\bm{y}))=0.
\end{aligned}
\end{equation}
\subsection{Difficulty of the equivalence between static QOT and BB-form}
Inspired by the pioneering work of \citep{benamou2000computational}, one may expect to have an equivalence between the static form and the BB-form also for QOT. We explain why it can not. Let \(\bm{z}=(\bm{x},\bm{y})\), and \(\bm{u}_t(\bm{z})=\begin{pmatrix}
    u^1_t(\bm{x},\bm{y})\\
    u^2_t(\bm{x},\bm{y})
\end{pmatrix}\), the BB-form can be reformulated as
\begin{equation}
\begin{aligned}
&\text{QOT}_{BB}(\mu_0,\mu_1) =\inf_{\pi,\bm{u}} \int_0^1\int_{\mathcal{X}^2} \mathcal{L}(t,\bm{z},u_t(\bm{z}))\pi_t(\bm{z})\mathrm{d} \bm{z}\mathrm{d}t\\
&\text{s.t.}\ \ \ \ \ \ \ \ \ \ \ \ \partial_t\pi_t(\bm{z})+\nabla_{\bm{z}}\cdot(\pi_t(\bm{z})u_t(\bm{z}))=0.
\end{aligned}
\end{equation}
which is just the standard OT on the two-particle space \(\mathcal{X}^2\). When choosing \(\mathcal{L}(t,\bm{z},u_t(\bm{z}))=\frac{1}{2}\Vert\dot{\bm{z}}\Vert^2\), \citep{benamou2000computational} proved that it is equivalent to the static OT on the two-particle space
\begin{equation}
\label{eq:static OT E}
\begin{aligned}
&\text{OT}(\mu_0,\mu_1) =\inf_{q\in\Pi(\mu_0\otimes\mu_0,\mu_1\otimes\mu_1)} \int_{\mathcal{X}^4} \mathcal{A}(\bm{z},\bm{z}')q(\bm{z},\bm{z}')\mathrm{d} \bm{z}\mathrm{d} \bm{z}'.\\
\end{aligned}
\end{equation}
This static form is slightly different to the static QOT. In static QOT (\ref{eq:static QOT E}), we restrict the coupling \(q\) to a smaller set
\begin{equation}
\Pi^2(\mu_0,\mu_1)\triangleq\{q=\gamma\otimes\gamma\vert\gamma\in\Pi(\mu_0,\mu_1)\}\subset\Pi(\mu_0\otimes\mu_0,\mu_1\otimes\mu_1).
\end{equation}
From this perspective, the static QOT on \(\mathcal{X}\) can be viewed as kind of special OT on \(\mathcal{X}^2\) with more strict coupling constraint. Therefore, it is naturally to have only \(\text{QOT}_S(\mu_0,\mu_1)\ge\text{QOT}_{BB}(\mu_0,\mu_1)\), and it is essentially hard to improve it.

\subsection{Difficulty of the equivalence between static QOT and the constrained BB-form}
One may naturally ask whether the BB-form can also be equipped with additional constraints so that it becomes compatible with the extra coupling constraint imposed in static QOT. The answer is also no. 

By superposition principle \citep{ambrosio2008gradient,lisini2007characterization}, for any admissible probability path \(\pi_t(\bm{x},\bm{y})\) satisfying the continuity equation \(\partial_t\pi_t(\bm{x},\bm{y})+\nabla_{\bm{x}}\cdot(\pi_t(\bm{x},\bm{y})\bm{u}_t^1(\bm{x},\bm{y}))+\nabla_{\bm{y}}\cdot(\pi_t(\bm{x},\bm{y})\bm{u}_t^2(\bm{x},\bm{y}))=0\), the superposition principle states that there exists a probability measure \(\eta\) over the path space \(C([0,1],\mathcal{X}^2)\) such that for a path \((\bm{x}_t,\bm{y}_t)\), \(\dot{\bm{x}_t}=\bm{u}^1_t(\bm{x}_t,\bm{y}_t),\dot{\bm{y}_t}=\bm{u}^2_t(\bm{x}_t,\bm{y}_t)\) holds \(\eta-a.e.\), and the projection of \(\eta\) at time \(t\) is \(\pi_t\). That means, for path \(\Gamma\) and evaluation mapping \(e_t: e_t(\Gamma)=\Gamma_t\), we have \((e_t)_\#\eta=\pi_t\). Intuitively, it says that a probability path generated by a continuity equation can be viewed as a superposition of several Dirac-Dirac paths which in this paper we also called the conditional paths. 

To be compatible to the \(\gamma\otimes\gamma\) structure of static QOT, the condition 
\begin{equation}
\exists \gamma\in\Pi(\mu_0,\mu_1),\quad\text{s.t.}\quad\pi_{0,1}\triangleq(e_0,e_1)_\#\eta=\gamma\otimes\gamma
\end{equation}
naturally follows. We call it the QOT compatibility condition. With this condition, we define the QOT compatible BB-form
\begin{equation}
\label{eq:QOT compatible BB-form}
\begin{aligned}
&\text{QOT}_C(\mu_0,\mu_1) =\inf_{\pi,\bm{u}^1,\bm{u}^2} \int_0^1\int_{\mathcal{X}^2} \mathcal{L}(t,\bm{x},\bm{y},u^1_t(\bm{x},\bm{y}),u^2_t(\bm{x},\bm{y}))\pi_t(\bm{x},\bm{y})\mathrm{d} \bm{x}\mathrm{d} \bm{y}\mathrm{d}t\\
&\text{s.t.}\quad \exists\gamma\in\Pi(\mu_0,\mu_1),\quad \pi_{0,1}=\gamma\otimes\gamma \\
&\partial_t\pi_t(\bm{x},\bm{y})+\nabla_{\bm{x}}\cdot(\pi_t(\bm{x},\bm{y})u^1_t(\bm{x},\bm{y}))+\nabla_{\bm{y}}\cdot(\pi_t(\bm{x},\bm{y})u^2_t(\bm{x},\bm{y}))=0.
\end{aligned}
\end{equation}
With this condition, one can prove the inverse inequality \(\text{QOT}_S(\mu_0,\mu_1)\le\text{QOT}_{C}(\mu_0,\mu_1)\).
\begin{proof}
For any conditional path \((\bm{x}_{[0,1]},\bm{y}_{[0,1]})\), let \(\bm{z}=(\bm{x}_0,\bm{x}_1),\bm{z}'=(\bm{y}_0,\bm{y}_1)\), we have
\begin{equation}
\label{eq:def}
\int_0^1\mathcal{L}(t,\bm{x}_t,\bm{y}_t,\dot{\bm{x}}_t,\dot{\bm{y}}_t)\mathrm{d}t\ge\mathcal{A}(\bm{z},\bm{z}')    
\end{equation}
by definition. Now consider the projection of \(\eta\) on the two time points \(\{0,1\}\). Since \((e_0,e_1)_\#\eta=\pi_{0,1}\), there exists a coupling \(\gamma\in\Pi(\mu_0,\mu_1)\) such that \((e_0,e_1)_\#\eta=\gamma\otimes\gamma\). By (\ref{eq:def}), we have
\begin{equation}
\begin{aligned}
&\int\mathcal{L}(t,\bm{x},\bm{y},\bm{u}_t^1(\bm{x},\bm{y}),\bm{u}_t^2(\bm{x},\bm{y}))\pi_t(\bm{x},\bm{y})\mathrm{d}\bm{x}\mathrm{d}\bm{y}\mathrm{d}t\\
&=\int\mathcal{L}(t,\bm{x}_t,\bm{y}_t,\bm{u}_t^1(\bm{x}_t,\bm{y}_t),\bm{u}_t^2(\bm{x}_t,\bm{y}_t))\mathrm{d}\eta(\bm{x}_{[0,1]},\bm{y}_{[0,1]})\mathrm{d}t\\
&=\int\mathcal{L}(t,\bm{x}_t,\bm{y}_t,\dot{\bm{x}}_t,\dot{\bm{y}}_t)\mathrm{d}\eta(\bm{x}_{[0,1]},\bm{y}_{[0,1]})\mathrm{d}t\\
&=\int\Big(\int_0^1\mathcal{L}(t,\bm{x}_t,\bm{y}_t,\dot{\bm{x}}_t,\dot{\bm{y}}_t)\mathrm{d}t\Big)\mathrm{d}\eta(\bm{x}_{[0,1]},\bm{y}_{[0,1]})\\
&=\int\Big(\int_0^1\mathcal{L}(t,\bm{x}_t,\bm{y}_t,\dot{\bm{x}}_t,\dot{\bm{y}}_t)\mathrm{d}t\Big)\mathrm{d}(e_0,e_1)_\#\eta(\bm{x}_{[0,1]},\bm{y}_{[0,1]})\\
&=\int\Big(\int_0^1\mathcal{L}(t,\bm{x}_t,\bm{y}_t,\dot{\bm{x}}_t,\dot{\bm{y}}_t)\mathrm{d}t\Big)\gamma(\bm{z})\gamma(\bm{z}')\mathrm{d}\bm{z}\mathrm{d}\bm{z}'\\
&\ge\int\mathcal{A}(\bm{z},\bm{z}')\gamma(\bm{z})\gamma(\bm{z}')\mathrm{d}\bm{z}\mathrm{d}\bm{z}'\\
&\ge\text{QOT}_S(\mu_0,\mu_1)
\end{aligned}
\end{equation}
By taking infimum w.r.t \(\pi_t(\bm{x},\bm{y})\), we have \(\text{QOT}_C(\mu_0,\mu_1)\ge\text{QOT}_S(\mu_0,\mu_1)\).
\end{proof}
Although we can prove the inverse inequality, unfortunately, the original side cannot be guaranteed in general under this condition. That is, we no longer have \(\text{QOT}_C(\mu_0,\mu_1)\le\text{QOT}_S(\mu_0,\mu_1)\). The reason is that, we proved  the original inequality by Jensen's inequality in \ref{pf:static dynamic equivalence}. To use Jensen's inequality, we marginalize all the travelling pair conditional velocities into a marginal velocity pair
\begin{equation}\bm{u}^i_t(\bm{x},\bm{y})=\int\bm{u}^i_t(\bm{x},\bm{y}|\bm{z},\bm{z}')\frac{\pi_t(\bm{x},\bm{y}|\bm{z},\bm{z}')q(\bm{z},\bm{z}')}{\pi_t(\bm{x},\bm{y})}\mathrm{d}\bm{z}\mathrm{d}\bm{z}'
\end{equation}
where \(q=\gamma\otimes\gamma\). The point here is that although \(q\) has the tensor structure, the probability flow \(\pi_t(\bm{x},\bm{y})\) generated by the marginal velocity pair does not naturally satisfy the QOT compatible condition. Therefore, we can only prove  \(\text{QOT}_S(\mu_0,\mu_1)\ge\text{QOT}_{BB}(\mu_0,\mu_1)\), but can never prove  \(\text{QOT}_S(\mu_0,\mu_1)\ge\text{QOT}_C(\mu_0,\mu_1)\). The optimal result we can obtain is therefore \(\text{QOT}_C(\mu_0,\mu_1)\ge\text{QOT}_S(\mu_0,\mu_1)\ge\text{QOT}_{BB}(\mu_0,\mu_1)\).

\subsection{The intuitive relation between the three forms}
In the main text, we define the static QOT, the dynamic QOT and the BB-form. The intuition of the BB-form is to minimize the population level action induced by the Lagrangian during the transport, which is what, in practice, one may actually want to minimize. Since \(\text{QOT}_D(\mu_0,\mu_1)=\text{QOT}_S(\mu_0,\mu_1)\ge\text{QOT}_{BB}(\mu_0,\mu_1)\), minimizing the static QOT action is equivalent to minimizing a tractable upper bound of the BB-form. Intuitively, it also implicitly tries to minimize the BB-form action. But, the static form contains no dynamic information. To model the continuous time dynamics, we have to extend it to the dynamic form, which is equivalent to the static form. In the main text, we introduce flow matching to solve the dynamic QOT flow by obtaining a Markovian projection. By Jensen's inequality, the BB-form action induced by this marginal flow is less than the corresponding static action, but greater than the infimum BB-form action by definition. Therefore, the learned flow can be viewed as a finer upper bound approximation of the true BB-form action.


\end{document}

%% file: math_commands.tex
\usepackage{amsmath,amsfonts,bm}

\def\eqref#1{equation~\ref{#1}}

\def\1{\bm{1}}

\DeclareMathAlphabet{\mathsfit}{\encodingdefault}{\sfdefault}{m}{sl}
\SetMathAlphabet{\mathsfit}{bold}{\encodingdefault}{\sfdefault}{bx}{n}

